\documentclass[11pt]{article} 
\usepackage{iclr2027_conference_cleaned,times}

\usepackage{hyperref}
\usepackage{url}

\usepackage{ISG_style_base}
\usepackage{macros}
\usepackage[utf8]{inputenc}
\usepackage[T1]{fontenc}
\usepackage{hyperref}
\usepackage{url}
\usepackage{booktabs}
\usepackage{array}
\usepackage{siunitx}
\usepackage{amsfonts}
\usepackage{nicefrac}
\usepackage{microtype}
\usepackage{xcolor}
\usepackage{graphicx}
\usepackage{subcaption}
\usepackage{float}
\usepackage{enumitem}

\usepackage{svg}

\usepackage{tabularx}
\usepackage{caption}
\usepackage{multirow}
\usepackage{makecell}
\usepackage{graphicx}
\usepackage{hyperref}

\usepackage{amssymb}
\usepackage{mathtools}
\usepackage{amsthm}

\newtheorem{definition}{Definition}

\usepackage{tikz}
\usetikzlibrary{arrows.meta,positioning,fit,backgrounds}
\usepackage{pifont} 
\usepackage{booktabs}
\usepackage{threeparttable}
\usepackage{multirow}
\usepackage{comment}

\usepackage{titletoc}

\usepackage{sidecap} 
\definecolor{lightgray}{gray}{0.90} 
\definecolor{balancedred}{HTML}{E04848}

\newcommand{\hatz}{\smash{\hat{z}}}
\newcommand{\hatZ}{\smash{\hat{Z}}}
\newcommand{\hatZhatI}{\smash{\hat{Z}_{\hat{\mcI}}}}
\newcommand{\supk}{^{(k)}}

\providecommand{\rank}{}\renewcommand{\rank}{{\rm rank}}

\usepackage{thmtools}
\usepackage{thm-restate}

\hypersetup{%
    colorlinks,
    linkcolor={blue!75!black},
    citecolor={green!50!black},
    urlcolor={violet},
    linktocpage,
    plainpages=false}

\usepackage[capitalize, nameinlink, sort&compress]{cleveref}
\crefname{equation}{eqn.}{eqns.}
\newcommand{\email}[1]{\href{mailto:#1}{#1}}

\usepackage{crossreftools}
 \title{Block Disentanglement in CRL: Bridging Identifiability and Visual State Estimation}

\author{%
  Emre Acartürk\thanks{Corresponding author \email{acarte@rpi.edu}. Work partially done while at Google DeepMind.} \\
  Rensselaer Polytechnic Institute \\
  Troy, NY%
  , USA \\
  \And
  Pranamya Kulkarni\thanks{Work done while at Google DeepMind.} \\
  The University of Texas at Austin \\
  Austin, TX%
  , USA \\
  \And
  Puranjay Datta\footnotemark[2] \\
  Swiss AI, EPFL \\
  Lausanne, Switzerland \\
  \AND
  Karthikeyan Shanmugam \\
  Google DeepMind \\
  Bengaluru%
  , India \\
  \And
  Burak Varıcı \\
  University of Delaware \\
  Newark, DE%
  , USA \\
  \And
  Ali Tajer \\
  Rensselaer Polytechnic Institute \\
  Troy, NY%
  , USA
}

\begin{document}

\maketitle

\begin{abstract}
    Causal representation learning (CRL) is the process of recovering causally-related latent variables from high-dimensional observations. As a label-free inference method, CRL is particularly attractive for applications where data labels are unavailable or impractical to obtain. While there has been significant progress in understanding the identifiability guarantees of CRL, such guarantees often hold under highly stylized assumptions, which temper the direct application to real-world problems. This paper has a two-fold objective for \emph{interventional} CRL. First, it establishes identifiability guarantees for substantially weaker interventional assumptions, resulting in \emph{block disentanglement} of the causal variables, where the block structure depends on the realistically available intervention mechanisms. Secondly, the block disentanglement framework is used for embodied visual state estimation, in which the objective is to recover the latent physical variables of a robotic system directly from visual data (images and videos) \emph{without labeled data}. These two components are critically complementary. The block disentanglement theory delineates identifiability guarantees under weakened assumptions, and the application demonstrates that the resulting objective remains effective in a controlled embodied setting despite further assumption violations, providing a theory-to-practice bridge needed to translate the promise of \textbf{label-free} CRL into practical problems.
\end{abstract}

\section{Introduction}\label{sec:intro}

Physical systems are generally described by a set of state variables needed to characterize the systems' condition, in conjunction with the physical laws that delineate how these variables causally influence one another. These variables and causal mechanisms determine how the system evolves and responds to different forms of external stimulation or interventions. In many applications, however, these variables and mechanisms are not observed directly. Instead, the system is monitored or perceived through high-dimensional sensory systems (e.g., images and videos). This creates a fundamental need to recover hidden variables and causal mechanisms from high-dimensional observable data. Such recovery is critical for meaningful reasoning about the system (e.g., how the system responds to hypothetical stimuli) and for transferring models across environments with distributional shifts. 

The recovered latent models and mechanisms must be provably correct, generating system representations that specify the true underlying system state variables. Supervised learning offers a direct solution to this, but it is label-intensive and requires large-scale per-sample labels, which are often unavailable due to expense and complexity of curating annotated and labeled data. To address this challenge, \emph{identifiable representation learning} circumvents dependence on labeled data while providing provably correct recovery without per-sample supervision.

Causal representation learning (CRL) has emerged as a form of identifiable representation learning apt for physical systems. Specifically, it is formed at the confluence of three primary research directions: (i)~disentangling generative factors embedded in high-dimensional data, (ii)~causal inference, and (iii)~representation learning. Because a system's variables are not necessarily statistically independent (they are coupled through complex interactions), CRL aims to jointly disentangle them and learn the causal mechanisms that connect them \citep{scholkopf2021toward}. Such causal mechanisms represent stable conditional laws relating causes to effects. Extensive studies in recent years have established the identifiability limits of CRL, establishing when latent causal factors are recoverable; for representative studies see \citep{varici2025score,ahuja2023interventional,yao2025unifying,ng2025causal}. In parallel, there has been an active interest in translating these CRL guarantees into applications in genomics \citep{tejada2025causal}, biomedical \citep{sun2025causal}, and dynamical systems \citep{yao2024marrying}.

A particularly important application with growing interest in which CRL offers a natural solution is the analysis of video observations of physical systems (e.g., robots), where the objective is recovering the latent state variables from image sequences without requiring frame-by-frame labels \citep{lippe2022icitris,lippe2023biscuit,chen2025causalverse}. A video provides a high-dimensional record of how the physical system evolves and responds to external inputs, while the true variables (e.g., a robot's positions, orientations, and velocity) are hidden and embedded in the video record. In such applications, while videos are readily and inexpensively available, annotating each frame is prohibitively laborious and costly. Therefore, this is a setting in which CRL can offer significant potential by identifying and recovering latent variables and their causal dynamics directly from unlabeled video.

\paragraph{Theory-to-practice gap.}  Achieving these objectives in videos, however, necessitates operationalizing CRL in regimes beyond what the current CRL theory covers. Specifically, while identifiability results in CRL often provide strong guarantees, they also rely on \emph{collectively} strong assumptions such as access to counterfactual observations; atomic (single-node) intervention targets; stochastic hard or $do$ interventions; independent samples; exhaustive interventional diversity; and an exactly known latent dimensionality. While each of these assumptions can be justifiably valid in certain settings, in real-world settings, such as systems evolving continuously over time, they might not all be satisfied \emph{simultaneously}. Challenges that violate this set of assumptions include sequential access to information and partial or imperfect interventions with unknown targets. As a result, the gap between identifiable CRL theory and label-free state recovery from video is fundamentally a gap in the joint validity of the assumptions required by existing guarantees. For more discussion on the existing CRL guarantees and their attendant assumptions, we refer to \Cref{app:related-work}.

To bridge the gap between current CRL theory and practical visual state estimation, we take two complementary steps, one theoretical and one empirical. Both are critically needed, as they address different but complementary obstacles. 
\begin{itemize}[topsep=0in, itemsep=0in, leftmargin=0.2in]
    \item \textbf{Theoretical bridge:} The theoretical step weakens the intervention assumptions required for identifiability. Specifically, current CRL theory focuses on disentangling \emph{individual} variables, which in turn often necessitates intervention structures that isolate one variable at a time. To address this, we consider \emph{block disentanglement}, and establish conditions under which an intervened \emph{block} of variables remains identifiable even when individual variables cannot be disentangled separately.
    \item \textbf{Empirical bridge:} On the empirical side, we further weaken the assumptions and investigate whether the resulting block disentanglement principle remains effective when applied to embodied visual data, where temporal dependence, imperfect intervention effects, and uncertainty in the latent representation arise naturally. For this purpose, robotics is a natural target and has long motivated CRL \citep{scholkopf2021toward}: an effective robotic world model must capture the causal factors governing the robot's state and its interaction with the environment. In our experiments, we show that even under extensive assumption violations, the proposed method retains significant information about the ``active block'' of variables in a simulated robot pose estimation task.
\end{itemize}

Collectively, our results demonstrate that interventional CRL remains highly effective beyond stringent regimes defined by stylized assumptions. We achieve this by relaxing the identifiability target from individual causal variables to an intervention-defined block under weaker conditions; the resulting block disentanglement principle remains effective even when experimental data further depart from the theoretical requirements. Accordingly, our results point to a step-by-step transition in which progressively weaker theoretical \emph{and} empirical conditions still yield useful representation learning objectives. \looseness=-1

\section{Preliminaries}\label{sec:background}

\paragraph{Latent causal model.} Consider a latent causal space consisting of $n$ causal random variables $Z \triangleq  [Z_1,\dots,Z_n]^{\top}$. An \emph{unknown} transformation $f: \R^n \to \R^d$ maps $Z$ to high-dimensional observed random variables $X \triangleq [X_1,\dots,X_d]^{\top}$ according to $X = f(Z)$, where $d \geq n$. As is standard in CRL literature, we assume that $f$ is injective, and a smooth diffeomorphism onto its image. Denote the probability distributions of $Z$ and $X$ by $p$ and $ p_X$, respectively. Following causal Bayesian network formalism~\citep{pearl2009causality}, the distribution $p$ of $Z$ factorizes according to a directed acyclic graph (DAG) $\mcG$ on $n$ nodes, where node $i$ represents $Z_i$:
\begin{equation}
    p(z) = \prod_{i=1}^{n} p_i(z_i \mid z_{\Pa(i)}) \ ,
\end{equation}
where $\Pa(i)$ denotes the parents of node $i$, and conditional distribution $p_i(z_i \mid z_{\Pa(i)})$ encodes its \emph{causal mechanism}. We use $\Paplus(i)$ to denote $\Pa(i) \cup \{i\}$. We assume $p$ has full support on $\R^n$ and that all causal mechanisms $\{p_i(z_i \mid z_{\Pa(i)}) : i \in[n]\}$ are continuously differentiable with respect to all latent variables.

\paragraph{Interventions.} 
Interventions are a defining ingredient of causal modeling and play a central role in CRL. We begin by describing the most restrictive intervention model and then progressively relax it. The strongest form is an atomic $do$-intervention on a latent variable $Z_i$, which deterministically sets that variable to a prescribed value, $Z_i=z_i$, while removing its dependence on its usual causal mechanism. In this paper, intervention assumptions are weakened in two directions:
\begin{itemize}[topsep=0in, itemsep=0in, leftmargin=0.2in]
    \item {\bf Unknown interventions:}
    We assume access to $K$ intervention models, each targeting an \emph{unknown} subset ${\cal I}_k\subseteq[n]$, $k\in[K]$, of the latent variables.
    \item \textbf{Stochastic interventions:} 
    We assume access only to stochastic interventions, which modify probability distributions rather than deterministically fixing variable values. 
    Specifically, an interventional environment that intervenes on $Z_i$ changes its causal mechanism $p_i(z_i \mid z_{\Pa(i)})$ to another distinct (conditional) distribution, while leaving the mechanisms of all other variables intact. 
    We consider both \emph{stochastic hard and stochastic soft interventions}: a soft intervention on $Z_i$ replaces the observational mechanism with a different $q_i(z_i \mid z_{\Pa(i)})$, whereas a hard intervention removes parental dependence and replaces the observational mechanism of $Z_i$ with $q_i(z_i)$. Thus, applying intervention $k$ modifies the causal mechanisms of nodes in ${\cal I}_k$ and creates an \emph{interventional environment} that induces the following distribution over latent variables:
    \begin{alignat}{6}
        \label{eq:qk_hard} & \mbox{hard intervention:} \quad & q\supk(z) &= \prod_{j \notin \mathcal{I}_k} p_j(z_j \mid z_{\Pa(j)}) \cdot \prod_{i \in \mathcal{I}_k} q_i\supk(z_i) \ , \\
        \label{eq:qk_soft} & \mbox{soft intervention:} \quad & q\supk(z) &= \prod_{j \notin \mathcal{I}_k} p_j(z_j \mid z_{\Pa(j)}) \cdot \prod_{i \in \mathcal{I}_k} q_i\supk(z_i \mid z_{\Pa(i)})  \ .
    \end{alignat}
\end{itemize}

\paragraph{CRL block disentanglement.}
The main objective in CRL is to use observational data $X$ to recover the latent variables $Z$ and the latent causal graph $\mcG$. Identifiability in CRL pertains to establishing conditions under which a unique equivalence class of models explains the observed data.
Existing CRL identifiability results mainly focus on \emph{elementwise} disentanglement of $Z$, that is, recovering the variables $\{Z_i:i\in[n]\}$ individually~\citep{squires2023linear,buchholz2023learning,zhang2023identifiability,vonkugelgen2023twohard,varici2025score,chen2024identifying}. However, these results generally require strong assumptions. In contrast, a more relaxed identifiability target, \emph{block disentanglement}, has received relatively less attention \citep{ahuja2023interventional,liang2023causal,yao2025unifying}.
In our theoretical analysis, we aim to prove block disentanglement under weaker conditions on data availability.
For completeness, we formally define both types of disentanglement as follows.
\begin{definition}[Disentanglement classes]\label{defn:block-id}
    Let $\hatZ$ be an estimate of the latent variables $Z$. 
    \begin{enumerate}[label=(\roman*), topsep=0in, itemsep=0in, leftmargin=0.2in]
        \item We say that $\hatZ$ achieves \textbf{elementwise disentanglement} if there exist a permutation $\pi$ of $[n]$ and diffeomorphisms $\phi_i : \R \to \R$ such that $\hatZ_{\pi_i} = \phi_i(Z_i)$ for all $i \in [n]$.
        \item We say that $\hatZ$ achieves \textbf{block disentanglement} for $\mcI \subseteq [n]$ if there exist a subset $\hat{\mcI}\subseteq [n]$ with size $|\mcI|$ and a diffeomorphism $\phi_\mcI : \R^{|\mcI|} \to \R^{|\mcI|}$ such that $\hatZhatI = \phi_\mcI(Z_\mcI)$.
    \end{enumerate}
\end{definition}
Hence, elementwise disentanglement states that all latent variables are recovered up to a permutation and elementwise transformations, whereas block disentanglement states that the estimate $\hatZhatI$ of block $\mcI$ is an invertible function of the true block $Z_\mcI$ without leakage from other variables or noise.

\paragraph{Weakened assumptions.} In this paper, we consider a substantially more relaxed setting than those typically studied in the existing literature.
To compare assumptions across different studies, we first organize them into six broad categories, which provides a common reference point.
\begin{itemize}[leftmargin=0.2in,topsep=0em,itemsep=1pt]
    \item \emph{(C1) Atomic targets.} Each intervention alters the mechanism of exactly one latent coordinate.
    \item \emph{(C2) $do$ or hard interventions.} These interventions break the dependence of the intervened variables on their parents.
    \item \emph{(C3) Pairing.} Some studies adopt more than one intervention environment per node. In such cases, the environments associated with $Z_i$ come in matched pairs $(q^i, \bar q^i)$.
    \item \emph{(C4) Interventional discrepancy.} When a node undergoes a pair of interventions, the mechanisms $q_{i}(z_i)$ and $\bar{q}_{i}(z_i)$ are sufficiently distinct: $\nabla_{z_i} \log \big(q_i(z_i)/\bar{q}_i(z_i)\big)$ is nonzero almost everywhere.
    \item \emph{(C5) Sample independence.} Samples within each environment are i.i.d.
    \item \emph{(C6) Latent dimension.} The latent dimension $n$ is assumed to be known.
\end{itemize}

Existing results obtain identifiability under different tradeoffs between intervention structure and assumptions on the latent model or observation map. In the general nonlinear setting closest to ours, \citep{vonkugelgen2023twohard,varici2025score} establish elementwise disentanglement under (C1)--(C6): interventions are atomic and hard, with paired interventions satisfying the node-wise discrepancy condition. Other works in the literature operate under additional structural assumptions, such as linear transformations, linear causal models, or known causal graphs \citep{squires2023linear,buchholz2023learning,zhang2023identifiability,ahuja2023interventional,liang2023causal,yao2025unifying}.

In this paper, we relax assumptions (C1)--(C6) in two stages. The first stage forms the \emph{theoretical bridge}: we replace assumptions (C1)--(C4) with weaker ones and establish identifiability results in the form of block-disentanglement guarantees. Crucially, these replacement assumptions impose collectively less stringent conditions on the power, coverage, and diversity of the interventions. The second stage forms the \emph{empirical bridge}: we move beyond the theory by considering settings that additionally violate (C5) and (C6). Together, these two stages allow us to provide empirical evidence for disentanglement from visual data under substantially weaker conditions than those considered in the existing literature.

\section{Theoretical Bridge: Weakening Intervention Diversity}\label{sec:theory}

Next, we develop the theoretical framework for establishing \emph{block disentanglement} under the weakened assumptions by removing (C1)--(C4). We begin by introducing the definitions needed to formalize the problem, methodology, and analysis used throughout the remainder of the paper. Following the score-based interventional CRL framework of \citep{varici2025score}, we denote the score functions associated with PDFs $p$ and $p_X$ by $s(z) \triangleq \nabla_z \log p(z)$ and $s_X(x) \triangleq \nabla_x \log p_X(x)$, respectively. Similarly, the score of interventional latent PDFs $q\supk$ and its observed counterpart are denoted by $s\supk$ and $s_X\supk$, respectively. Accordingly, we define the score difference function under intervention ${\cal I}_k$ with respect to the base observational environment as
\begin{equation}\label{eq:score-diff-z}
    d\supk(z)\triangleq s\supk(z) - s(z) = \sum_{i \in \mcI_k} \nabla_z \log {\textstyle\dfrac{q_i\supk(z_i \mid z_{\Pa(i)})}{p_i(z_i \mid z_{\Pa(i)})}} \ .
\end{equation}
Similarly, define the observed score difference function $d_X\supk(x)\triangleq s_X\supk(x)-s_X(x)$. Since the ratios in \eqref{eq:score-diff-z} depend only on $z_{\Paplus(i)}$, for the gradient we have
\begin{equation}\label{eq:dzk-is-sparse}
    [d\supk(z)]_j = 0\ , \qquad \forall j \not\in \Paplus(\mcI_k) \ .
\end{equation}
Furthermore, the score differences in the latent space and the observed space are related by the Jacobian of $f$ according to~\citep[Lemma 8]{varici2025score}:
\begin{equation}\label{eq:d-transformation}
    d_X\supk(x) = \left[J_f(z)^\dag\right]^\top \cdot d\supk(z) \ . 
\end{equation}
Next, we define the \textbf{active block} as the union of the intervention targets $\mcI \triangleq \bigcup_{k \in [K]} \mcI_k$, with size $m \triangleq |\mcI|$, and denote the complementary \textbf{nuisance} coordinates by $\mcI^c \triangleq [n] \setminus \mcI$. We emphasize that the sets $\{{\cal I}_k:k\in[K]\}$, and subsequently $\cal I$ and $\mcI^c$ are \emph{unknown}. Collecting the latent score differences for each environment $d\supk(z)$, we define the latent score difference matrix as
\begin{equation}\label{eq:DZ}
    \bD(z) \;\triangleq\; \big[\, d^{(1)}(z) \,,\,\dots\,,\, d^{(K)}(z) \,\big] \;\in\; \R^{n \times K} \ .
\end{equation}
Similarly, we denote the score difference matrix in observation domain by $\bD_X(x)$. Next, we define the key conditions to formalize sufficient conditions that ensure block disentanglement.
\begin{definition}[Ancestral closure of the active block]\label{defn:mcI-anc-closed}
The active block $\mcI$ is said to be \emph{ancestrally closed} in $\mcG$ if $\Pa(i) \subseteq \mcI$, $\forall i \in \mcI$. Equivalently, $\Paplus(\mcI) = \mcI$.
\end{definition}

\begin{definition}[Collective rank diversity]\label{defn:dz-full-rank}
The collection of interventions is said to satisfy \emph{collective rank diversity} over the active block $\mcI$ if the associated score differences jointly span the active subspace, i.e., for almost every $z$, we have $\rank\big(\bD(z)\big) = |\mcI|$.
\end{definition}
\Cref{defn:mcI-anc-closed,defn:dz-full-rank} replace the canonical interventional assumptions (C1)--(C4).
Rather than constraining each intervention individually to be atomic, hard, paired, or sufficiently distinct, these definitions impose only aggregate structural conditions on the collection of interventional environments. \Cref{defn:mcI-anc-closed} ensures that intervention-induced score differences remain confined to the active block $\mcI$, while \Cref{defn:dz-full-rank} requires that these score differences collectively span that block.

Specifically, \Cref{defn:dz-full-rank} is the block-level counterpart of the interventional discrepancy condition (C4). This condition is satisfied by $K \geq |\mcI|$ multi-node interventional environments as long as their score difference directions span the active subspace almost everywhere. Furthermore, this condition is strictly weaker than the canonical full-coverage intervention structure in (C1)--(C4). In particular, under (C1)--(C4), taking score differences between the paired hard-intervention environments produces one nonzero direction per intervened coordinate, and hence satisfies \Cref{defn:dz-full-rank}. The converse does not hold: collective rank diversity permits unpaired, multi-node, and soft interventions, provided their score difference directions jointly span the active block. Hence, the two definitions replace intervention-by-intervention restrictions with collective conditions on support and span, substantially relaxing the intervention structure while retaining a nontrivial identifiability guarantee. Leveraging these definitions, we provide sufficient conditions for the block-identifiability result formalized next.

\begin{restatable}[Block identifiability under collective rank diversity, (cf.~\Cref{app:proof} for proof)]{theorem}{thmBlockID} \label{thm:block_id}
    Let $(h, g)$ be a smooth encoder--decoder pair that perfectly reconstructs $X$, i.e., $g(h(X)) = X$. If the estimated matrix 
    \begin{equation}
        \bD_{\hatZ}(\hatz) = \left[J_h(x)^\dagger\right]^\top \bD_X(x) 
    \end{equation}
    is supported on a row set $\hat{\mcI}\subseteq[n]$ with
    $|\hat{\mcI}|=|{\mcI}|$, then under the ancestral-closure and collective-rank-diversity conditions of
    \Cref{defn:mcI-anc-closed,defn:dz-full-rank}, together with assumptions (C5)--(C6), the estimate
    $\hatZ=h(X)$ block-disentangles the true active block $Z_{\mcI}$.
\end{restatable}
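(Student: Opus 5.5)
We work with the composite map $\phi \triangleq h \circ f : \R^n \to \R^n$, so that $\hatZ = \phi(Z)$. Because $g(h(X)) = X$ and $X = f(Z)$ with $f$ a diffeomorphism onto its image, $h$ restricted to the image of $f$ is injective and smooth, and $g \circ h$ is the identity there. Hence $\phi$ is a diffeomorphism of $\R^n$; here we use (C6), which gives $\hat Z \in \R^n$. The score-difference transformation rule \eqref{eq:d-transformation}, applied once to $f$ and once to $h$ on the data manifold, gives the chain rule for score differences under the diffeomorphism $\phi$:
\begin{equation*}
    \bD(z) = J_\phi(z)^\top \, \bD_{\hatZ}(\hatz), \qquad \hatz = \phi(z).
\end{equation*}
Equivalently, $\bD_{\hatZ}(\hatz) = J_{\phi^{-1}}(\hatz)^\top \bD(z)$. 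By (C5), the environment distributions, and hence these score functions, are well defined.

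\textbf{Step 1: support of the true matrix.} By \eqref{eq:dzk-is-sparse} and ancestral closure (\Cref{defn:mcI-anc-closed}), each $d\supk(z)$ vanishes outside $\Paplus(\mcI_k) \subseteq \Paplus(\mcI) = \mcI$. So $\bD(z)$ has its rows supported on $\mcI$. By collective rank diversity (\Cref{defn:dz-full-rank}), its restriction $\bD_{\mcI}(z) \in \R^{m\times K}$ has rank $m$ for almost every $z$. By hypothesis, $\bD_{\hatZ}$ is supported on rows $\hat\mcI$, with $|\hat\mcI| = m$.

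\textbf{Step 2: block-zero structure of the Jacobian.} Write $\bD(z) = J_\phi(z)^\top \bD_{\hatZ}(\hatz)$ and keep only the nonzero rows of $\bD_{\hatZ}$. This gives
\begin{equation*}
    \bD(z) = \big[J_\phi(z)_{\hat\mcI,:}\big]^\top \, [\bD_{\hatZ}(\hatz)]_{\hat\mcI,:}.
\end{equation*}
Rows $j \in \mcI^c$ of the left side vanish, so $\big[J_\phi(z)_{\hat\mcI, j}\big]^\top [\bD_{\hatZ}]_{\hat\mcI,:} = 0$ for every $j \in \mcI^c$. The rank of $[\bD_{\hatZ}]_{\hat\mcI,:}$ is at least $\rank \bD(z) = m$; since it has only $m$ rows, it has full row rank $m$ almost everywhere. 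A vector annihilating a full-row-rank matrix from the left must be zero, so $\partial \hatZ_{\hat\mcI} / \partial z_j = 0$ for all $j \in \mcI^c$, almost everywhere. Continuity of $J_\phi$ (recall $\phi$ is smooth) extends this to everywhere.

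\textbf{Step 3: from the Jacobian structure to a block diffeomorphism.} Since $\R^n$ is connected and the partial derivatives in the nuisance directions vanish identically, $\hatZ_{\hat\mcI} = \psi(Z_\mcI)$ for some smooth $\psi : \R^m \to \R^m$. This uses the product structure $\R^m \times \R^{n-m}$, on which each nuisance slice is connected.

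It remains to show that $\psi$ is a diffeomorphism. Invertibility of $J_\phi$ together with the zero block $J_\phi(z)_{\hat\mcI, \mcI^c} = 0$ forces $J_\psi = J_\phi(z)_{\hat\mcI,\mcI}$ to be nonsingular, so $\psi$ is a local diffeomorphism. For injectivity, suppose $\psi(a) = \psi(b)$ and pick any nuisance value $c$. Then $\phi(a,c)$ and $\phi(b,c)$ share their $\hat\mcI$ coordinates, and we must rule out $a \neq b$. The approach we commit to is to apply the same argument to $\phi^{-1}$. Using $\bD_{\hatZ} = J_{\phi^{-1}}^\top \bD$ and the supports of both matrices, we get $[J_{\phi^{-1}}]_{\mcI, \hat\mcI^c} = 0$. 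Then $Z_\mcI = \tilde\psi(\hatZ_{\hat\mcI})$ with $\tilde\psi \circ \psi = \mathrm{id}$, so $\psi$ has a smooth inverse; surjectivity follows in the same way. This yields $\hatZ_{\hat\mcI} = \psi(Z_\mcI)$ with $\psi$ a diffeomorphism, which is block disentanglement in the sense of \Cref{defn:block-id}.

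\textbf{Main obstacle.} The main obstacle is Step 2's rank transfer, namely ensuring $[\bD_{\hatZ}]_{\hat\mcI,:}$ has full row rank from a rank statement about $\bD$. This rests on $J_\phi$ being invertible, so that rank is preserved, and on the almost-everywhere qualifiers being carried correctly through $\phi$. The inverse-direction argument in Step 3 also needs care: the row-support sets must be matched on both sides so that the zero block lands in $[J_{\phi^{-1}}]_{\mcI,\hat\mcI^c}$ and not elsewhere.
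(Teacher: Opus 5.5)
Your proposal is correct and follows essentially the same route as the paper. Both use the composite diffeomorphism $h\circ f$, the row support of $\bD$ on $\mcI$ from ancestral closure, and full row rank to force zero off-diagonal Jacobian blocks, which then give a block diffeomorphism. The only difference is in how you get the forward block $[J_\phi]_{\hat{\mcI},\mcI^c}=0$: you derive it directly by transferring the rank to $[\bD_{\hat{Z}}]_{\hat{\mcI},:}$. The paper derives only the inverse block $[J_\phi^{-1}]_{\mcI,\hat{\mcI}^c}=0$ (your Step 3 computation) and obtains the forward block from it by block-triangular inversion, so either of your two computations alone would suffice.
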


\paragraph{Comparison to existing block disentanglement results.} Existing block-identifiability results rely on substantially different structural information. \citet{vonkugelgen2021self} established the concept under counterfactual CRL setting. In interventional CRL, \citep{ahuja2023interventional} obtains a block-affine identification result by exploiting intervention-induced support independence, and \citep{liang2023causal} assumes that the causal graph and intervention targets are known and require multiple perfect interventions on the same target block satisfying a block-interventional discrepancy condition. Most closely related setting is
\kern8em   
\citep{yao2025unifying} which provides a general invariance-based framework for block identifiability: latent blocks that \emph{remain invariant} can be identified. In contrast, our block disentanglement target $\mcI$ is precisely the intervention-affected (i.e., \emph{varying}) block.

\section{Empirical Bridge: Block Disentanglement to Embodied Video}\label{sec:regimes}

The theoretical bridge in \Cref{sec:theory} already relaxes assumptions (C1)--(C4), while retaining (C5)--(C6), and establishes block-disentanglement guarantees under this weaker setting. The next question is whether the resulting CRL principles remain effective when we move further toward practical visual state estimation and additionally relax (C5)--(C6). This constitutes our \emph{empirical bridge}.

To evaluate this progression, we consider two complementary empirical settings. The first, which we call the \emph{assumption-matched benchmark}, satisfies the full set of assumptions (C1)--(C6). It provides a reference point for the performance attainable when the canonical CRL conditions hold, and allows us to assess the overall effectiveness of the methodology in its most favorable identifiable setting. The second, which we call the \emph{embodied setting}, additionally dispenses with (C5)--(C6), on top of the relaxation of (C1)--(C4) already established by the block-disentanglement theory. Comparing these two settings therefore serves two complementary purposes: benchmarking performance under the canonical assumptions, and assessing how far the combined theoretical and empirical bridges can carry CRL toward visual state estimation under substantially weaker conditions.

\subsection{Setup and Data Generation}

We consider estimating a robot's pose and joint configuration from visual observations as an important instance of visual state estimation. This capability is particularly relevant when a robot's internal state is unavailable, unreliable, or cannot be directly communicated to other agents. Accordingly, visual recovery of articulated robot state has received growing attention in computer vision and robotics. We refer to \Cref{app:related-work} for a thorough review of the related literature.

For our study, the latent physical state of interest is the robot arm's joint configuration. The rendered images depend on these joint angles, as well as other intrinsic and extrinsic factors such as link geometry, camera pose, lighting, and background. To isolate the state estimation problem, we vary only the joint angles while holding the remaining generative factors fixed. We generate the visual observations using the Panda-Gym platform~\citep{gallouedec2021panda}, which provides controlled actuation of the robotic arm and corresponding rendered images. To preserve observability of the full joint configuration and support the injective rendering assumption on $f$, we record each pose from two distinct camera viewpoints.

As briefly mentioned earlier, in our analysis we consider the following empirical regimes. 

\begin{itemize}[leftmargin=*, topsep=0in, itemsep=0in]
    \item \textbf{Assumption-matched benchmark.} This setting satisfies (C1)--(C6). Interventions are atomic (C1), replacing one joint's sampling distribution with a \emph{hard} intervention while leaving the others untouched (C2); environments come in matched pairs $(q^i, \bar q^i)$ per joint as (C3) requires, and are designed to satisfy (C4); joint angles are sampled i.i.d.\ and rendered independently, so there is no temporal structure (C5); and the true number of varying joints is known (C6). This regime validates whether disentanglement and variable estimation are numerically stable on rendered $128\times128$ images, and whether the canonical objective recovers the intended coordinates at this visual scale.
    \item \textbf{Embodied setting.} This setting moves beyond the conditions of \Cref{thm:block_id} and additionally violates (C5)--(C6). Specifically, it replaces i.i.d.\ rendering with continuous videos. A proportional controller drives the arm from a sampled start to a sampled goal configuration, and it retains every frame along the way. Episodes terminate and reset, limiting dependence across trajectories but not within one. Each environment $k$ corresponds to a distinct goal-sampling distribution that \emph{prioritizes} displacement of a designated target joint while letting the remaining joints move as the dynamics dictate. Importantly, no observational dataset is distinguished. These departures from the theoretical setting arise through three distinct mechanisms: temporal dependence in the observed data, imperfect and coupled intervention effects, and mismatch between the learned bottleneck dimension and the true latent dimension. We summarize each violation below and indicate the corresponding theoretical assumption that it relaxes.
    \begin{itemize}[leftmargin=0.2in,topsep=0em,itemsep=1pt]
        \item \emph{V1: Temporal dependence} (violates C5). Consecutive frames differ by one controller step. Therefore, samples within a video are strongly correlated.
        We in fact leverage the controlled nature of the videos to compute score differences without an observational environment by treating the two halves of each video as observational and interventional data.
        \item \emph{V2: Off-target effects} (violates C1--C4 and \Cref{defn:mcI-anc-closed,defn:dz-full-rank}). The control task itself induces small but nonzero motion in off-target joints, so $\bD$ carries small nonzero entries on rows in $\mcI^c$ instead of exact zeros. The active/nuisance partition becomes approximate. Furthermore, environments are unpaired, and interventions are not hard.
        \item \emph{V3: Overcomplete bottleneck} (violates C6). We train autoencoders with $d_b \in \{8, 12, 16, 32\}$ dimensions against $6$ true latent factors. Increasing $d_b$ increases the autoencoder's reconstruction capacity. However, without a disentanglement signal, nothing forces the active factor information to concentrate. This violation makes the active/nuisance contrast a valid metric even when $\mathcal I=[n]$.
    \end{itemize}
\end{itemize}

\subsection{Methodology}\label{sec:methodology}

In this section, we describe how we implement a CRL algorithm that is motivated by the theoretical identifiability results. In practice, we train an encoder--decoder pair $(h, g)$ where $h : \R^d \to \R^{d_b}$ encodes the visual data into $d_b$ entries. Our approach achieves disentanglement by estimating latent score functions. This can be implemented efficiently by leveraging a fundamental connection between latent score estimates and the score estimates of the observational data in~\eqref{eq:d-transformation}: by estimating the score differences $d_X\supk$ once on the visual domain $X$, one can compute the estimated score difference $d_{\hat Z}\supk$ via $d_{\hat Z}^{(k)}(\hatZ) = \left[J_g(\hatZ)\right]\smash{^\top} \cdot d_X\supk(x)$.

\paragraph{Assumption-matched benchmark.} In the assumption-matched benchmark introduced above, where assumptions (C1)--(C6) are all satisfied, \citet[Theorem~22]{varici2025score} shows that elementwise disentanglement can be achieved by learning an encoder--decoder pair that minimizes a score-based regularization objective together with a reconstruction loss.
Specifically, in this setting, we consider score differences between the \emph{interventional pair} of environments $(q^i,\bar{q}^i)$ affecting each node $i$, which we denote by $d^i$.
The objective penalizes deviations from the expected structure of score differences across interventional environments, while the reconstruction term ensures that the learned representation preserves the observed data. Under the stated assumptions, minimizing this objective guarantees \emph{elementwise disentanglement} of the latent variables:
{\small
\begin{equation}\label{eq:loss_function}
    \mathcal{L}(h,g) = \underbrace{ \mathbb{E} \big[\|g(h(x)) - x\|^2\big]}_{\text{Reconstruction Loss}} \;+\; \lambda \sum_{i=1}^{n} \underbrace{\big\| \mathbb{E} \big[ \big|J_g(\hatz)^\top \cdot d_X^i(x)\big|\big] - e_i \big\|^2}_{\text{Sparsity Loss}} \ ,
\end{equation}
}\!
where $e_i$ denotes the $i$-th standard unit vector. This result follows from the observation that the score difference \emph{between} the paired interventions $q^i$ and $\bar q^i$ is \emph{one-sparse} on the true latent domain.

\paragraph {Embodied setting.} In the embodied setting, where (C1)--(C6) no longer jointly hold, neither the pairing nor the atomicity is available. Therefore, the score difference does not fit to a single coordinate $i$, unlike \eqref{eq:loss_function}. Following \Cref{thm:block_id}, we instead confine the score differences to a fixed block. For each environment $k \in [K]$ we estimate $d_X\supk$ directly from trajectories, use change-of-variables formula \eqref{eq:d-transformation} to obtain $d_{\hat Z}\supk$, and penalize parts falling outside the estimated active block $\hat{\mcI}$:
{\small
\begin{equation}\label{eq:loss_block}
    \!\!\!\! \mcL_{\text{blk}}(h,g) = \underbrace{\E\big[\|g(h(x)) - x\|^2\big]}_{\text{Reconstruction}}
    \;+\; \lambda \underbrace{\Big\| \Big[\, \E\big[\big|J_g(\hatz)^\top \cdot \bD_X(x)\big|\big] \,\Big]_{\hat{\mcI}^c, :} \Big\|_F^2}_{\text{Block sparsity}} \;+\; \lambda_s \underbrace{(\E\big[\|h(x)\|\big] - 1)^2}_{\text{Scale}} \ .
\end{equation}
}\!
Importantly, we do not need to know the latent dimension $n$. We assume that the active-block size $m=|\mcI|$ is known and designate $m$ coordinates of the $d_b$-dimensional representation as $\hat{\mcI}$. The remaining $d_b-m$ coordinates are nuisance and can encode whatever information is needed for reconstruction. \looseness=-1

\section{Empirical Results and Analysis}\label{sec:Results}

This section empirically demonstrates that, despite extensive relaxation of interventional assumptions, we see only graceful degradation compared to CRL approaches designed under those assumptions. All experimental details, additional experiments and ablations are given in \Cref{app:regime_a,app:regime_b}.

\paragraph{Experiment pipeline.} 
The implementation consists of three stages, summarized in \Cref{fig:data-pipeline}. The input consists of high-dimensional visual observations $X$ collected under different interventional environments, and the objective is to obtain a low-dimensional estimate $\hatZ$ of the physical state.

\textbf{Stage 1 (AE1)} performs dimensionality reduction. Directly estimating score functions and score differences in the image domain is computationally expensive. We therefore use a convolutional autoencoder $(E_1,D_1)$ to map each grayscale image $X$ to a lower-dimensional $8\times 8\times 1$ feature representation $Y = E_1(X)$ by minimizing the reconstruction loss $\smash{\E_X\![\|X-D_1(E_1(X))\|_2^2]}$. After training, we freeze $E_1$ and use it as a feature extractor. This compression substantially reduces the dimension in which we must estimate score differences, at the potential cost of discarding latent information. We therefore separately evaluate how much state information is retained by $Y$.

\textbf{Stage 2 (LDR)} estimates intervention-induced score differences in the feature space. We exploit the fact that a binary classifier trained with cross-entropy loss estimates the log-density ratio between the two classes~\citep{gutmann2012noise}. For each interventional environment $k$, we train a log-density-ratio estimator ${f_{\mathrm{LDR}}\supk}$ to distinguish samples between chosen distributions. The gradient of its logit, ${\nabla_y f_{\mathrm{LDR}}\supk(y)}$, then provides an estimate of the corresponding feature-space score difference.\looseness=-1

\textbf{Stage 3 (AE2)} learns the disentangled representation. A second autoencoder $(E_2,D_2)$ is trained on $Y$ using the objectives developed in \Cref{sec:methodology}, producing $\smash{\hatZ \triangleq E_2(Y) \in \R^{d_b}}$. The training objective depends on the empirical regime: in the assumption-matched benchmark, we use the canonical score sparsity objective \eqref{eq:loss_function} that promotes elementwise disentanglement, whereas in the embodied setting, we use the block sparsity objective \eqref{eq:loss_block} as motivated by our block-disentanglement theory.

\paragraph{Empirical objectives.}
The experiments are designed to determine how far the CRL principles developed in the preceding sections carry from the assumption-matched setting toward embodied visual data. In particular, we address the following questions:
\begin{itemize}[leftmargin=0.2in,topsep=-1pt,itemsep=-2pt]
    \item \emph{Can CRL recover physical state from visual observations without state labels?}
    In the assumption-matched benchmark, we test whether the canonical score-based CRL objective can recover individual joint variables from high-dimensional images when its identifiability assumptions are satisfied, and compare its performance with supervised pose-estimation methods.
    \item \emph{Does the block-disentanglement principle remain effective beyond the theory?} When we additionally violate the assumptions underlying the block identifiability theorem, we test whether our approach still concentrates information about the relevant physical variables in the target block?
    \item \emph{Where in the pipeline is state information retained or lost?} We compare how much information about the true state is retained after visual compression, in the complete learned latent representation, and in the recovered active block. This lets us distinguish limitations from feature extraction from those from representation learning or block selection.
    \item \emph{How sensitive is the resulting representation to departures from the idealized setting?} We study the effect of factors such as the bottleneck dimension and other experimental variations to assess whether the observed disentanglement behavior persists as the representation and data-generating process become less tightly matched to the theoretical assumptions.
\end{itemize}


\begin{figure}[t]
    \centering
    \includegraphics[width=\textwidth]{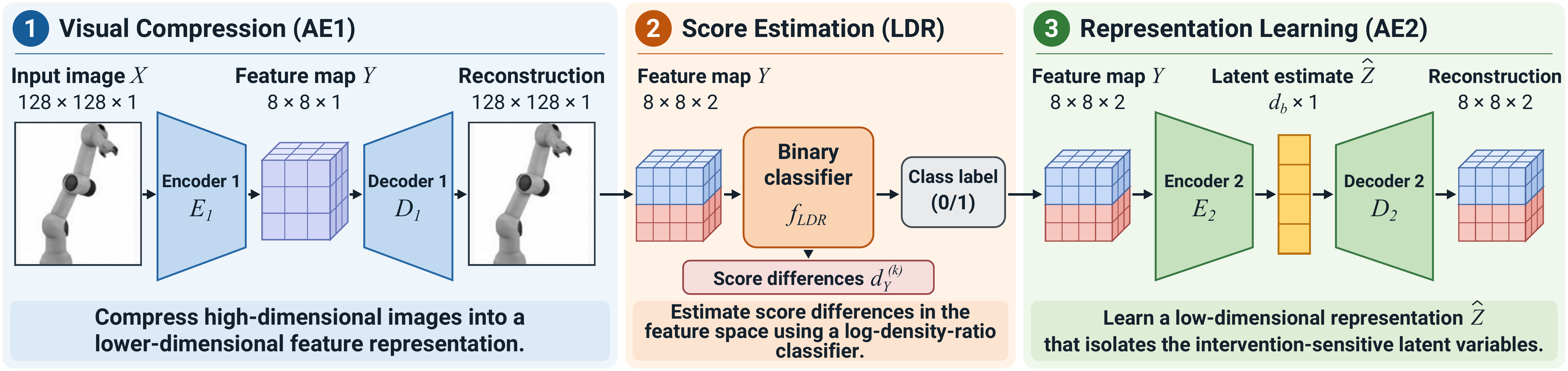}
    \caption{\small \textbf{Training pipeline.} A shared Autoencoder 1 (AE1) compresses each $128 \times 128 \times 1$ image $X$ into an $8 \times 8 \times 1$ feature vector $Y$. Features from both cameras are concatenated along the channel axis, forming an $8 \times 8 \times 2$ input to the LDR and AE2. AE2 then outputs the final $d_b \times 1$ latent estimate $\hatZ$.}
    \label{fig:data-pipeline}
\end{figure}

\paragraph{Evaluation metrics.} In order to answer the questions above, we specify appropriate metrics for both assumption-matched and embodied settings. In the assumption-matched setting, we consider
Mean Squared Error (MSE) as our main metric. To account for the elementwise diffeomorphism ambiguity of the recovered variables, we calibrate each estimated component using an \emph{affine} transformation fitted on a separate set of 1000 samples and report MSE on the held-out evaluation set.

To evaluate the embodied setting, we need a block-level measure of whether the learned representation preserves the information in the original latent variables. By \Cref{thm:block_id}, the representation within the active block is identifiable only up to an arbitrary diffeomorphism $\hatZhatI = \phi_\mcI(Z_\mcI)$.
Thus, we assess whether the learned active block retains sufficient information to recover $Z_\mcI$, which we denote \emph{active predictability}, and measure it using explained variance $R^2(Z_\mcI \mid \hatZhatI)$ under nonlinear kernel ridge regression. High active predictability indicates that the active block retains substantial information about the corresponding ground-truth latent. To localize where information is lost, we compare active predictability with two corresponding quantities at earlier stages of the data pipeline $Z \rightarrow X \rightarrow Y \rightarrow \hatZ
\rightarrow\hatZhatI$. Specifically, we report 
{\normalsize
\begin{equation}\label{eq:r2-metrics}
    \underbrace{R^2(Z_\mcI \mid \hatZhatI)}_{\text{active predictability}},
    \qquad
    \underbrace{R^2(Z_\mcI \mid \hatZ)}_{\text{all-latent predictability}},
    \qquad
    \underbrace{R^2(Z_\mcI \mid Y)}_{\text{feature predictability}}.
\end{equation}
}\!
The first term measures recovery from the active block, the second the full learned latent, and the third how much information about $Z_\mcI$ is contained in the feature map after the initial dimensionality reduction stage. These three quantities diagnose information loss at successive stages of the pipeline: feature extraction, representation learning, and construction of the active block.

\paragraph{Assumption-matched setting results.}
{\footnotesize
\begin{table}[b]
\centering
\caption{Comparison of MSE ($\text{rad}^2$) for RoboPEPP and our method in assumption-matched regime.
}
\label{tab:mse_results_baseline}
\footnotesize
\sisetup{table-format=1.3}
\begin{tabular}{@{} ll c *{6}{S} @{}}
\toprule
\textbf{Model} & \textbf{Training Labels} & {\textbf{Joint 1}} & {\textbf{Joint 2}} & {\textbf{Joint 3}} & {\textbf{Joint 4}} & {\textbf{Joint 5}} & {\textbf{Joint 6}} \\
\midrule
\multirow{1}{*}{Ours}     & none during training & 0.083 & 0.015 & 0.217 & 0.035 & 0.198 & 0.080 \\
\midrule
\multirow{4}{*}{RoboPEPP} & $1\%$   of dataset   & 0.136 & 0.039 & 0.237 & 0.053 & 0.253 & 0.200 \\
                          & $5\%$   of dataset   & 0.075 & 0.017 & 0.134 & 0.024 & 0.153 & 0.081 \\
                          & $10\%$  of dataset   & 0.030 & 0.010 & 0.072 & 0.022 & 0.091 & 0.063 \\
                          & $100\%$ of dataset   & 0.003 & 0.001 & 0.007 & 0.003 & 0.010 & 0.011 \\
\bottomrule
\end{tabular}
\end{table}
}
The setting for this experiment satisfies (C1)--(C6), and we use canonical CRL techniques. We compare the performance of our approach against a recent supervised baseline, RoboPEPP~\citep{goswami2025robopepp}, in \Cref{tab:mse_results_baseline}. For extensive ablations and additional results, see \Cref{app:regime_a}. RoboPEPP is among the few methods that can estimate pose when joint angles are unknown at test time (along with HPE~\citep{ban2024real} and RoboPOSE~\citep{labbe2021single}), and reports stronger performance than these alternatives. RoboPEPP uses a two-stage training procedure. First, a self-supervised I-JEPA backbone~\citep{assran2023self} is trained on the entire training dataset. In the second stage, a ``JointNet'' is trained to predict the six joint angles from the concatenated I-JEPA embeddings of the two camera views, using an $L_2$ loss and ground-truth joint-angle labels. \emph{In contrast, our CRL approach requires no labeled state data for training.}

Importantly, our approach achieves performance comparable to RoboPEPP when RoboPEPP is trained using labels for 5\% of the dataset (approximately 6.5K labeled poses, corresponding to 13K labeled images from two camera orientations). The key point is therefore not only the resulting estimation accuracy, but that we achieve comparable performance while learning the underlying state representation \textbf{without} training on labeled data.

{\footnotesize
\begin{table}[t]
    \vspace{-0.1in}
    \footnotesize
    \centering
    \caption{Embodied setting results across score regularization weight $\lambda$ and bottleneck dimension $d_b$.}
    \label{tab:embodied-main-results}
    \begin{tabular}{ccccc}
        \toprule
        $\lambda$ & $d_b$ & Active $R^2(Z_\mcI \mid \hatZhatI)$ & All-latent $R^2(Z_\mcI \mid \hatZ)$ & Feature $R^2(Z_\mcI \mid Y)$ \\
        \midrule
        \textbf{0}   & 16 & 0.33 & 0.72 & \multirow{12}*{\makecell{0.77 \\ (AE1 frozen)}} \\
        50  & 16 & 0.58 & 0.63 & \\
        100 & 16 & 0.63 & 0.66 & \\
        \cmidrule{1-4}
        \textbf{0}   & 32 & 0.23 & 0.77 & \\
        50  & 32 & 0.61 & 0.67 & \\
        100 & 32 & 0.55 & 0.66 & \\
        \cmidrule{1-4}
        50  & 8  & 0.58 & 0.60 & \\
        50  & 12 & 0.63 & 0.66 & \\
        50  & 16 & 0.58 & 0.63 & \\
        50  & 32 & 0.61 & 0.67 & \\
        \bottomrule
    \end{tabular}
    \vspace{-.1in}
\end{table}
}

\paragraph{Embodied setting results.}  We now move beyond the theoretically-guaranteed regime, with assumption violations V1--V3 all active simultaneously. We apply our block-disentanglement approach and report the results in \Cref{tab:embodied-main-results}. 
Specifically, we answer two main questions:
\begin{itemize}[leftmargin=0.2in, topsep=0in, itemsep=0in]
    \item \textbf{Does the approach remain effective under violations V1--V3?} With score regularization, active predictability approaches all-latent predictability, indicating that the regularizer retains much of the state information already present in $\hatZ$ within the active block $\hatZhatI$. In comparison, a reconstruction-only objective ($\lambda=0$) results in significantly lower active predictability: for $d_b=32$, $\lambda=0$ yields 0.23 whereas $\lambda=50$ yields 0.61, retaining most of the all-latent predictability of 0.67.
    \item \textbf{How sensitive is the representation to the assumption violations?} Violations V1--V2 arise from the video modality and are therefore not cleanly ablatable without changing the embodied setting itself. Violation V3, in contrast, can be varied directly through the bottleneck dimension. Increasing $d_b$ substantially changes the capacity and information content of the full representation, but has little effect on active predictability: across $d_b\in\{8, 12, 16, 32\}$, the active block recovery remains at essentially the same level. Thus, the active/nuisance contrast persists under substantial overparameterization, while the full representation continues to learn additional features that purely help reconstruction.
\end{itemize}

In summary, our main empirical observation is that the structural principles motivating CRL remain effective well beyond their validity regime. In the embodied setting, despite extensive violations of the theoretical conditions that underlie it, the block regularizer retains substantially more recoverable state information within the designated active coordinates than the unregularized baseline.

Finally, we provide additional results in~\Cref{app:regime_b}, including different episode generation setups and detailed training curves for~\Cref{tab:embodied-main-results}.

\section{Concluding Remarks} \label{sec:conclusion} 

We have proposed a two-step path from CRL identifiability theory toward label-free visual state estimation. In this first step, we weaken canonical intervention assumptions and establish block-disentanglement guarantees, showing that an active group of causal variables can remain identifiable even when individual variables cannot be recovered separately. In the second step, we show that the resulting principle remains effective for retaining relevant information in embodied visual settings that further violate the theoretical assumptions. Empirically, score-based CRL recovers robot joint states from images without state labels in the assumption-matched setting, and the block-based objective continues to be informative about the relevant physical variables under temporal dependence, imperfect interventions, and overcomplete representations. These two components are complementary. The theory characterizes what remains identifiable under weaker assumptions, while the experiments assess whether those principles remain effective as CRL moves toward practical visual data. To further narrow the theory-practice gap and realize CRL's important potential, a key next step is to extend the identifiability theory to settings with temporally dependent observations, approximate intervention effects, and unknown or overcomplete latent dimensionality.

\section*{AI Use Statement}

In this work, we used generative AI tools to
implement methods, and
provide feedback on research methodology or experiments.
We have not used generative AI tools to
generate synthetic data sets,
help develop theoretical models or conceptual frameworks,
formulate mathematical claims,
provide critical ingredients for proving mathematical claims,
assist in the writing of proofs,
propose or refine hypotheses,
support qualitative and thematic data analysis, and
interpret results; and
clean and reformat dataset, and
assist with translation,
are not applicable to this work. Additionally, we used generative AI tools to
create or edit software code,
draft parts of a research paper,
summarize or analyze existing literature,
brainstorming, and
edit a research paper to improve readability. We have reviewed all AI-assisted work.
LLM-generated code was verified and tested for correctness. LLMs were used to assist with high-level structural planning and localized refinements. All substantive claims, arguments, technical content, and the core draft were produced manually by the authors.
We take responsibility for the final content of this work, including text, claims or artifacts produced with the aid of generative AI.

\section*{Reproducibility Statement}

The relevant loss functions are fully specified in \Cref{sec:methodology} and all implementation details are included in \Cref{app:regime_a,app:regime_b,app:architecture}.

\bibliography{References/main,References/legacy_references,References/burak_addition}
\bibliographystyle{iclr2027_conference}

\newpage
\appendix
\crefalias{section}{appendix}
\crefalias{subsection}{appendix}
\crefalias{subsubsection}{appendix}



\startcontents[app]

\section*{Table of Contents}
\printcontents[app]{}{1}[2]{}

\section{Related Work and Limitations}\label{app:related-work}

\paragraph{Identifiable causal representation learning.} Without additional structure or distributional diversity, recovering latent variables from observations under an unknown nonlinear mixing function is generally impossible \citep{hyvarinen1999nonlinear,locatello2019challenging}. Causal representation learning (CRL) addresses this ambiguity by exploiting causal structure and changes across environments. A substantial literature establishes identifiability from interventions under different assumptions on the observation map, latent causal model, and intervention mechanisms \citep{squires2023linear,buchholz2023learning,zhang2023identifiability,vonkugelgen2023twohard,varici2024linear,varici2025score,acarturk2024sample,ng2025causal,lee2026beyond}. Other approaches include counterfactual CRL~\citep{vonkugelgen2021self}, unified frameworks to analyze interventional CRL along with nonlinear ICA and perturbation models~\citep{yi2026concept}, and temporal~\citep{lippe2022icitris} and multi-view CRL literature~\citep{yao2025unifying}. These results range from linear latent representations or observation models to general nonlinear transformations, and consider settings including hard, soft, known, and unknown interventions.

Interventional diversity is particularly relevant to our work. Existing nonlinear CRL guarantees commonly obtain elementwise disentanglement by exploiting sufficiently localized and diverse changes across interventional environments \citep{vonkugelgen2023twohard,varici2025score} or the nonlinear ICA connection under Gaussian additive noise models \citep{ng2025causal}. Our result instead asks what remains identifiable when such intervention structure is unavailable, replacing intervention-by-intervention requirements with collective conditions on the support and span of score differences. The precise comparison between these assumptions and our conditions is given in \Cref{sec:theory}.

\paragraph{Block disentanglement.} When individual latent coordinates cannot be identified, a weaker objective is to recover groups of variables up to an invertible transformation within each group. \citet{vonkugelgen2021self} identify the invariant content block from paired observations (also called counterfactual setting) whose augmentations preserve content while sufficiently varying the remaining style variables, allowing causal dependence between content and style.

Under polynomial transformation, \citet{ahuja2023interventional} exploit intervention-induced support independence to obtain block-affine guarantees. For general nonlinear transformation, \citet{liang2023causal} identify intervention-target blocks when the causal graph and target sets are known and sufficiently diverse perfect interventions satisfy a block score-rank condition. \cite{ahuja2024multi} use ancestrally closed soft intervention target sets (with proper distributional assumptions) to disentangle the non-intervened variables from the intervened ones. \citet{yao2025unifying} establish a broader \emph{invariant-block} identification result by combining information preservation with an invariance condition. Our \Cref{thm:block_id} establishes recovery of the intervention-affected block through collective score diversity, even when its individual coordinates cannot be separated. The discussion following the theorem compares the required intervention structure and diversity conditions with these earlier guarantees.

\paragraph{Visual robot pose estimation.} Visual robot pose estimation is most commonly formulated as a supervised prediction problem. DREAM \citep{lee2020camera} estimates robot pose through image-space keypoints, while RoboPose \citep{labbe2021single} iteratively refines pose and joint-angle estimates. More recent methods include HPE \citep{ban2024real} and RoboPEPP \citep{goswami2025robopepp}, the latter combining self-supervised visual pretraining with supervised joint-angle regression. These methods demonstrate strong pose-estimation performance, but ultimately rely on pose or joint-state supervision for the prediction task. In contrast, our objective is to recover the underlying physical state representation without per-frame state labels, using variation across intervention environments as the learning signal.

Classical alternatives include geometry- and marker-based systems, such as fiducial-marker approaches \citep{garrido2014automatic,garcia2023fiducial}. These methods provide a different tradeoff, relying on known geometric structure, calibration, or instrumentation rather than learned latent representations. Our experiments are not intended to replace these specialized pose-estimation pipelines; rather, robot pose provides a controlled embodied setting in which to study whether identifiable-CRL principles can be carried from high-dimensional observations toward state estimation under progressively weaker assumptions.

\paragraph{Limitations.} We list the main limitations of our paper.

\begin{itemize}[leftmargin=0.2in, topsep=0in, itemsep=0in]
    \item Our empirical analysis is limited to simulation studies in a controlled environment. This is a deliberate choice to maximize controllability of the experimental setting. However, in order to demonstrate practicality, CRL field needs to start using real-world data as well. This paper shows that such a transition does not have to be done abruptly.
    \item Proving robust estimation for both score estimation and disentanglement stages against using non-i.i.d.\ data would address violation V1 (temporal dependence) directly, and are likely a prerequisite for any theoretical treatment of video modality from the interventional CRL perspective.
    \item An approximate version of \Cref{thm:block_id} that bounds the degradation of block recovery in terms of the magnitude of off-block score deviations would be required to treat violation V2 (off-target effects) from a theoretically grounded perspective.
    \item There exists CRL approaches that can not only tolerate $d_b > n$ but can also identify $n$. While this would help with addressing violation V3 (overcomplete bottleneck), our empirical analysis implies that our method is currently robust against using an overcomplete learned representation.
    \item As shown in \Cref{app:regime_a}, violations of the injectivity of the transformation $f$ are very impactful: a consistent occlusion of a joint from an unfavorable camera angle impedes recovery of that joint, and requires a multi-camera setup to resolve.
\end{itemize}

\section{Proofs}\label{app:proof}

In this section, we provide the proof of \Cref{thm:block_id}, stated next for ease of reference.
\thmBlockID*{}
\begin{proof}
Since $(h,g)$ achieves perfect reconstruction and $f$ is an injective diffeomorphism, the composition
\begin{equation}\label{eq:proof_tau}
    \tau \;\triangleq\; h \circ f : \R^{n} \to \R^{n}
\end{equation}
is a diffeomorphism with $\hatZ = \tau(Z)$. The proof consists of showing that $\tau$ cannot mix the block $\mcI$ with its complement.

Due to the score difference function sparsity property in~\eqref{eq:dzk-is-sparse}, the score difference $d\supk$ is zero outside the coordinates $\Paplus(\mcI_k)$. \Cref{defn:mcI-anc-closed} posits $\Paplus(i) \subseteq \mcI$, which implies $[d\supk(z)]_j = 0$ for any $j \not\in \Paplus(\mcI_k) \subseteq \mcI$. Therefore, for any $j \in \mcI^c$ and $k$, we have $[d\supk(z)]_j = 0$. Accordingly, matrix $\bD(z)$ can be written in block format,
\begin{equation}\label{eq:proof_support}
    \bD(z) = \begin{bmatrix}
        [\bD(z)]_{\mcI, :} \\ 0
    \end{bmatrix} \ ,
\end{equation}
where the ``diagonal'' part $[\bD(z)]_{\mcI, :}$ has size $m \times K$. Due to the collective rank diversity condition \Cref{defn:dz-full-rank}, this diagonal term has full row rank---i.e., $\rank([\bD(z)]_{\mcI, :}) = m$---at almost every $z$. We note that \eqref{eq:d-transformation} implies the matrix identity
\begin{equation}\label{eq:proof_matrix}
    \bD_{\hatZ}(\hatZ) = \left[J_\tau(z)\right]^{-\top} \cdot \bD(z) \ .
\end{equation}
Therefore,
\begin{equation}
    \bD_{\hatZ}(\hatZ) = \begin{bmatrix}
        [[J_\tau(z)]^{-\top}]_{:, \mcI} & [[J_\tau(z)]^{-\top}]_{:, \mcI^c}
    \end{bmatrix}
    \cdot \begin{bmatrix}
        [\bD(z)]_{\mcI, :} \\ 0
    \end{bmatrix}
    = [[J_\tau(z)]^{-\top}]_{:, \mcI} \cdot [\bD(z)]_{\mcI, :} \ .
\end{equation}
The theorem statement requires the rows of $\bD_{\hatZ}(\hatZ)$ indexed by $\hat{\mcI}^c$ to be zero. That is,
\begin{equation}
    \bzero_{(n-m) \times K} = [\bD_{\hatZ}(\hatZ)]_{\hat{\mcI}^c, :} = [[J_\tau(z)^{-\top}]_{\hat{\mcI}^c, \mcI} \cdot [\bD(z)]_{\mcI, :} \ .
\end{equation}

Applying transpose,
\begin{equation}
    \bzero_{K \times (n-m)}
    =
    [\bD(z)^\top]_{:, \mcI}
    [J_\tau(z)^{-1}]_{\mcI,\hat{\mcI}^c}.
\end{equation}
Since $[\bD(z)]_{\mcI,:}$ has full row rank,
$[\bD(z)^\top]_{:,\mcI}$ has full column rank. Hence
\begin{equation}
    [J_\tau(z)^{-1}]_{\mcI,\hat{\mcI}^c}
    = \bzero_{m\times(n-m)}.
\end{equation}
This holds almost everywhere by the collective rank diversity condition, and therefore everywhere by continuity of $J_\tau^{-1}$.

Partitioning the rows of $J_\tau^{-1}$ according to $(\mcI,\mcI^c)$ and its columns according to $(\hat{\mcI},\hat{\mcI}^c)$, we thus have
\begin{equation}
    J_\tau^{-1}
    =
    \begin{bmatrix}
        [J_\tau^{-1}]_{\mcI,\hat{\mcI}}
        & \bzero \\
        [J_\tau^{-1}]_{\mcI^c,\hat{\mcI}}
        & [J_\tau^{-1}]_{\mcI^c,\hat{\mcI}^c}
    \end{bmatrix}.
\end{equation}
Since $J_\tau^{-1}$ is invertible, both diagonal blocks are invertible. Consequently, by block matrix inversion,
\begin{equation}
    J_\tau
    =
    \begin{bmatrix}
        [J_\tau]_{\hat{\mcI},\mcI}
        & \bzero \\
        [J_\tau]_{\hat{\mcI}^c,\mcI}
        & [J_\tau]_{\hat{\mcI}^c,\mcI^c}
    \end{bmatrix},
\end{equation}
where $[J_\tau]_{\hat{\mcI},\mcI}$ is invertible. Therefore,
\begin{equation}
    \frac{\partial \hatz_j}{\partial z_k}=0
    \qquad
    \forall j\in\hat{\mcI},\quad
    \forall k\in\mcI^c.
\end{equation}
Hence $\hatz_{\hat{\mcI}}$ depends only on $Z_{\mcI}$.

Likewise, the upper-right zero block of $J_\tau^{-1}$ implies that $Z_{\mcI}$ depends only on $\hatz_{\hat{\mcI}}$. Thus, there exists a map $\tau_{\mcI}$ such that
\begin{equation}
    \hatz_{\hat{\mcI}}=\tau_{\mcI}(Z_{\mcI}),
\end{equation}
whose inverse is given by the corresponding restriction of $\tau^{-1}$. Moreover, $[J_\tau]_{\hat{\mcI},\mcI}$ is invertible. Therefore $\tau_{\mcI}$ is a diffeomorphism, proving block disentanglement.
\end{proof}

\section{Embodied Regime}\label{app:regime_b}

This appendix provides additional implementation and evaluation details for the embodied setting of \Cref{sec:regimes}. Unless otherwise stated, all reported experiments use $K=n=6$ environments, with one \emph{nominally} targeted joint per environment. Hence, the union of the intervention targets is $\mcI=[n]=\{1,\ldots,6\}$. We designate the corresponding first six learned coordinates as $\hat{\mcI}=\{1,\ldots,6\}$; when $d_b>6$, the remaining coordinates $\hat{\mcI}^{c}=\{7,\ldots,d_b\}$ form the nuisance portion of the learned representation.

Ground-truth joint angles are not used to train AE1, the LDR estimators, or AE2. They are used only for evaluation through the $R^2$ metrics defined in \Cref{eq:r2-metrics}. All results and training curves reported in this section correspond to a single fixed random seed and to the final training iterate.

\subsection{Data Generation}\label{app:embodied-data}

Data collection replaces i.i.d.\ rendering with continuous trajectories. We construct six environments, one for each robot joint. Each environment is generated from 4000 closed-loop trajectories and differs through the goal-sampling distribution used for its nominally targeted joint.

Each episode proceeds as follows.
\begin{enumerate}[leftmargin=0.2in, topsep=0in, itemsep=0in]
    \item Sample a start configuration and a goal configuration. In environment $k$, the goal
    distribution prioritizes displacement of joint $k$ (sampled uniformly at random with minimum displacement of $0.5$ radians), while the remaining joint targets are changed in a small amount (shifted by a uniformly random value less than $0.05$ radians).
    \item Drive the arm from its start configuration toward the sampled goal using a proportional controller. Every simulator step is rendered and retained, so consecutive observations differ by one controller step.
    \item Terminate the trajectory once the goal is reached within a prescribed tolerance or after a fixed step budget (max. 40 frames), and reset the environment before generating the next trajectory.
\end{enumerate}

We use the same two-camera rendering configuration: one at 45 and other 135 degrees with respect to the arm's natural position, with observations from the two views concatenated after AE1 feature extraction.

Start states are sampled uniformly at random from the valid angles. The simulator itself rejects malformed configurations (e.g., where the robot arm intersects the table), so no additional processing during generation were needed.

The target distribution at the default configuration (the default ``embodied regime'' setup) is chosen such that in environment $k$, the $k$-th joint angle is set to be significantly different (difference $\geq 0.5$ radians) from the initial state. The off-target joints are instead targeted with relatively smaller perturbations ($<0.05$ radians).

The proportional controller gain is set to $1$; however, the trajectories were stable and smooth across the configuration space. Maximum trajectory length was set at $40$ frames.

The resulting data exhibit strong temporal dependence within each trajectory. Resets limit dependence across episodes but do not remove the correlation between consecutive frames within one episode. Moreover, targeting one joint does not yield an atomic intervention in the resulting trajectory: closed-loop motion generally induces simultaneous motion of other joints. Thus, the environment label indicates the joint whose displacement is emphasized rather than an intervention that changes only a single latent coordinate.

In the finalized experiments we use all six environments jointly, so $\mcI=[n]$. Consequently, off-target motion makes the individual environments non-atomic and couples their intervention effects.

\subsection{Score Difference Estimation}\label{app:embodied-ldr}

Unlike the assumption-matched setting, the embodied regime does not contain a separately generated observational dataset. Instead, we exploit the directional structure of each controlled trajectory to construct the two distributions required by the log-density-ratio estimator.

For every trajectory in environment $k$, we divide the sequence into two non-overlapping temporal halves. Samples from the first half form one classifier class and samples from the second half form the other. We then train one binary LDR classifier $f_{\mathrm{LDR}}^{(k)}$ for each environment. As described in the main pipeline, its logit gradient provides the corresponding feature-space score difference estimate.

In the embodied setting, this temporal-half contrast is an \emph{empirical proxy} motivated by the theoretical score-difference construction, rather than a direct estimate of the interventional score difference in \Cref{thm:block_id}.

We deliberately use non-overlapping temporal halves rather than frame-by-frame contrasts between nearby portions of the same trajectory. At finite sample size, heavily overlapping temporal supports would cause the classifier to observe nearly indistinguishable samples from the two classes over much of the trajectory, thus substantially weakening the density-ratio signal. The half-trajectory construction instead provides a coarser but more separable contrast while retaining the continuous-video nature of the data.

\subsection{Training and Evaluation Details}\label{app:embodied-training}

We follow the three-stage training pipeline described in \Cref{sec:Results}. AE1 is first trained to compress each $128\times128\times1$ camera image to an $8\times8\times1$ feature representation. The AE1 parameters are then frozen. Features from the two camera views are concatenated to form an $8\times8\times2$ input to the LDR estimators and AE2.

For the embodied experiments, all trainable stages use Adam optimizer, use learning rate of $10^{-4}$, are trained for $10$ epochs and using batch size $256$. AE1 was trained on 16 H100s whereas LDR and AE2 models were trained on a single H100.

In AE1 and AE2 runs, we process the dataset frame-by-frame as if the data was i.i.d.; consequently, we observe $2.8\times 10^4$ optimization steps in the reported runs. In LDR, trajectories themselves were batched, and for each trajectory one frame from each half is selected uniformly at random to train the model in that epoch. Therefore, the classes were by definition balanced during training.

For AE2, the scale-regularization weight $\lambda_s$ is fixed at $\lambda_s=1$. We train bottleneck dimensions
\begin{equation}
    d_b\in\{8,12,16,32\},
\end{equation}
with the first six coordinates designated as $\hat{\mcI}$ and the remaining $d_b-6$ coordinates are treated as nuisance coordinates. The AE2 objective is the block-regularized loss in \Cref{eq:loss_block}. We vary the score-regularization weight $\lambda$ in the sensitivity experiments below; the bottleneck-dimension sweep uses $\lambda=50$.

\paragraph{Evaluation protocol.} For the embodied setting, we report the three predictability metrics introduced in \Cref{eq:r2-metrics}: feature predictability $R^2(Z_{\mcI}\mid Y)$, all-latent predictability $R^2(Z_{\mcI}\mid\hat Z)$, and active predictability $R^2(Z_{\mcI}\mid\hat Z_{\hat{\mcI}})$.

These quantities are estimated using multivariate nonlinear kernel ridge regression with an RBF kernel on held-out labeled states. The kernel bandwidth is selected using the median-distance heuristic, i.e., from the median pairwise distance among the regression inputs. These labels are used only for evaluation and do not enter representation learning.
We use two separate sets of 1000 samples each to fit KRR and compute the reported $R^2$ values.

\subsection{Sensitivity and Additional Results}\label{app:regime-b-additional-results}

We next examine the two experimental factors for which finalized sweeps are available: the learned bottleneck dimension $d_b$ and the score-regularization weight $\lambda$. These experiments are intended to complement the aggregate results of \Cref{tab:embodied-main-results} by showing the training dynamics underlying the reported final values.

\subsubsection{Bottleneck Dimension}

\begin{figure}[htbp]
    \centering
    \includegraphics[width=0.6\textwidth]{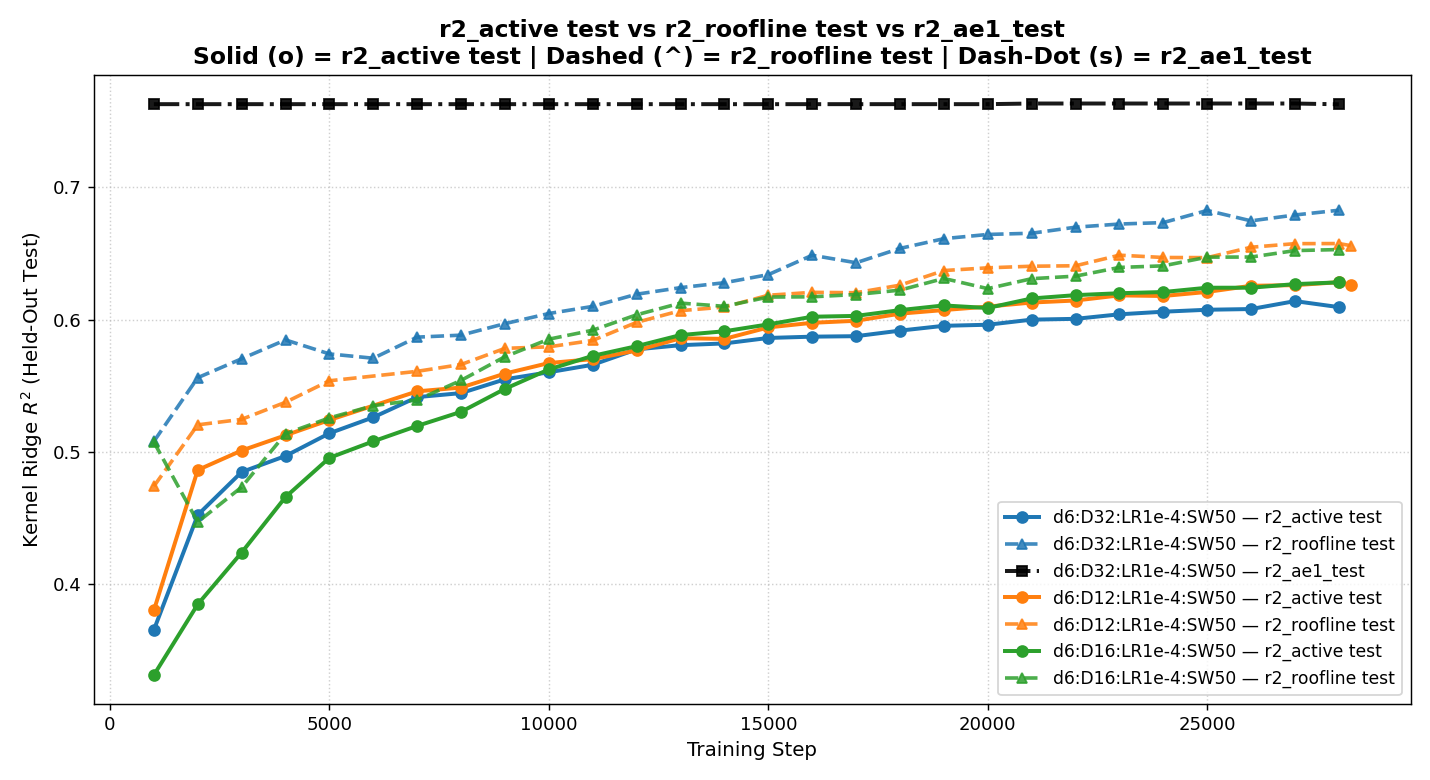}
    \caption{Training dynamics across bottleneck dimensions $d_b$ ($\lambda=50$).}
    \label{fig:r2-vs-db-lambda=50}
\end{figure}

\Cref{fig:r2-vs-db-lambda=50} shows the evolution of the three predictability quantities as the bottleneck dimension is varied. Increasing $d_b$ substantially increases the capacity available to the complete representation $\hat Z$, but active predictability remains comparatively stable. In particular, the information about the true state retained in the designated coordinates $\hat Z_{\hat{\mcI}}$ remains close to that available from the complete learned representation over the tested bottleneck dimensions.

This behavior is consistent with the main result in \Cref{tab:embodied-main-results}: the block regularizer concentrates much of the recoverable state information into the designated six-dimensional subspace even when the learned representation contains substantially more coordinates than the true latent state.

\subsubsection{Score-Regularization Weight}

\begin{figure}[htbp]
    \centering
    \includegraphics[width=0.6\textwidth]{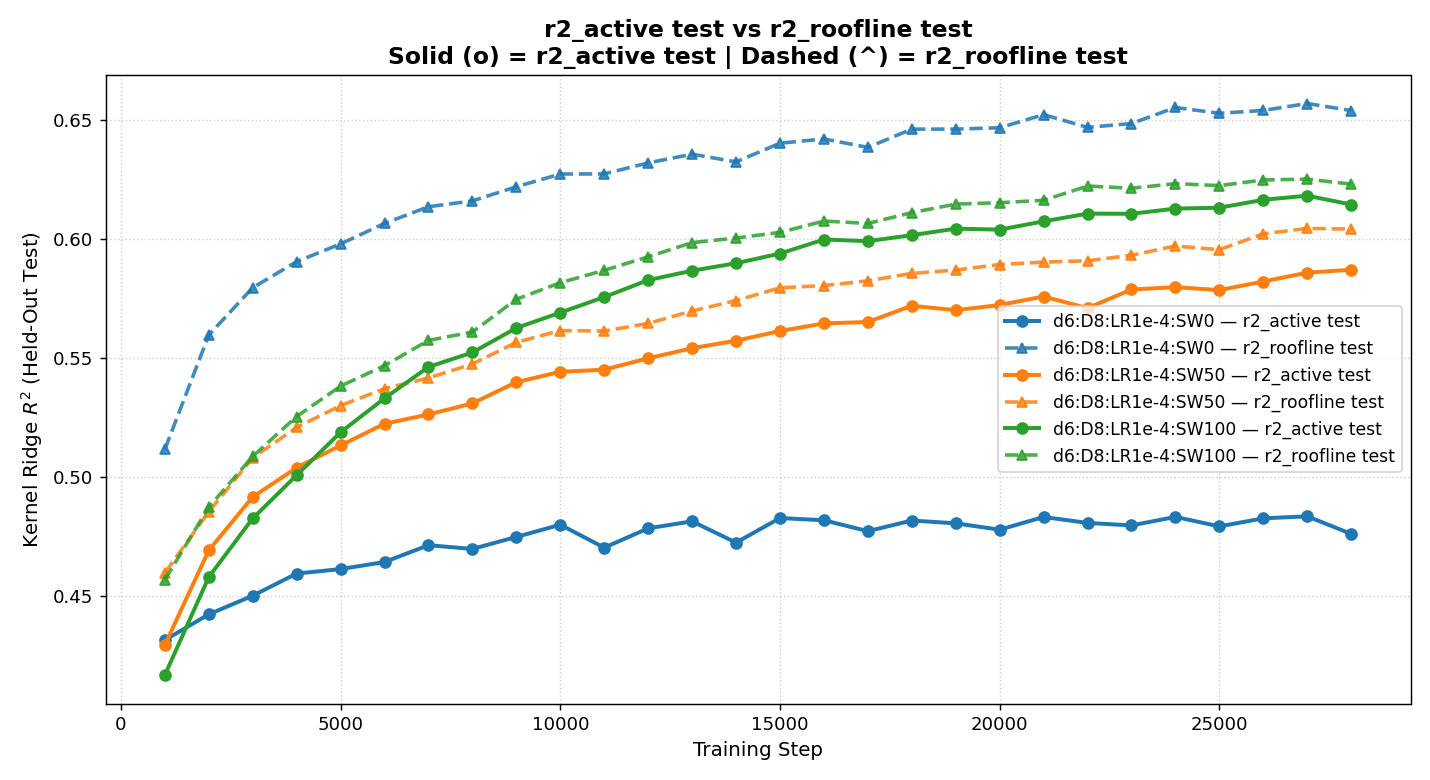}
    \caption{Training dynamics across score-regularization weights $\lambda$ ($d_b=8$).}
    \label{fig:r2-vs-lambda-db=8}
\end{figure}

\begin{figure}[htbp]
    \centering
    \includegraphics[width=0.6\textwidth]{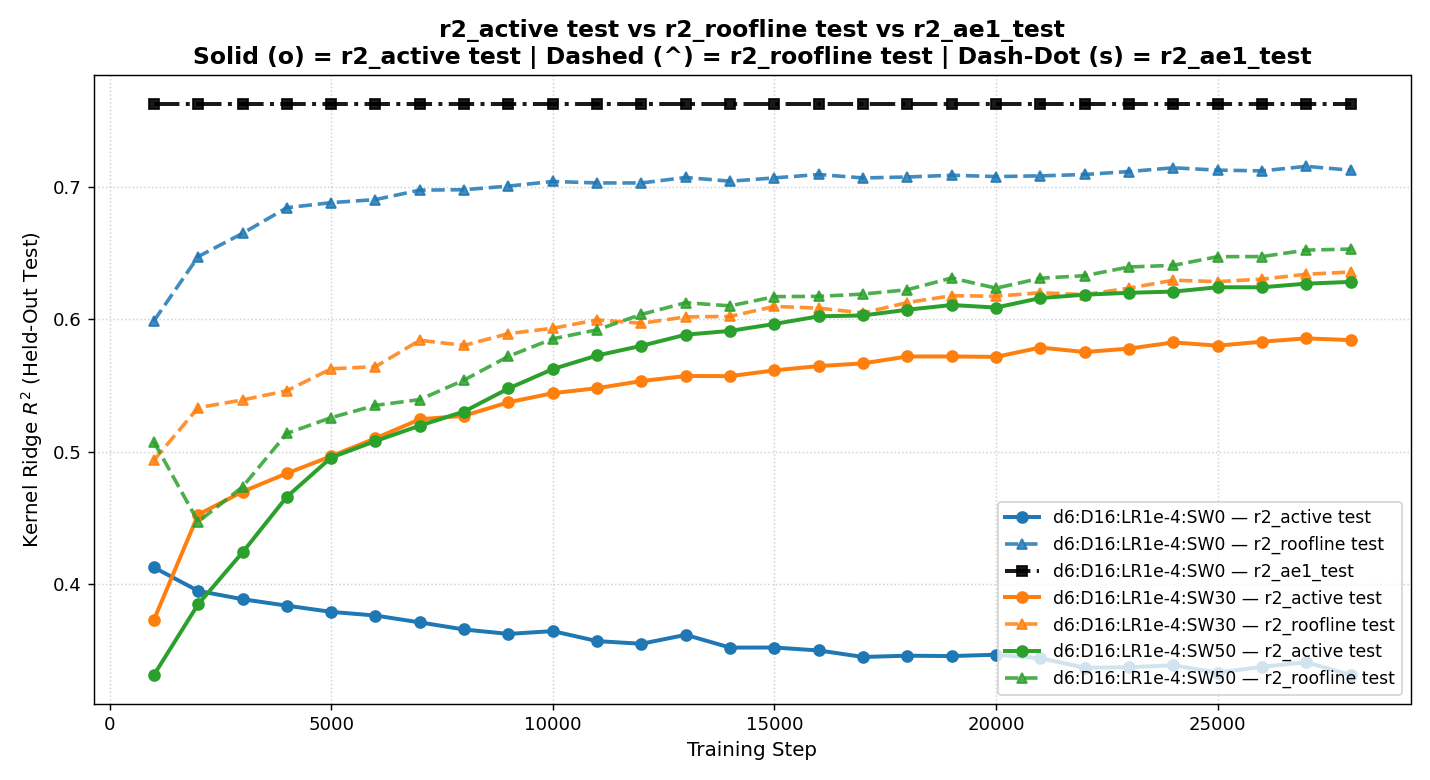}
    \caption{Training dynamics across score-regularization weights $\lambda$ ($d_b=16$).}
    \label{fig:r2-vs-lambda-db=16}
\end{figure}

\begin{figure}[htbp]
    \centering
    \includegraphics[width=0.6\textwidth]{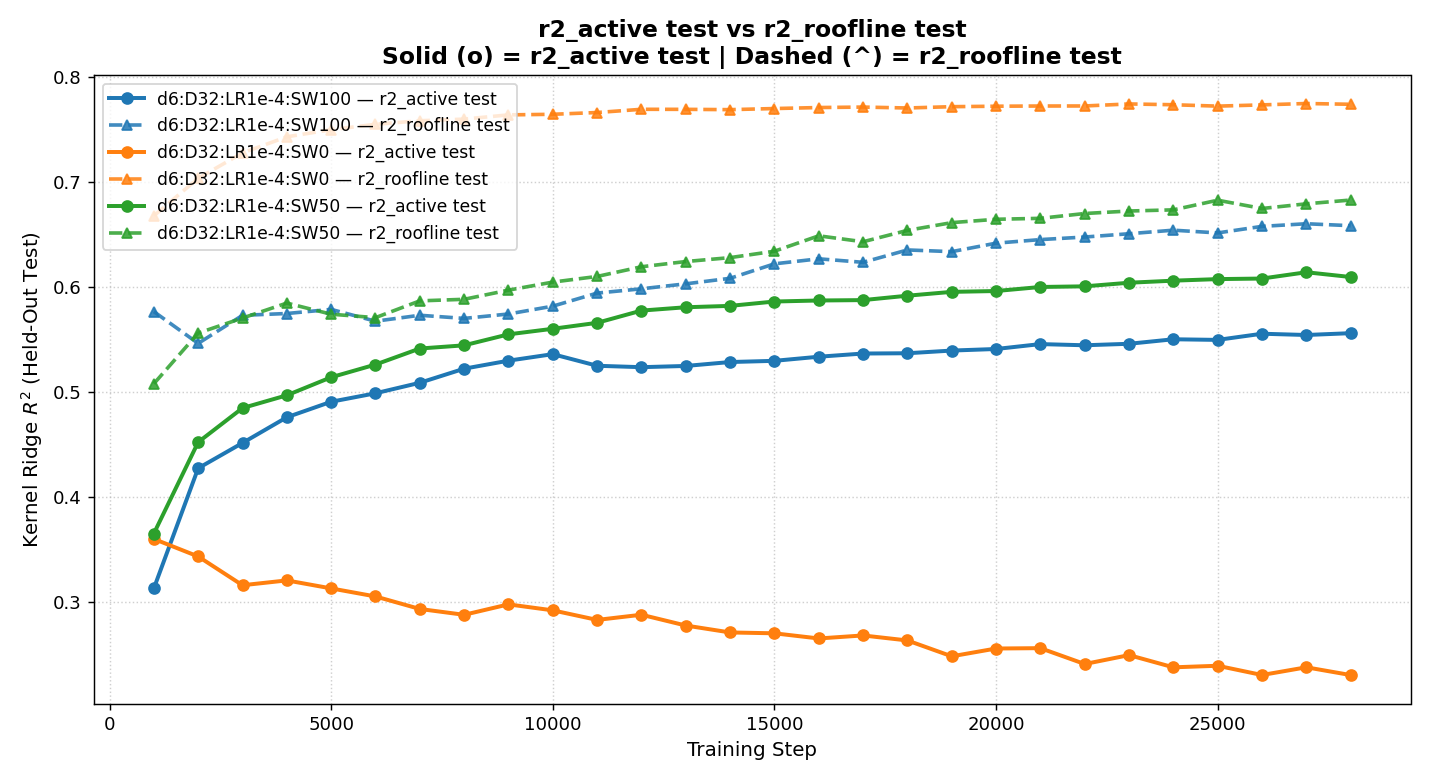}
    \caption{Training dynamics across score-regularization weights $\lambda$ ($d_b=32$).}
    \label{fig:r2-vs-lambda-db=32}
\end{figure}

\Cref{fig:r2-vs-lambda-db=8,fig:r2-vs-lambda-db=16,fig:r2-vs-lambda-db=32} show the training dynamics as the score-regularization weight is varied for $d_b\in\{8,16,32\}$. The reconstruction-only baseline, $\lambda=0$, preserves substantial state information in the complete representation but does not consistently concentrate that information into $\hat Z_{\hat{\mcI}}$. Introducing the block regularizer substantially closes this gap.

The dependence on $\lambda$ is not monotone: increasing the regularization weight beyond a useful range does not necessarily improve active predictability. The relevant comparison is therefore between the absence and presence of a substantial block-disentanglement signal, rather than between progressively larger values of $\lambda$.

\subsubsection{Multi-Dominant-Joint Trajectories}

Finally, we consider ``multi-dominant-joint trajectories''. Specifically, in environment $k$, we choose \emph{two} joints, $k$ and $k + 2 \pmod{6}$, as targets. In this regime, we also increase the off-target effects from $0.05$ to $0.1$.

\begin{figure}[htbp]
    \centering
    \includegraphics[width=0.6\textwidth]{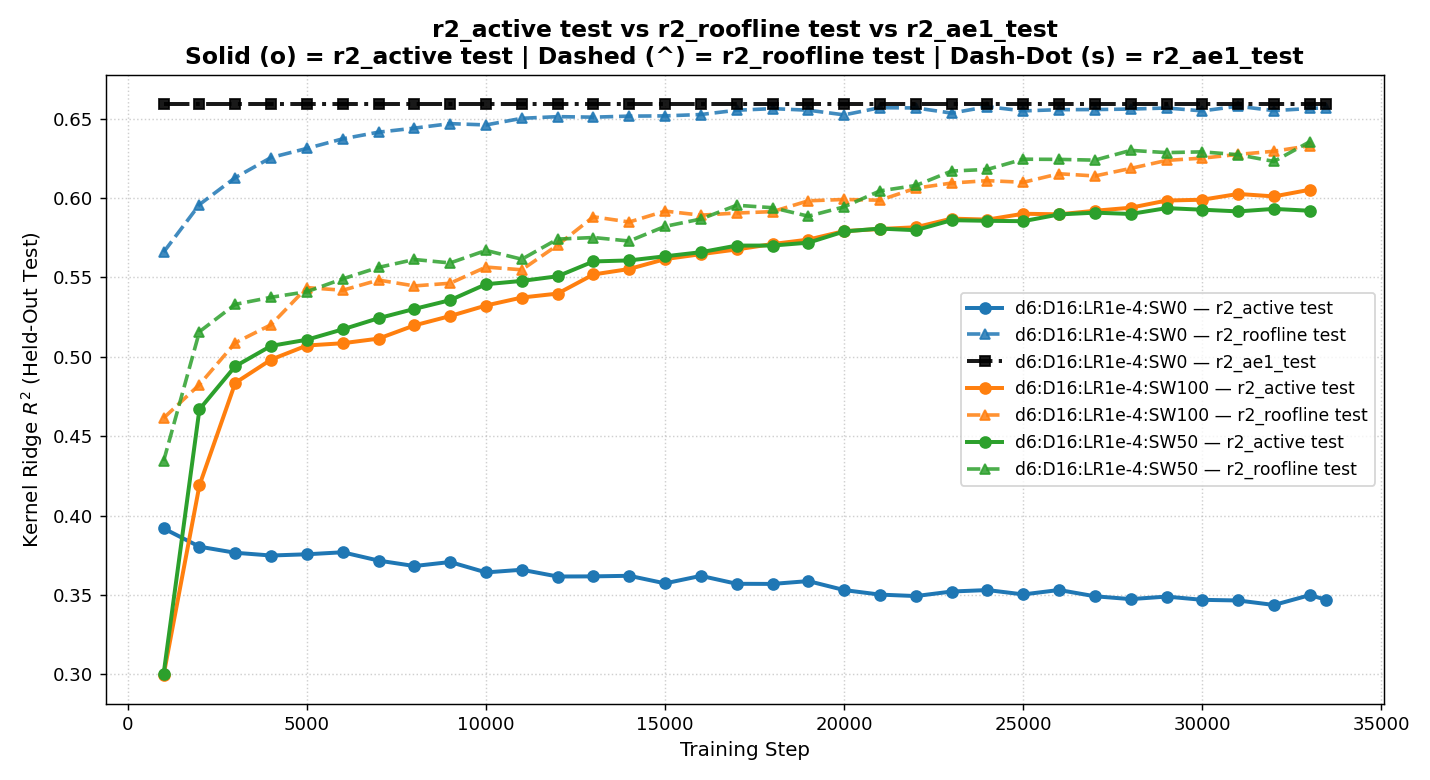}
    \caption{$R^2$ under multi-dominant-joint data ($d_b=16$)}
    \label{fig:r2-vs-lambda-multijoint-db=16}
\end{figure}

In \Cref{fig:r2-vs-lambda-multijoint-db=16}, we provide the equivalent of \Cref{fig:r2-vs-lambda-db=16} under the multi-dominant-joint data. This experiment increases the extent to which each environment departs from an atomic intervention and therefore provides an additional test of whether the block-based objective remains useful when intervention effects are distributed across multiple coordinates.

The results show that the multi-target setting is almost indistinguishable from the single-dominant-target regime. This is consistent with the theoretical motivation, which does not require atomic interventions. Furthermore, it demonstrates that our implementation is not dependent on the single-node-like score signatures in data.

Overall, the finalized sweeps support the same qualitative conclusion as the main embodied results: the score-based block regularizer concentrates substantially more recoverable state information into the designated coordinates than reconstruction alone, while the result remains relatively insensitive to substantial overparameterization of the learned representation.

\section{Assumption-Matched Regime} \label{app:regime_a}

In this section, we additionally report a standard metric in elementwise CRL: Mean Correlation Coefficient (MCC) \citep{khemakhem2020ice}. Furthermore, MSE requires the estimated latent variable estimates to be compared against the ground-truth values. In order to do so, we first calibrate the estimates using a held-out set of 1000 labelled samples using elementwise affine regression.

\subsection{Data Generation}

\paragraph{Observational and Interventional Distributions.} Our inference process is unsupervised, meaning we do not require pose-labeled images. The data needed for inference are generated via interventions, which are applied by manipulating the joints individually and capturing the resulting images. Following \Cref{sec:methodology}, for each joint variable $Z_i$, we have one observational and two interventional distributions, $p$, $q_{i}$, and $\bar{q}_{i}$. Our learning algorithm is oblivious to any metadata or statistics associated with these distributions and learns from a random collection of poses generated before and after interventions. We denote the images drawn from $q_i$ and $\bar{q}_i$ by `0' and `1' respectively. 
Details of these distributions are described next, and exact parameterizations are given in \Cref{app:int_distributions}.

\paragraph{Data Generation.} We generate our dataset using a Franka Emika Panda arm in a Panda-Gym simulator~\citep{gallouedec2021panda}. The arm has six primary joint angles, which are marked in~\Cref{fig:robot_joints_4}. Our data generation process consists of two stages. First, we focus on a setup with a single camera, generating interventions for the joints whose movements are confined to the camera's plane, i.e., the 2D  projection plane, perpendicular to the camera's viewing direction. In the second stage, to accommodate the out-of-plane motions from other joints, which cannot be captured from a single viewpoint, we expand the dataset using images generated by two camera angles. The images in the dataset are converted to grayscale, resulting in a shape of  $128 \times 128 \times 1$ for each image.

\begin{figure}[htbp]
    \centering
    \includegraphics[width=0.3\textwidth]{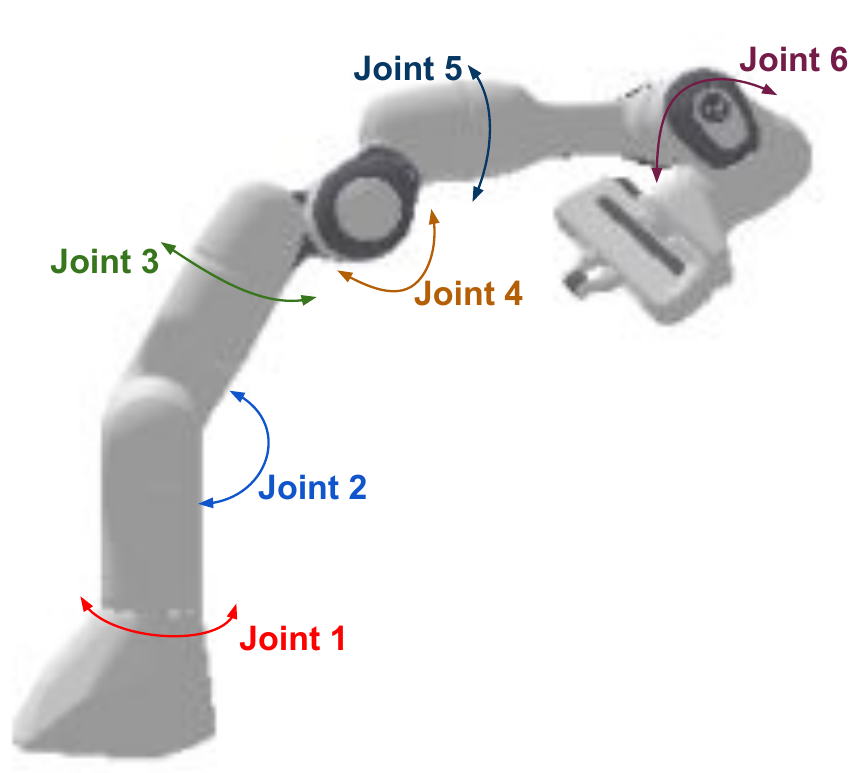}
    \caption{The six primary joint angles of the Franka Emika Panda arm targeted for intervention, together with their respective axes of rotation. Note that the final (7th) joint is kept fixed.}
    \label{fig:robot_joints_4}
\end{figure}

\noindent\emph{Data on In-Plane Joints using a Single Camera.} \label{Single_camera_text}
We first focus on joints 2, 4, and 6, as their movement primarily results in motion within the camera's plane, and a single camera view is sufficient to cover the plane. Each data point in this dataset consists of a set of six interventional images (two for each joint) anchored by a single observational state. This observational state is generated by sampling a configuration for all six joints from a base truncated normal distribution. From this anchor, we create two distinct ``hard interventions'' for each of the target joints (2, 4, and 6). To perform an intervention on a specific joint, its angle is resampled from a distribution with a mean shifted far from the observational mean, while all other joint angles are held constant. This results in one observational image and six interventional images per data point.

\noindent\emph{Data on All Joints using Two Cameras.} \;
To extend our analysis to all six degrees of freedom, we also need to consider joints 1, 3, and 5, whose actuation causes significant out-of-plane motion. To ensure full observability, we captured every pose from two distinct camera angles ($45^{\circ}$ and $135^{\circ}$ yaw). We augment the dataset such that each data point now begins with an observational pose captured from both cameras (2 images). Subsequently, we perform two hard interventions on each of the six joints ($j \in \{1, \dots, 6\}$). Each of these 12 interventional pose distributions is captured by both camera angles. Thus, a single complete data point in this extended dataset consists of 26 images: 2 observational images plus 24 interventional images (6 joints $\times$ 2 interventions/joint $\times$ 2 cameras/pose).

\subsubsection{Independent Interventional Distributions}\label{app:int_distributions}

For the single-camera experiments, a single set of sampling distributions, detailed in \Cref{tab:singleTrunc}, is used for both training and evaluation. As such, the concepts of in-distribution (ID) and out-of-distribution (OOD) do not apply in this case. For the more comprehensive two-camera experiments, we distinguish between ID and OOD data to evaluate model generalization. The ID dataset, which is used for training the Independent and Occlusion models, is generated using the parameters in \Cref{two_camera_distributions}. To test for robustness to distributional shifts, we created a separate OOD test set with modified observational parameters, as shown in \Cref{tab:ood_distributions}. The experiment with causal model over the joints is also evaluated on this OOD test set. We use a truncated normal distribution, denoted by $\mathcal{TN}_{[a,b]}(\mu, \sigma^2)$, which represents a normal distribution with mean $\mu$ and variance $\sigma^2$ truncated to the interval $[a, b]$.s

\begin{table}[ht!]
  \centering
  \small
  \caption{Sampling distributions for observational and interventional settings for single camera setup.}
  \label{tab:singleTrunc}
  \renewcommand{\arraystretch}{1.15}
  \begin{tabular}{@{}llc@{}}
    \toprule
    \textbf{Joint} & \textbf{Scenario} & \textbf{Distribution} \\
    \midrule
    \multirow{3}{*}{2} & Observational  & $\mathcal{TN}_{[-1.5,\,1.5]}(0,\,1)$ \\[2pt]
                       & Intervention~1 & $\mathcal{TN}_{[-1.5,\,1.5]}(-0.75,\,0.5)$ \\
                       & Intervention~2 & $\mathcal{TN}_{[-1.5,\,1.5]}(\;\;0.75,\,0.5)$ \\[2pt]
    \midrule
    \multirow{3}{*}{4} & Observational  & $\mathcal{TN}_{[-1.5,\,1.5]}(0,\,1)$ \\[2pt]
                       & Intervention~1 & $\mathcal{TN}_{[-1.5,\,1.5]}(-0.75,\,0.5)$ \\
                       & Intervention~2 & $\mathcal{TN}_{[-1.5,\,1.5]}(\;\;0.75,\,0.5)$ \\[2pt]
    \midrule
    \multirow{3}{*}{6} & Observational  & $\mathcal{TN}_{[0,\,3]}(1.5,\,1)$ \\[2pt]
                       & Intervention~1 & $\mathcal{TN}_{[0,\,3]}(2.25,\,0.5)$ \\
                       & Intervention~2 & $\mathcal{TN}_{[0,\,3]}(\;\;0.75,\,0.5)$ \\
    \bottomrule
  \end{tabular}
  \label{single_camera_distributions}
\end{table}

\begin{table}[ht!]
  \small
  \centering
  \caption{Sampling distributions for the in-distribution (ID) dataset. These truncated normal distributions define the observational and interventional data used for the two-camera Independent and Occlusion experiments.}
  \label{tab:twoCamera_trunc}
  \renewcommand{\arraystretch}{1.15}
  \begin{tabular}{@{}llc@{}}
    \toprule
    \textbf{Joint} & \textbf{Scenario} & \textbf{Distribution} \\
    \midrule
    \multirow{3}{*}{1} & Observational  & $\mathcal{TN}_{[0,\,3]}(1.2,\,0.4)$ \\[2pt]
                       & Intervention~1 & $\mathcal{TN}_{[0,\,3]}(2.0,\,0.4)$ \\
                       & Intervention~2 & $\mathcal{TN}_{[0,\,3]}(\;\;0.6,\,0.4)$ \\[2pt]
    \midrule
    \multirow{3}{*}{2} & Observational  & $\mathcal{TN}_{[-1.5,\,1.5]}(0,\,0.4)$ \\[2pt]
                       & Intervention~1 & $\mathcal{TN}_{[-1.5,\,1.5]}(0.7,\,0.4)$ \\
                       & Intervention~2 & $\mathcal{TN}_{[-1.5,\,1.5]}(\;\;-0.7,\,0.4)$ \\[2pt]
    \midrule
    \multirow{3}{*}{3} & Observational  & $\mathcal{TN}_{[-1.5,\,1.5]}(0,\,0.4)$ \\[2pt]
                       & Intervention~1 & $\mathcal{TN}_{[-1.5,\,1.5]}(0.7,\,0.4)$ \\
                       & Intervention~2 & $\mathcal{TN}_{[-1.5,\,1.5]}(\;\;-0.7,\,0.4)$ \\[2pt]
    \midrule
    \multirow{3}{*}{4} & Observational  & $\mathcal{TN}_{[-1.5,\,1.5]}(0,\,0.4)$ \\[2pt]
                       & Intervention~1 & $\mathcal{TN}_{[-1.5,\,1.5]}(0.9,\,0.4)$ \\
                       & Intervention~2 & $\mathcal{TN}_{[-1.5,\,1.5]}(\;\;-0.9,\,0.4)$ \\[2pt]
    \midrule
    \multirow{3}{*}{5} & Observational  & $\mathcal{TN}_{[-1.5,\,1.5]}(0,\,0.4)$ \\[2pt]
                       & Intervention~1 & $\mathcal{TN}_{[-1.5,\,1.5]}(0.9,\,0.4)$ \\
                       & Intervention~2 & $\mathcal{TN}_{[-1.5,\,1.5]}(\;\;-0.9,\,0.4)$ \\[2pt]
    \midrule
    \multirow{3}{*}{6} & Observational  & $\mathcal{TN}_{[0,\,3]}(1.5,\,0.4)$ \\[2pt]
                       & Intervention~1 & $\mathcal{TN}_{[0,\,3]}(2.4,\,0.4)$ \\
                       & Intervention~2 & $\mathcal{TN}_{[0,\,3]}(\;\;0.7,\,0.4)$ \\
    \bottomrule
    \label{two_camera_distributions}
  \end{tabular}
\end{table}

\begin{table}[htbp]
\centering
\caption{Observational sampling distributions for the out-of-distribution (OOD) dataset. These parameters define the OOD test sets for the Two-Camera Independent, Causal, and Occlusion experiments. The interventional distributions for the OOD dataset remain identical to those of the in-distribution dataset, as defined in \Cref{two_camera_distributions}.}
\label{tab:ood_distributions}
\renewcommand{\arraystretch}{1.2} 

\begin{tabular}{@{}cc@{}}
\toprule
\textbf{Joint} & \textbf{OOD Observational Distribution} \\
\midrule
1 & $\mathcal{TN}_{[0,\,3]}(1.2,\,0.4)$ \\
2 & $\mathcal{TN}_{[-1.5,\,1.5]}(0.0,\,0.4)$ \\
3 & $\mathcal{TN}_{[-1.5,\,1.5]}(0.0,\,0.4)$ \\
4 & $\mathcal{TN}_{[-1.5,\,1.5]}(0.8,\,0.4)$ \\
5 & $\mathcal{TN}_{[-1.5,\,1.5]}(0.0,\,0.4)$ \\
6 & $\mathcal{TN}_{[0,\,3]}(0.5,\,0.4)$ \\
\bottomrule
\end{tabular}
\end{table}

\subsubsection{Causal Dataset Generation} \label{appendix causal dataset generation}

\begin{figure}[ht]
  \centering  \includegraphics[width=0.4\textwidth]{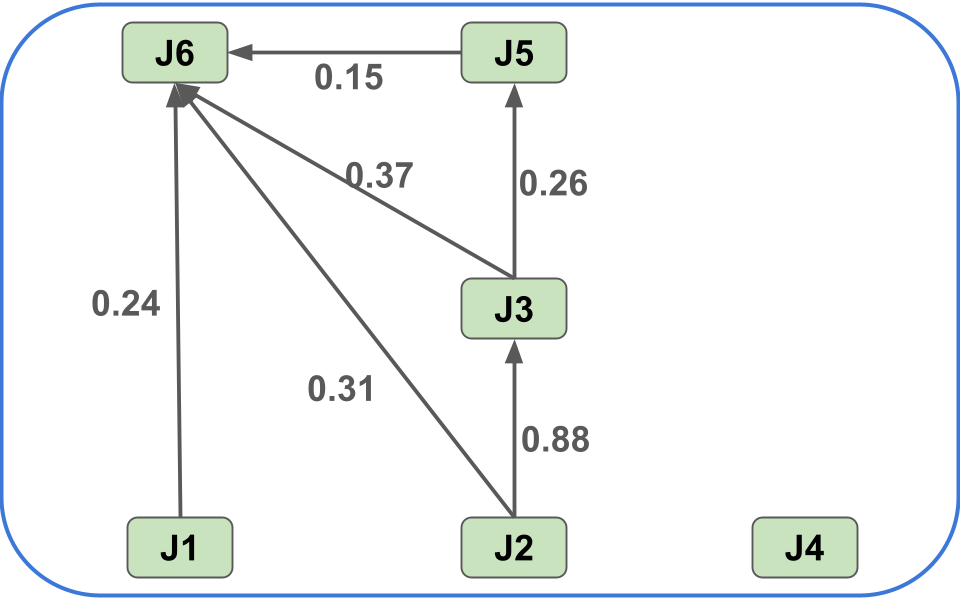}
  \caption{The causal model of the robot joints used to generate the dataset. In this graph, J\textit{i} represents the angle of joint \textit{i}.}
  \label{fig:causal model}
\end{figure}
\begin{table}[ht]
  \small
  \centering
  \caption{Sampling distributions for observational and interventional settings for causal dataset corresponding to the causal graph \Cref{fig:causal model}}
  \label{tab:noise_models}
  \renewcommand{\arraystretch}{1.15}
  \begin{tabular}{@{}llc@{}}
    \toprule
    \textbf{Joint} & \textbf{Scenario} & \textbf{Distribution} \\
    \midrule
    \multirow{3}{*}{1} & Observational  & $\mathcal{TN}_{[0,\,3]}(1.2,\,0.4)$ \\[2pt]
                       & Intervention~1 & $\mathcal{TN}_{[0,\,3]}(2.0,\,0.4)$ \\
                       & Intervention~2 & $\mathcal{TN}_{[0,\,3]}(0.6,\,0.4)$ \\[2pt]
    \midrule
    \multirow{3}{*}{2} & Observational  & $\mathcal{TN}_{[-1.5,\,1.5]}(0,\,0.4)$ \\[2pt]
                       & Intervention~1 & $\mathcal{TN}_{[-1.5,\,1.5]}(0.7,\,0.4)$ \\
                       & Intervention~2 & $\mathcal{TN}_{[-1.5,\,1.5]}(-0.7,\,0.4)$ \\[2pt]
    \midrule
    \multirow{3}{*}{3} & Observational  & Not a root node \\[2pt]
                       & Intervention~1 & $\mathcal{TN}_{[-1.5,\,1.5]}(0.7,\,0.4)$ \\
                       & Intervention~2 & $\mathcal{TN}_{[-1.5,\,1.5]}(-0.7,\,0.4)$ \\[2pt]
    \midrule
    \multirow{3}{*}{4} & Observational  & $\mathcal{TN}_{[-1.5,\,1.5]}(0,\,0.4)$ \\[2pt]
                       & Intervention~1 & $\mathcal{TN}_{[-1.5,\,1.5]}(0.9,\,0.4)$ \\
                       & Intervention~2 & $\mathcal{TN}_{[-1.5,\,1.5]}(-0.9,\,0.4)$ \\[2pt]
    \midrule
    \multirow{3}{*}{5} & Observational  & Not a root node \\[2pt]
                       & Intervention~1 & $\mathcal{TN}_{[-1.5,\,1.5]}(0.9,\,0.4)$ \\
                       & Intervention~2 & $\mathcal{TN}_{[-1.5,\,1.5]}(-0.9,\,0.4)$ \\[2pt]
    \midrule
    \multirow{3}{*}{6} & Observational  & Not a root node  \\[2pt]
                       & Intervention~1 & $\mathcal{TN}_{[0,\,3]}(2.4,\,0.4)$ \\
                       & Intervention~2 & $\mathcal{TN}_{[0,\,3]}(0.7,\,0.4)$ \\
    \bottomrule
    \label{causal_dataset_distributions}
  \end{tabular}
\end{table}

The observational data is generated from a linear structural equation model (SEM) depicted in \Cref{fig:causal model}. Specifically, we first sample the root node joints $J_1,J_2,J_4$ from the relevant distributions in \Cref{tab:noise_models}. Then, values of $\{J_1, \dots, J_6\}$ are determined by linear functions of their ancestors using the weights provided in \Cref{fig:causal model} plus noise $(\epsilon_1,\ldots,\epsilon_6)$ sampled from $\mathcal{TN}_{[0,\,1]}(0.0,\,0.1)$ distribution.
\begin{alignat*}{4}
  & J_1 = J_1 + \epsilon_1 & \qquad & J_3 = 0.88 J_2 + \epsilon_3 \\
  & J_2 = J_2 + \epsilon_2 & \qquad & J_5 = 0.26 J_3 + \epsilon_5 \\
  & J_4 = J_4 + \epsilon_4 & \qquad & J_6 = 0.24 J_1 + 0.31 J_2 + 0.37 J_3 + 0.15 J_5 + \epsilon_6
\end{alignat*}
An intervention on a variable $J_k$ is denoted by setting it to a new value $J'_k$ sampled from the interventional distribution of choice in \Cref{tab:noise_models} --- this works because we consider stochastic hard interventions that sets the interventional values directly equal to the exogenous noise variables. Next, we provide the equations that describe the system's behavior for interventions on each joint.

\paragraph{Intervention on joint 1}
Under $do(J_1 \coloneqq J'_1)$: The equation for $J_6$ depends on the new value $J'_1$ and others being unaffected and $J_1$ takes the value $J'_1$.
\begin{align*}
    J_6 = 0.24 J'_1 + 0.31 J_2 + 0.37 J_3 + 0.15 J_5
\end{align*}

\paragraph{Intervention on joint 2}
 Under $do(J_2 \coloneqq J'_2)$: The equations for the descendants of $J_2$ are updated as given below. 
\begin{align*}
    J_3 &= 0.88 J'_2 + \epsilon_3 \\
    J_5 &= 0.26 J_3 + \epsilon_5 \\
    J_6 &= 0.24 J_1 + 0.31 J'_2 + 0.37 J_3 + 0.15 J_5 + \epsilon_6
\end{align*}

\paragraph{Intervention on joint 3}
Under $do(J_3 \coloneqq J'_3)$: The descendants of $J_3$ are affected.
\begin{align*}
    J_5 &= 0.26 J'_3 + \epsilon_5 \\
    J_6 &= 0.24 J_1 + 0.31 J_2 + 0.37 J'_3 + 0.15 J_5 + \epsilon_6
\end{align*}

\paragraph{Intervention on joint 4}
Under $do(J_4 \coloneqq J'_4)$: No downstream variables are affected by an intervention on $J_4$, suggesting it is a root node with no children in this system.

\paragraph{Intervention on joint 5}
Under $do(J_5 \coloneqq J'_5)$: The equation for $J_6$ is updated.
\begin{align*}
    J_6 = 0.24 J_1 + 0.31 J_2 + 0.37 J_3 + 0.15 J'_5 + \epsilon_6
\end{align*}

\paragraph{Intervention on joint 6}
Under $do(J_6 \coloneqq J'_6)$: No changes occur in any other variables except $J_6$, as $J_6$ is a sink node (it does not cause other variables in the system).

\subsection{Training Details} \label{appendix:training_details}

All models were trained on TPUs. We performed a hyperparameter search for the optimal learning rate, testing values in the range of $10^{-7}$ to $10^{-3}$. Our dataset consists of 10,000 observational images. We generated corresponding interventional images for two experimental setups: In the single-camera setup, each observational image yields 6 interventional images. In the two-camera setup, each observational image yields 24 interventional images. The training process involved three stages. First, Autoencoder-1 was trained on 70k images (single-camera) and 260k images (two-camera) with a batch size of 256. Second, we trained a separate binary classifier network (i.e., the log-ratio density estimator) for each joint, using 20k samples (single-camera) and 40k samples (two-camera) with a batch size of 64. Finally, for each joint, we trained a corresponding Autoencoder-2 alongside its specific LDR. Autoencoder-2 is trained on the same number of samples as Autoencoder-1 for both single camera and two camera setup. 

In the main set of results, we use Adam optimizer with learning rates: AE1 with $1 \times 10^{-4}$, LDR with $1 \times 10^{-3}$, and AE2 with $7 \times 10^{-5}$.

\subsection{Additional Quantitative Results} \label{appendix:regime_a_results}

\begin{table}[t]
\centering
\caption{MCC and MSE in assumption-matched regime. MSE is reported in radians squared.}
\label{results_crl_main_3dp_abbrev}
\begin{threeparttable}
\footnotesize 
\setlength{\tabcolsep}{2.3pt} 

\begin{tabular}{@{} 
    >{\raggedright}m{2.2cm} 
    S[table-format=1.3] S[table-format=1.3] @{\hskip 1.1em} 
    S[table-format=1.3] S[table-format=1.3] @{\hskip 1.1em} 
    S[table-format=1.3] S[table-format=1.3] @{\hskip 1.1em} 
    S[table-format=1.3] S[table-format=1.3] @{\hskip 1.1em} 
    S[table-format=1.3] S[table-format=1.3] @{\hskip 1.1em} 
    S[table-format=1.3] S[table-format=1.3] @{}}
\toprule
& \multicolumn{2}{c}{\textbf{Joint 1}} & \multicolumn{2}{c}{\textbf{Joint 2}} & \multicolumn{2}{c}{\textbf{Joint 3}} & \multicolumn{2}{c}{\textbf{Joint 4}} & \multicolumn{2}{c}{\textbf{Joint 5}} & \multicolumn{2}{c}{\textbf{Joint 6}} \\
\midrule
\textbf{Model Setup} & {\textbf{MCC}} & {\textbf{MSE}} & {\textbf{MCC}} & {\textbf{MSE}} & {\textbf{MCC}} & {\textbf{MSE}} & {\textbf{MCC}} & {\textbf{MSE}} & {\textbf{MCC}} & {\textbf{MSE}} & {\textbf{MCC}} & {\textbf{MSE}} \\
\midrule
1C, indep. & {--} & {--} & 0.949 & 0.053 & {--} & {--} & 0.975 & 0.029 & {--} & {--} & 0.957 & 0.049 \\
\midrule
2C, indep. & 0.874 & 0.083 & 0.979 & 0.015 & 0.634 & 0.217 & 0.950 & 0.035 & 0.679 & 0.198 & 0.884 & 0.080 \\
\midrule
2C, causal & 0.921 & 0.058 & 0.966 & 0.020 & 0.788 & 0.106 & 0.976 & 0.019 & 0.742 & 0.051 & 0.756 & 0.070 \\
\midrule
2C, indep., occl. & 0.844 & 0.101 & 0.964 & 0.025 & 0.568 & 0.245 & 0.884 & 0.082 & 0.617 & 0.225 & 0.768 & 0.145 \\
\bottomrule
\end{tabular}
\begin{tablenotes}
\item 1C = one camera; 2C = two cameras; indep. and causal = joints distribution; occl. = occlusion(32x32).
\end{tablenotes}
\end{threeparttable}
\end{table}

\subsubsection{Single-Camera: Independent Joints}\label{sec:single-camera-independent}

We begin by evaluating the model in a single-camera setting, where joint angles are sampled \emph{independently} according to the distributions specified in \Cref{single_camera_distributions}. As this setup restricts visibility, we evaluate performance only on the three in-plane joints (2, 4, and 6) discussed in~\Cref{Single_camera_text}. \Cref{results_crl_main_3dp_abbrev} (first row) shows that MCC is consistently at least $0.94$. Remarkably, this disentanglement performance is stronger than that of the prior experiments in CRL studies in much simpler image datasets~(e.g., see \citep[Table 14]{varici2025score}). A qualitative analysis of the model's reconstruction performance is provided in \Cref{fig:scatter_plot_singleAngle}, presenting the scatter plots of the learned latent variables against the ground-truth angles. 

\begin{figure}[t]
    \centering
    

    \hfill
    \begin{subfigure}[b]{0.32\textwidth}
        \centering
        \includegraphics[width=\textwidth]{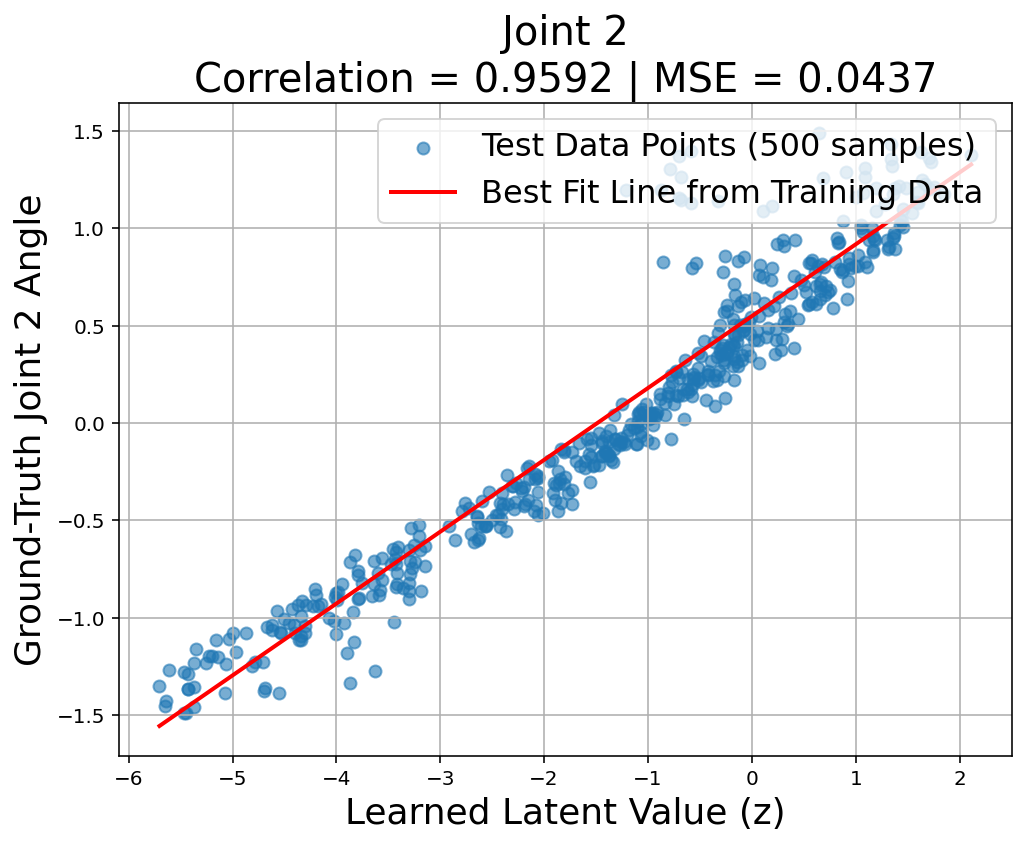}
        \caption{Joint 2}
        \label{fig:single_angle_mcc_plots:mcc_joint2}
    \end{subfigure}%
    \hfill
    \begin{subfigure}[b]{0.32\textwidth}
        \centering
        \includegraphics[width=\textwidth]{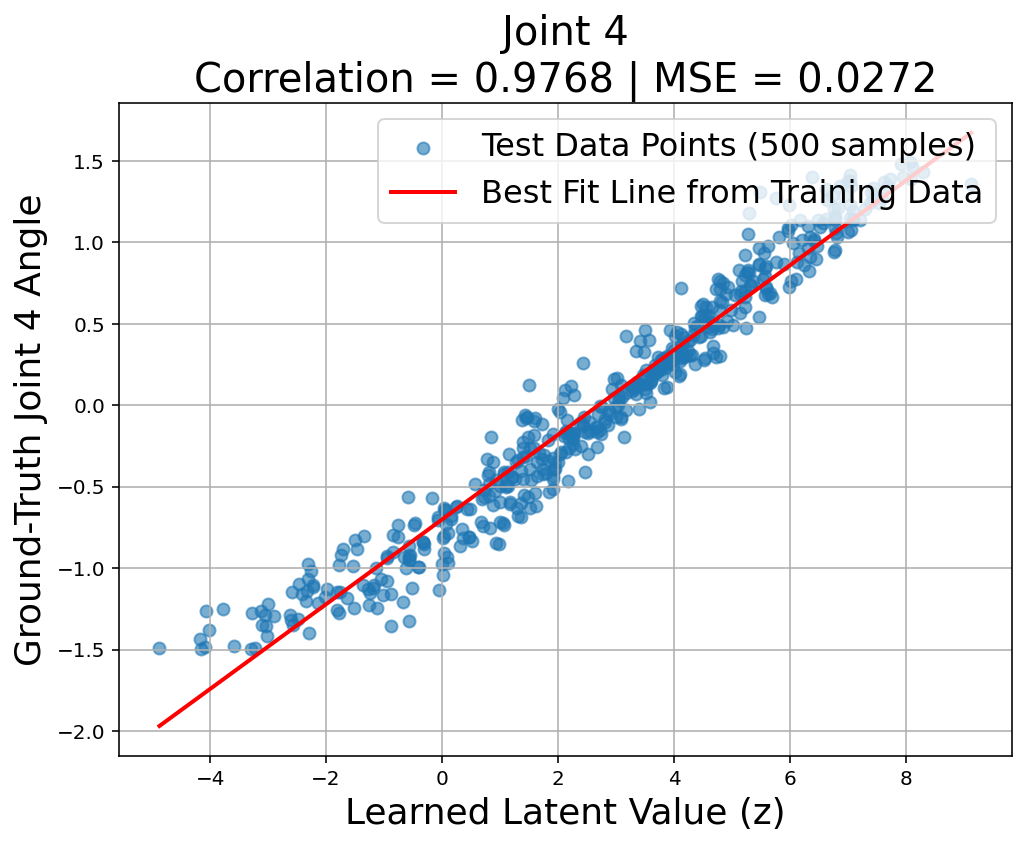}
        \caption{Joint 4}
        \label{fig:single_angle_mcc_plots:mcc_joint4}
    \end{subfigure}%
    \hfill
    \begin{subfigure}[b]{0.32\textwidth}
        \centering
        \includegraphics[width=\textwidth]{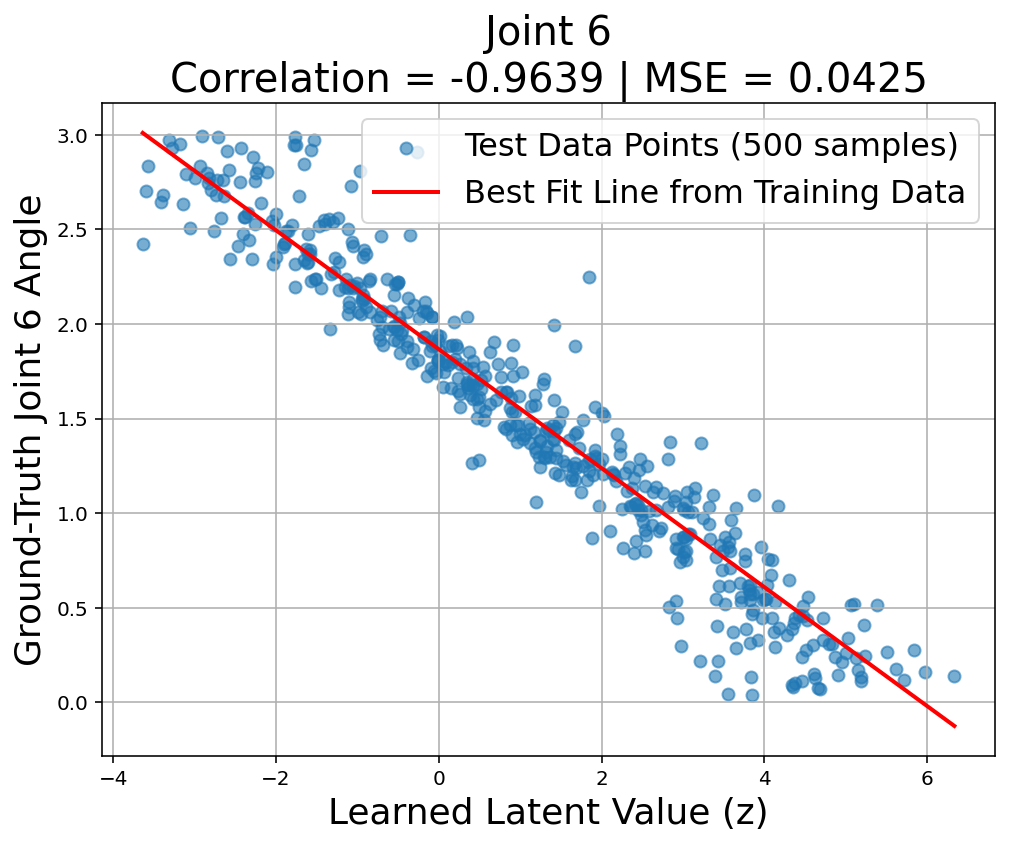}
        \caption{Joint 6}
        \label{fig:single_angle_mcc_plots:mcc_joint6}
    \end{subfigure}%
    \hfill\ %
    
    \caption{Single Camera: Scatter-plots of ground-truth vs. estimated angles for joints 2, 4, and 6}
    \label{fig:scatter_plot_singleAngle}
\end{figure}


\subsubsection{Extending to Multiple Views: Two Cameras with Independent Joints}
We extend our analysis to a more complex two-camera setup, which provides the multi-view information necessary to disentangle all six robot joints. The joint angles are again sampled \emph{independently} (see \Cref{two_camera_distributions}). As shown in \Cref{results_crl_main_3dp_abbrev} (second row) and the scatter plots in \Cref{fig:scatter_plot_twoAngle_independent}, the model successfully disentangles all six joints. However, we observe a performance drop for joints 3 and 5 compared to the others. We attribute this performance discrepancy to less precise score estimates from the LDR network, an interpretation supported by the fact that these two joints exhibited a relatively larger classification loss during LDR training. This suggests that the images of the hard interventions for joints 3 and 5 are less distinct, making them inherently more challenging to classify. Notably, the MCC scores for joints 2, 4, and 6 remain robust when transitioning from the single-camera to the two-camera setup. 
The stability of these scores suggests that the score-based method scales effectively, leveraging the multi-view information. Reconstructed images from the AE1 and AE2 are shown in the \Cref{fig:recon_comparison_2_cameraAngles}.

\subsubsection{Two-Camera with Causal Model over Joints}
To assess whether our method can learn \emph{causal} latents, which is the core premise of CRL, we generate a new dataset where joint angles are sampled according to a linear causal model described in \Cref{appendix causal dataset generation}. The structure and edge weights were chosen randomly, with joints~1, 2, and 4 set as root nodes. We report the results in \Cref{results_crl_main_3dp_abbrev} (third row). We observe that introducing a causal data generation process generally improves performance, with MSE values decreasing across most joints compared to the independent two-camera model. Overall, this experiment demonstrates the flexibility of the score-based approach to accommodate causal interactions among the joints.

\subsubsection{Scatter Plots} 
To provide a qualitative assessment of our models' disentanglement capabilities, we visualize the relationship between the learned latent variables and their corresponding ground-truth joint angles. \Cref{fig:scatter_plot_twoAngle_independent,fig:scatter_plot_twoAngle_causal} present these scatter plots for the two-camera Independent and Causal models, respectively. Each plot is generated from the single best trial (out of 15) for that specific joint, determined by the highest MCC score. This visualization of the best-case performance complements the aggregated statistics reported in our main results tables.

\begin{figure}[ht!]
    \centering 
    
    \begin{subfigure}[b]{0.32\textwidth}
        \centering
        \includegraphics[width=\textwidth]{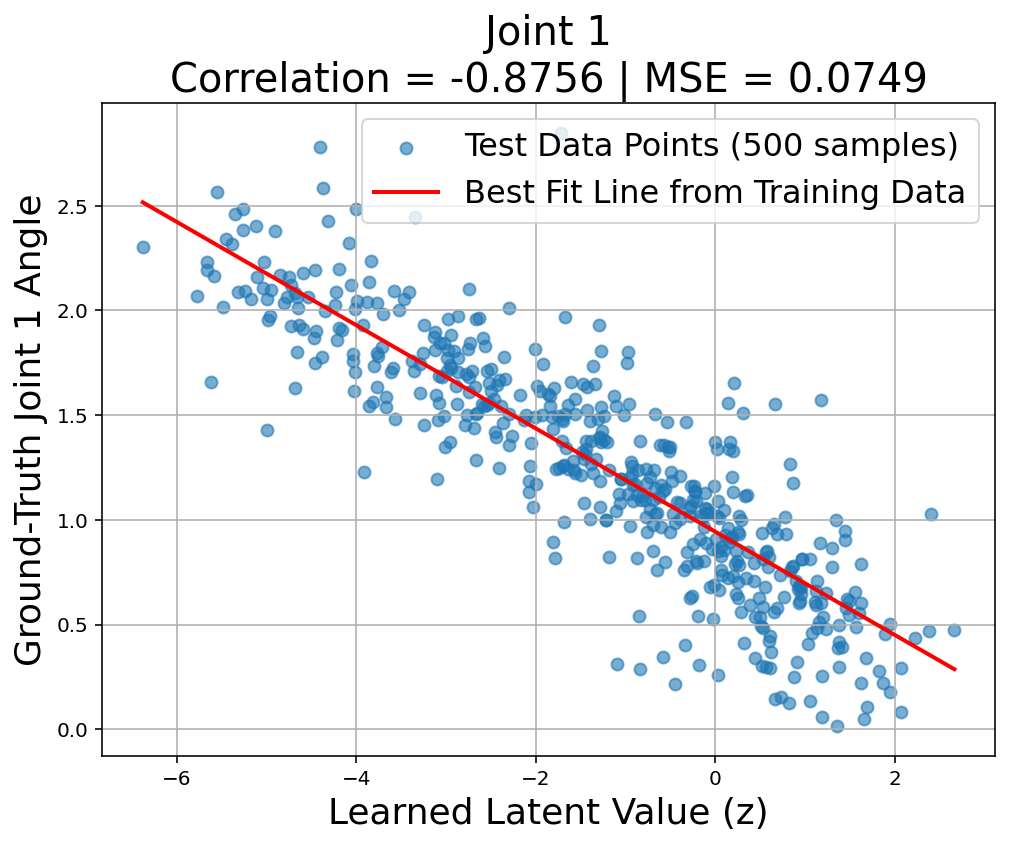}
        \caption{Joint 1}
        \label{fig:mcc_plots_2_cameraAngles:mcc_joint1}
    \end{subfigure}
    \hfill 
    \begin{subfigure}[b]{0.32\textwidth}
        \centering
        \includegraphics[width=\textwidth]{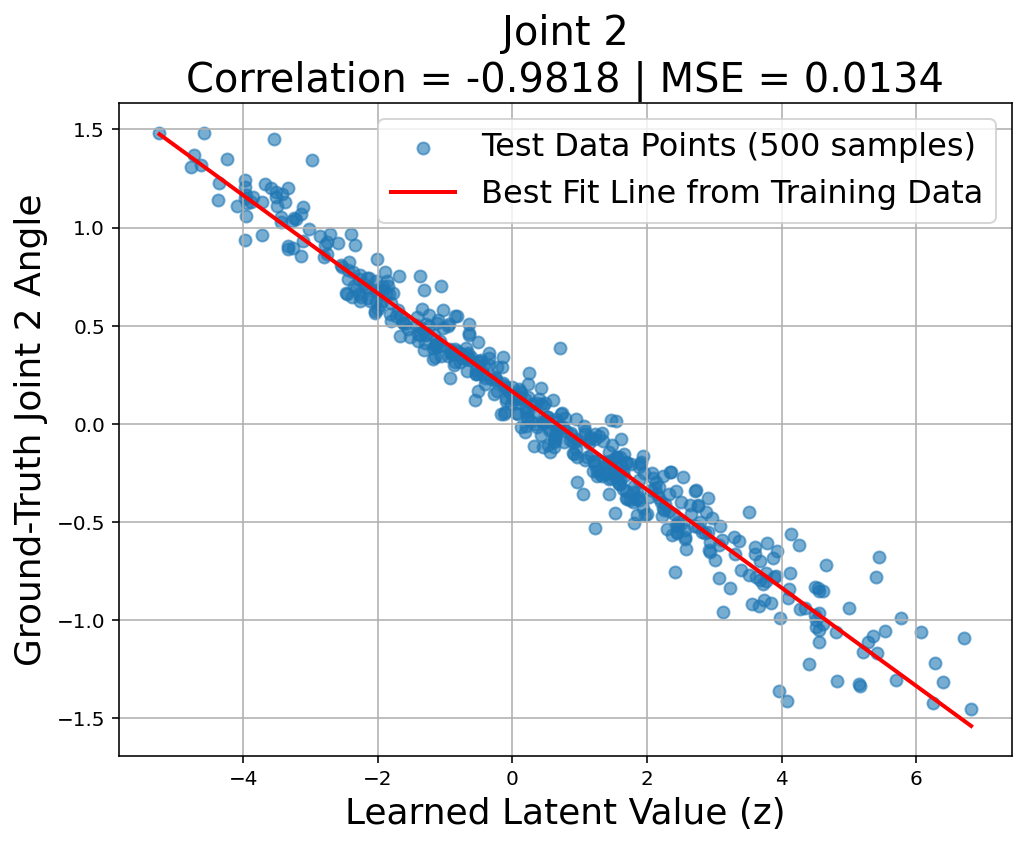}
        \caption{Joint 2}
        \label{fig:mcc_plots_2_cameraAngles:mcc_joint2}
    \end{subfigure}
    \hfill 
    \begin{subfigure}[b]{0.32\textwidth}
        \centering
        \includegraphics[width=\textwidth]{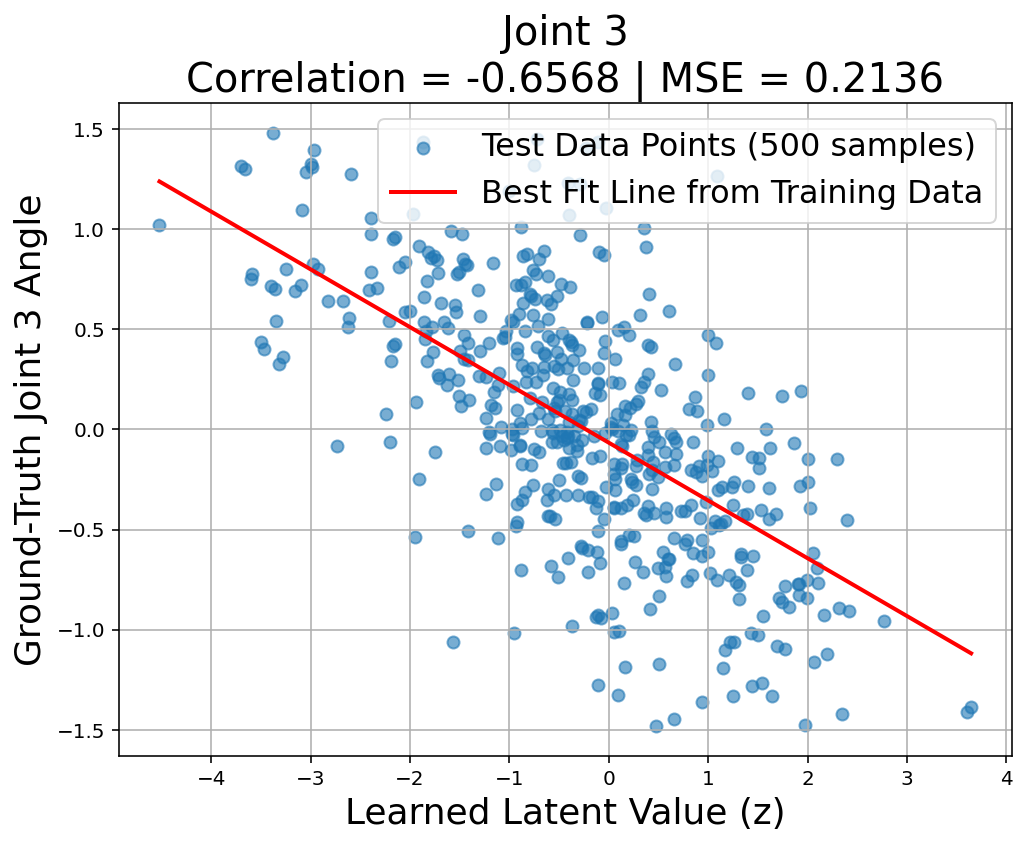}
        \caption{Joint 3}
        \label{fig:mcc_plots_2_cameraAngles:mcc_joint3}
    \end{subfigure}
    

    \begin{subfigure}[b]{0.3\textwidth}
        \centering
        \includegraphics[width=\textwidth]{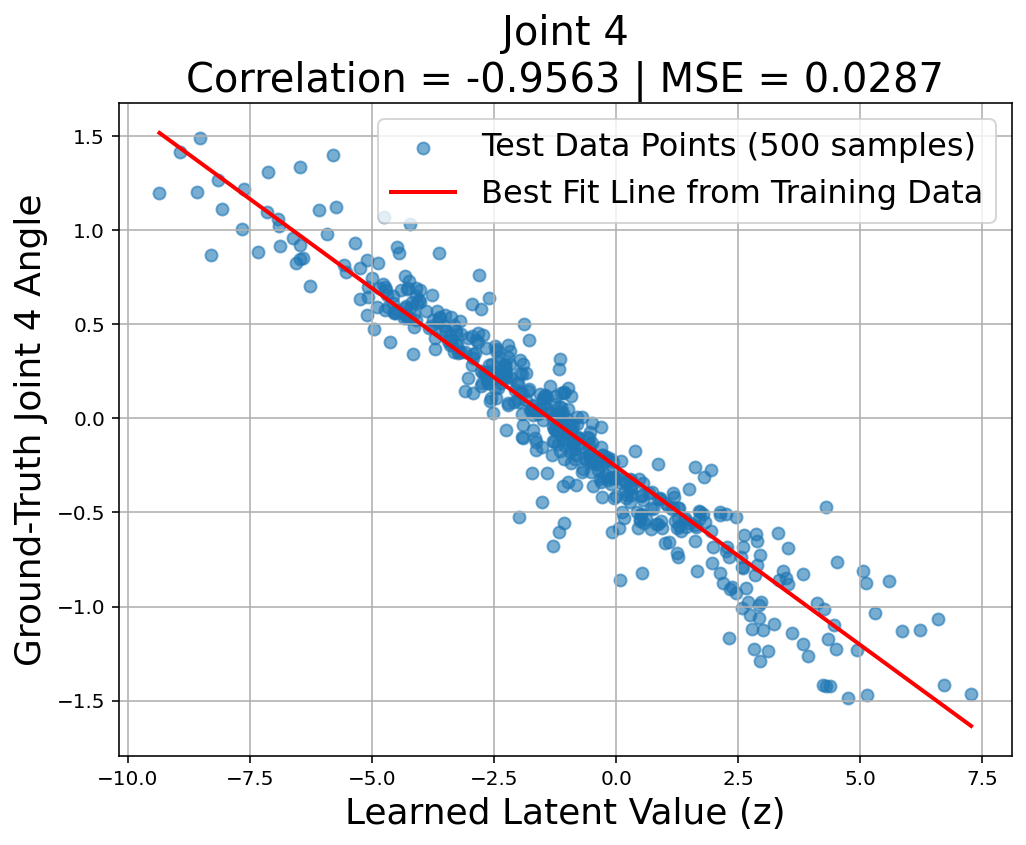}
        \caption{Joint 4}
        \label{fig:mcc_plots_2_cameraAngles:mcc_joint4}
    \end{subfigure}
    \hfill 
    \begin{subfigure}[b]{0.3\textwidth}
        \centering
        \includegraphics[width=\textwidth]{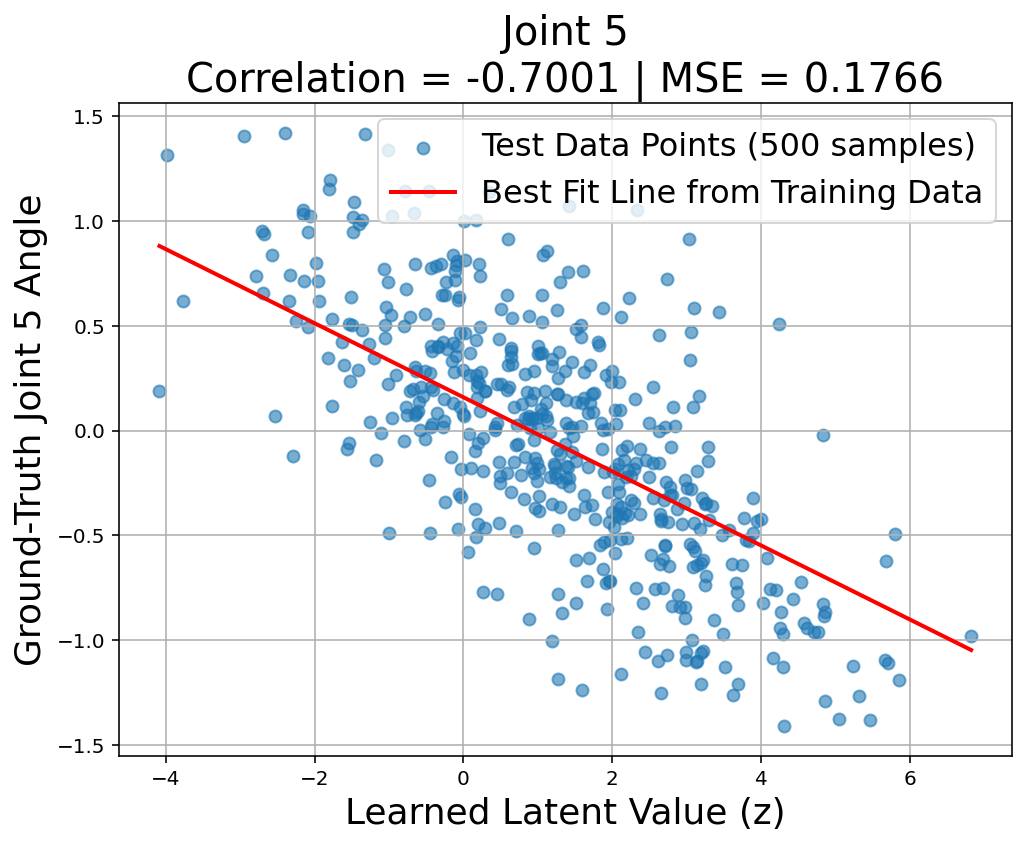}
        \caption{Joint 5}
        \label{fig:mcc_plots_2_cameraAngles:mcc_joint5}
    \end{subfigure}
    \hfill 
    \begin{subfigure}[b]{0.3\textwidth}
        \centering
        \includegraphics[width=\textwidth]{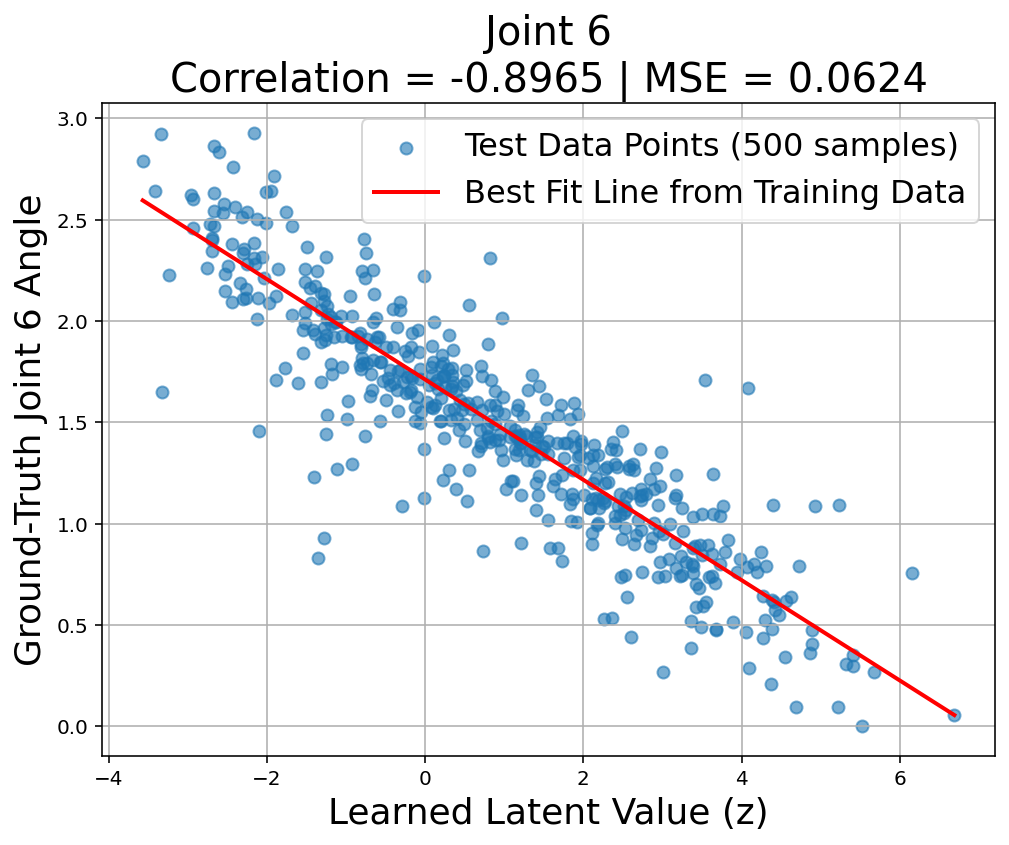}
        \caption{Joint 6}
        \label{fig:mcc_plots_2_cameraAngles:mcc_joint6}
    \end{subfigure}

    \caption{Scatter plots evaluating the two-camera Independent model on the in-distribution (ID) test set (\Cref{two_camera_distributions}) . Each plot visualizes the relationship between a learned latent variable and its corresponding ground-truth joint angle. The displayed Correlation and MSE values correspond to the single best trial out of 15 runs, while the results presented in \Cref{results_crl_main_3dp_abbrev,tab:transposed_results_rounded} correspond to the mean statistics.}
    \label{fig:scatter_plot_twoAngle_independent}
\end{figure}

\begin{figure}[ht!]
    \centering 
    
    \begin{subfigure}[b]{0.32\textwidth}
        \centering
        \includegraphics[width=\textwidth]{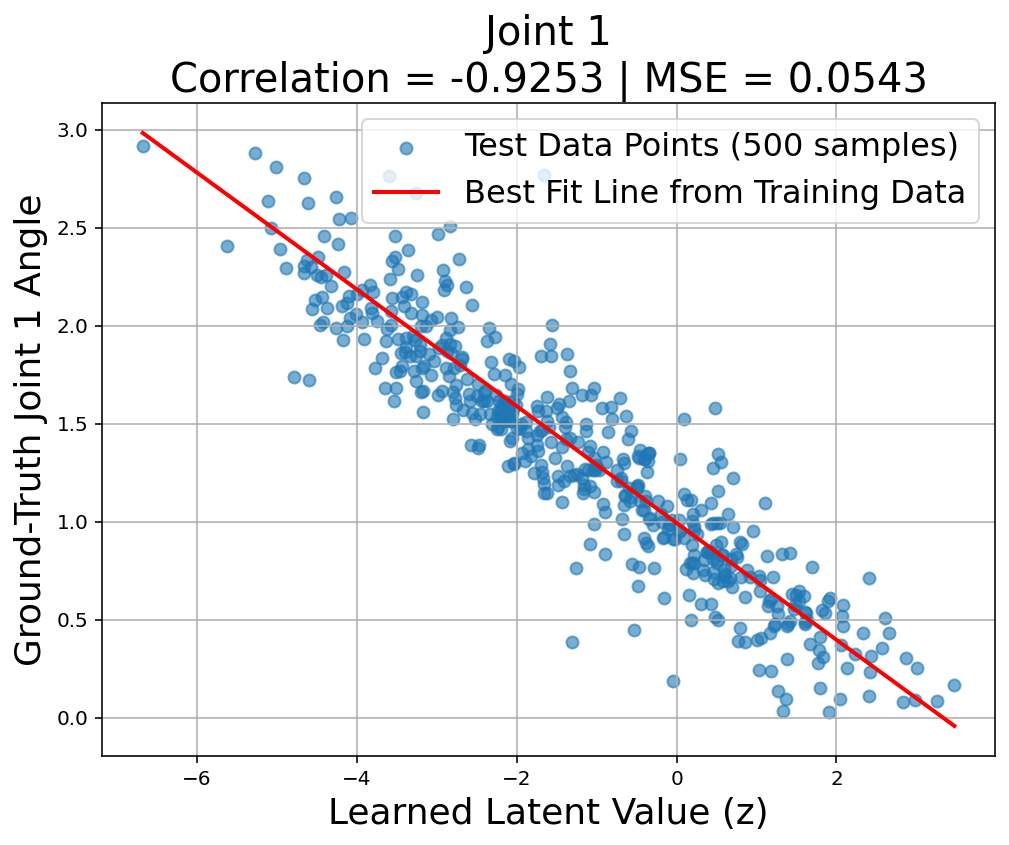}
        \caption{Joint 1}
        \label{fig:mcc_plots_2_cameraAngles:mcc_joint1_causal}
    \end{subfigure}
    \hfill 
    \begin{subfigure}[b]{0.32\textwidth}
        \centering
        \includegraphics[width=\textwidth]{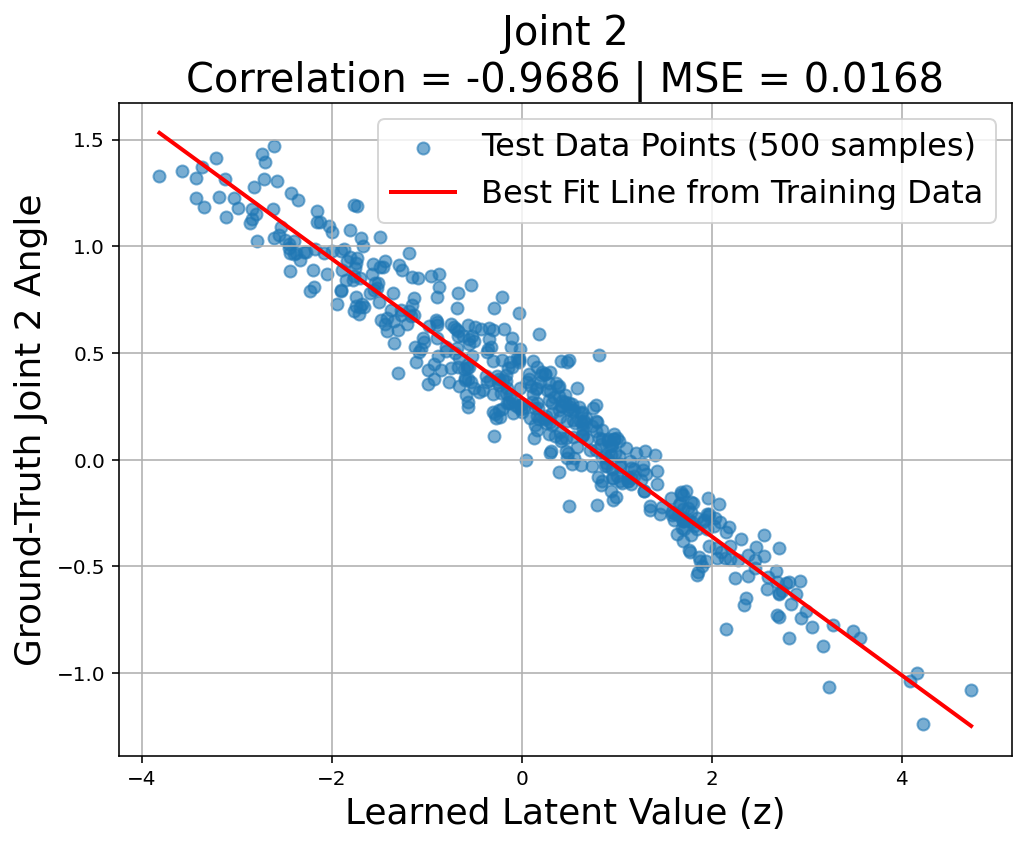}
        \caption{Joint 2}
        \label{fig:mcc_plots_2_cameraAngles:mcc_joint2_causal}
    \end{subfigure}
    \hfill 
    \begin{subfigure}[b]{0.32\textwidth}
        \centering
        \includegraphics[width=\textwidth]{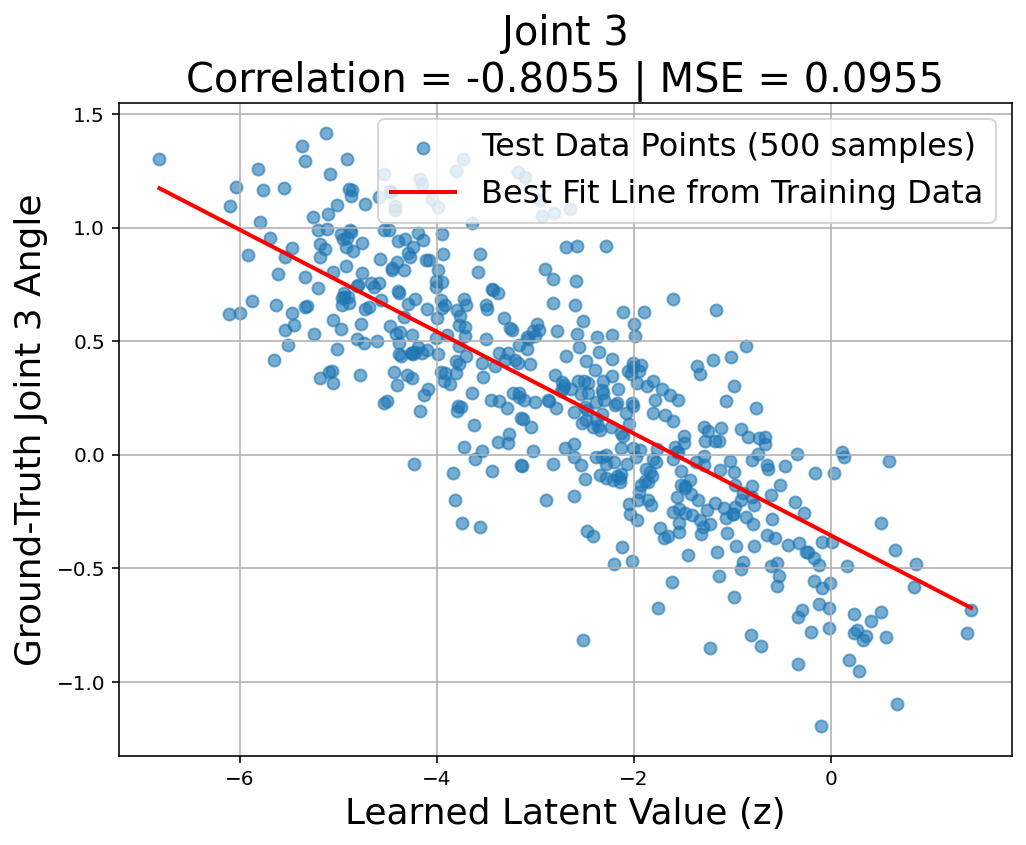}
        \caption{Joint 3}
        \label{fig:mcc_plots_2_cameraAngles:mcc_joint3_causal}
    \end{subfigure}
    

    \begin{subfigure}[b]{0.3\textwidth}
        \centering
        \includegraphics[width=\textwidth]{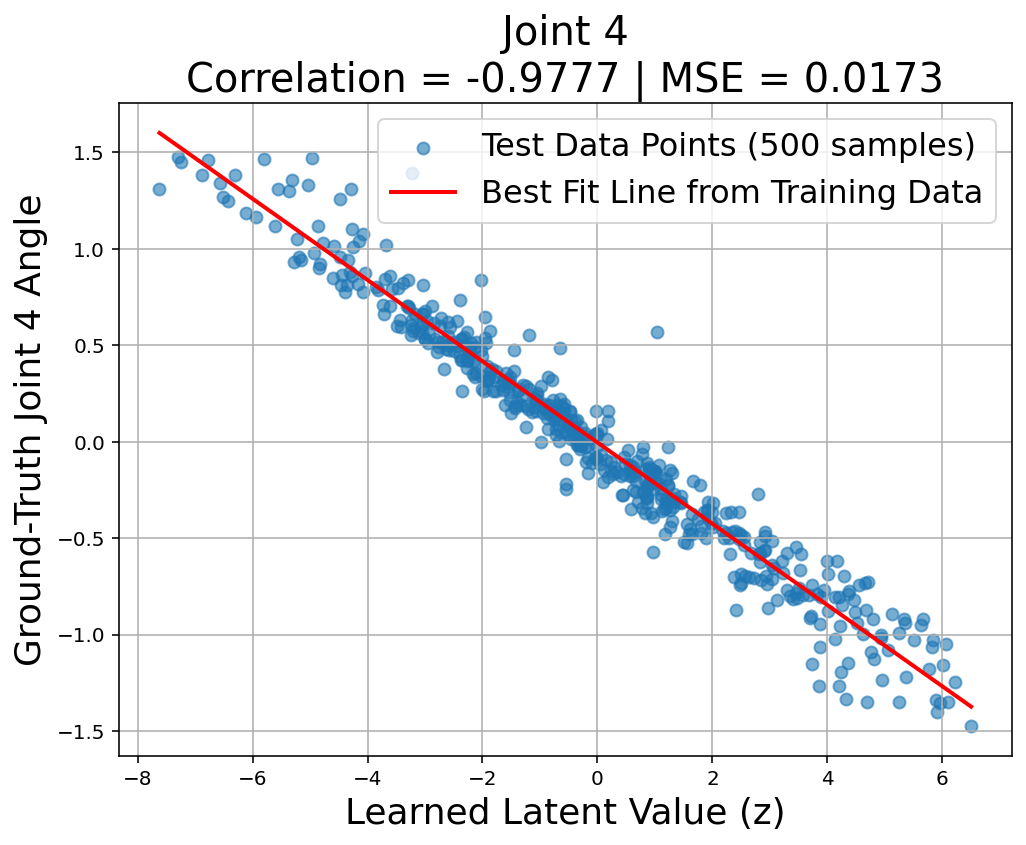}
        \caption{Joint 4}
        \label{fig:mcc_plots_2_cameraAngles:mcc_joint4_causal}
    \end{subfigure}
    \hfill 
    \begin{subfigure}[b]{0.3\textwidth}
        \centering
        \includegraphics[width=\textwidth]{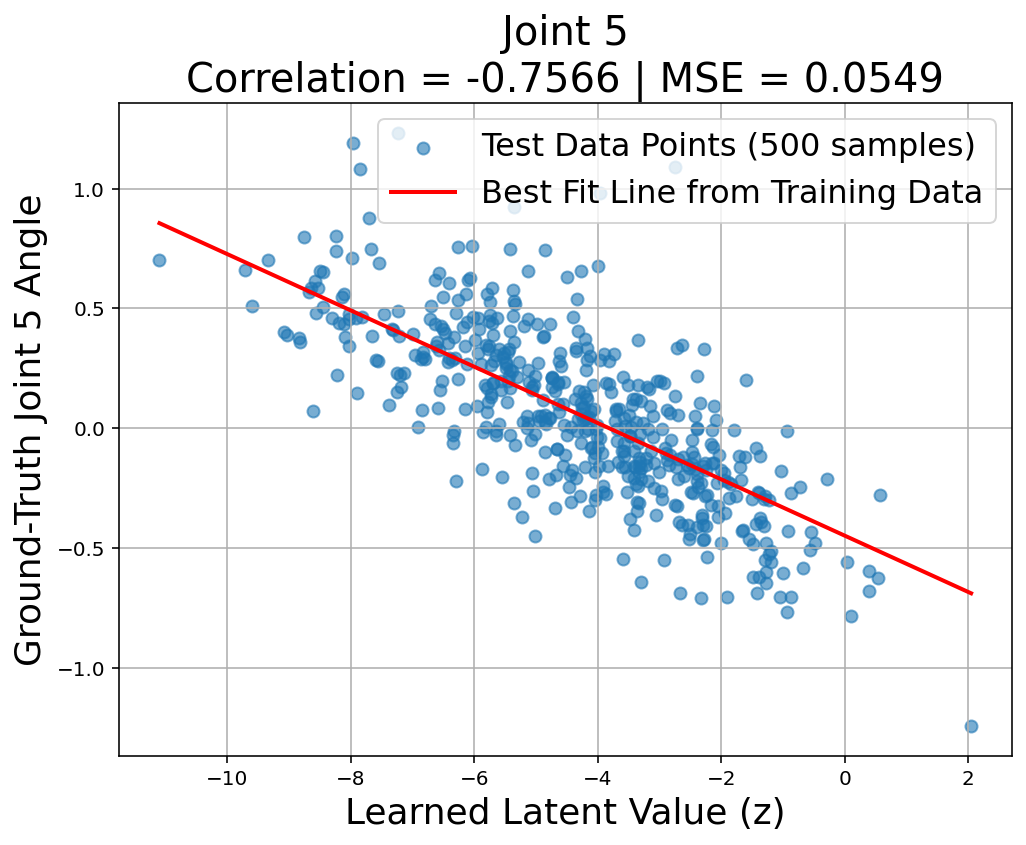}
        \caption{Joint 5}
        \label{fig:mcc_plots_2_cameraAngles:mcc_joint5_causal}
    \end{subfigure}
    \hfill 
    \begin{subfigure}[b]{0.3\textwidth}
        \centering
        \includegraphics[width=\textwidth]{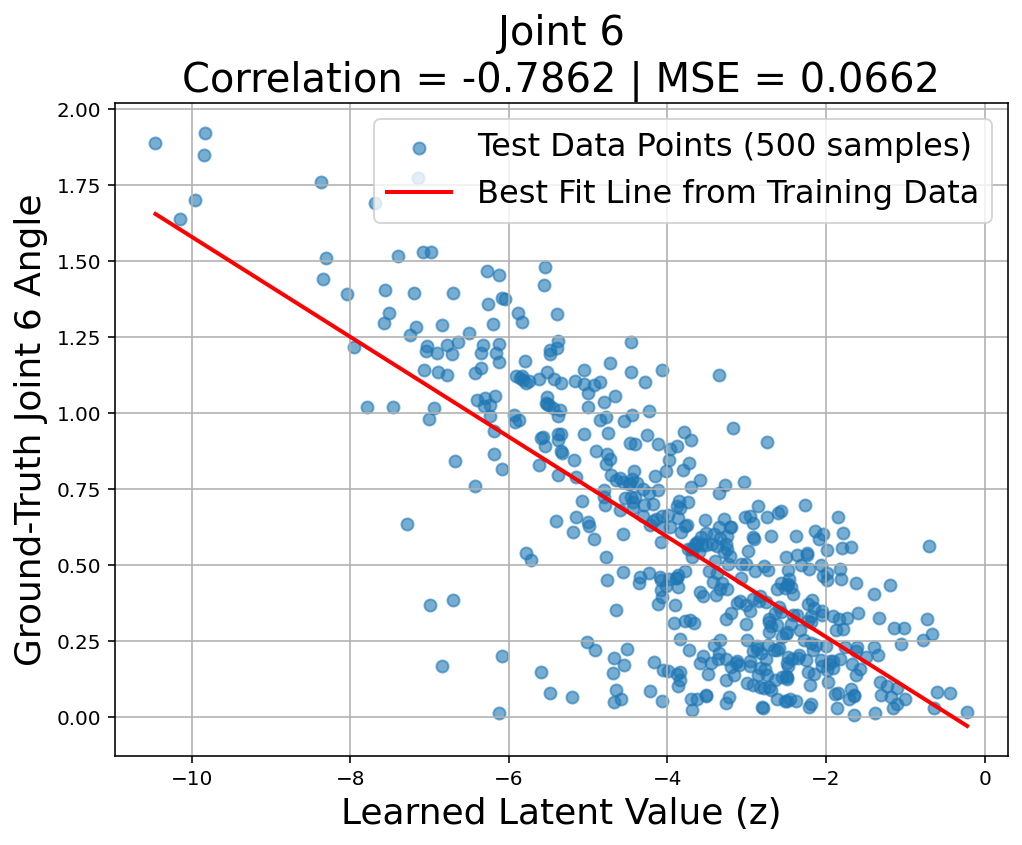}
        \caption{Joint 6}
        \label{fig:mcc_plots_2_cameraAngles:mcc_joint6_causal}
    \end{subfigure}

    \caption{Scatter plots evaluating the two-camera causal model on the causally-generated test set. Each plot visualizes the relationship between a learned latent variable and its corresponding ground-truth joint angle. The displayed Correlation and MSE values correspond to the single best trial out of 15 runs, while the results presented in \Cref{results_crl_main_3dp_abbrev,tab:transposed_results_rounded} correspond to the mean statistics.}
    \label{fig:scatter_plot_twoAngle_causal}
\end{figure}

\begin{table}[htbp]
\centering
\sisetup{table-format=1.3} 
\caption{Mean Squared Error (MSE) in radians squared ($\text{rad}^2$) for each joint under various experimental conditions for two camera angles. The table compares performance on in-distribution (ID) test set (\Cref{two_camera_distributions}) and out-of-distribution (OOD) test set (\Cref{tab:ood_distributions}) for independent, causal, and occluded inference models.}
\label{tab:mse_ablation_study}
\small 

\begin{tabular}{@{} ccccccccc @{}}
\toprule
\textbf{Experiment} & \textbf{Distribution} & \textbf{Patch Size} & {\textbf{Joint 1}} & {\textbf{Joint 2}} & {\textbf{Joint 3}} & {\textbf{Joint 4}} & {\textbf{Joint 5}} & {\textbf{Joint 6}} \\
\midrule
Independent  & ID     & {--} & 0.083 & 0.015 & 0.217 & 0.035 & 0.198 & 0.080 \\
Causal    & ID    & {--} & 0.058 & 0.020 & 0.106 & 0.019 & 0.051 & 0.070 \\
Occlusion & ID    & 16   & 0.089 & 0.018 & 0.225 & 0.044 & 0.200 & 0.082 \\
Occlusion & ID    & 32   & 0.101 & 0.025 & 0.245 & 0.082 & 0.225 & 0.145 \\
Occlusion & ID    & 64   & 0.186 & 0.077 & 0.322 & 0.258 & 0.298 & 0.273 \\
\midrule
Independent  & OOD      & {--} & 0.084 & 0.017 & 0.239 & 0.048 & 0.199 & 0.120 \\
Causal    & OOD    & {--} & 0.108 & 0.044 & 0.241 & 0.085 & 0.219 & 0.116 \\
Occlusion  & OOD   & 16   & 0.092 & 0.019 & 0.249 & 0.054 & 0.205 & 0.109 \\
Occlusion  & OOD   & 32   & 0.103 & 0.024 & 0.288 & 0.079 & 0.226 & 0.140 \\
Occlusion  & OOD   & 64   & 0.201 & 0.087 & 0.356 & 0.227 & 0.331 & 0.216 \\
\bottomrule
\end{tabular}
\end{table}

\begin{table}[htbp]
\centering
\caption{Comparison of MCC and MSE ($\text{rad}^2$) for each joint across two model settings with error bars. Mean and Std Dev are calculated across the 15 runs. Values are reported as Mean ± Std Dev.}
\label{tab:transposed_results_rounded}
\begin{threeparttable}
\footnotesize 
\setlength{\tabcolsep}{5pt} 
\begin{tabular}{@{} c c c @{\hskip 1em} c c @{}}
\toprule
& \multicolumn{2}{c}{\textbf{2C, indep.}} & \multicolumn{2}{c}{\textbf{2C, causal}} \\
\midrule
\textbf{Joint Angle} & {\textbf{MCC}} & {\textbf{MSE}} & {\textbf{MCC}} & {\textbf{MSE}} \\
\midrule
Joint 1 & $0.874 \pm 0.004$ & $0.083 \pm 0.006$ & $0.921 \pm 0.003$ & $0.058 \pm 0.003$ \\
Joint 2 & $0.979 \pm 0.001$ & $0.015 \pm 0.001$ & $0.966 \pm 0.002$ & $0.020 \pm 0.001$ \\
Joint 3 & $0.634 \pm 0.010$ & $0.217 \pm 0.011$ & $0.788 \pm 0.003$ & $0.106 \pm 0.005$ \\
Joint 4 & $0.950 \pm 0.002$ & $0.035 \pm 0.002$ & $0.976 \pm 0.001$ & $0.019 \pm 0.001$ \\
Joint 5 & $0.679 \pm 0.010$ & $0.198 \pm 0.011$ & $0.742 \pm 0.006$ & $0.051 \pm 0.003$ \\
Joint 6 & $0.884 \pm 0.005$ & $0.080 \pm 0.007$ & $0.756 \pm 0.010$ & $0.070 \pm 0.003$ \\
\bottomrule
\end{tabular}
\begin{tablenotes}[flushleft]
\item \footnotesize 2C = two cameras; indep. and causal refer to the joint distribution model.
\end{tablenotes}
\end{threeparttable}
\end{table}

\subsection{Supervised Comparison and Robustness}

\paragraph{RoboPEPP: Label Supervision Baseline.}
For our experiments, we also compare against a recent supervised baseline. Specifically, we use RoboPEPP~\citep{goswami2025robopepp}, which is among the few alternatives that can predict pose with unknown joint angles at test time (along with HPE~\citep{ban2024real} and RoboPOSE~\citep{labbe2021single}), and reports stronger results than the alternatives. RoboPEPP involves a two-stage training process. First, a self-supervised I-JEPA backbone is pre-trained on the entire independent ID training dataset in \Cref{tab:twoCamera_trunc}. In the second stage, a ``JointNet'' is trained to predict six joint angles. Its input is formed by concatenating the I-JEPA embeddings from two camera angles, and it is optimized using an $L_2$ loss with the ground-truth joint angle labels. We observe that the MSE of the RoboPEPP baseline decreases with increasing size of labeled data, with our model achieving a comparable performance on joints 1, 2, 4, and 6 to RoboPEPP when using labels of 5\% (approx. 6.5K labels and 2 camera orientation per pose = 13K samples) of the entire dataset for training.

We pre-train a ViT-Huge model using the I-JEPA objective on 128x128x3 images. The core masking strategy involves one large context block (85-100\% scale) and four smaller, non-overlapping target blocks (15-20\% scale). The model is trained for 10 epochs with a batch size of 32, using a cosine learning rate schedule with a peak learning rate of $1e-3$. The target encoder's weights are updated with an exponential moving average (EMA). Notably, we only use random resized cropping for data augmentation, disabling strong augmentations like color jitter and horizontal flipping. Training is accelerated with bfloat16 mixed-precision.Our training is a two-stage process. In the first stage, we pre-train a ViT-Huge backbone using the I-JEPA self-supervised objective on the full dataset, comprising 260k images, as described previously. For the second stage, we fine-tune our JointNet model for a robotic perception task on the Panda robot. This supervised training uses the smaller RoboPEPP dataset (1\%, 5\%, 10\%, 100\%) with a batch size of 8. The model is trained for (100, 75, 25, 15) epochs respectively using a one-cycle learning rate schedule with a maximum learning rate of $1e-4$ and a weight decay of $1e-5$. \Cref{tab:mse_results_resized,tab:mse_ablation_study,tab:transposed_results_rounded,tab:mse_results_id,tab:mse_results_ood} show a detailed comparison of the our and RoboPEPP performance.

\begin{table}[htbp]
\centering
\label{robopepp_appendix}
\caption{Per-joint Mean Squared Error (MSE) for the RoboPEPP model in radians squared ($\text{rad}^2$). The table presents an ablation study on the effect of varying training data labels, evaluated on both in-distribution (ID) test set (\Cref{two_camera_distributions}) and out-of-distribution (OOD) test set (\Cref{tab:ood_distributions}).}
\label{tab:mse_results_resized}

\resizebox{\textwidth}{!}{%
\begin{tabular}{@{}cccc cccccc@{}} 
\toprule
\textbf{Dataset} & \textbf{Distribution} & \textbf{Epochs} & \textbf{Patch Size} & \textbf{Joint 1} & \textbf{Joint 2} & \textbf{Joint 3} & \textbf{Joint 4} & \textbf{Joint 5} & \textbf{Joint 6} \\
\midrule
2.6k & ID & 100 & - & 0.136 & 0.039 & 0.237 & 0.053 & 0.253 & 0.200 \\
2.6k & ID & 100 & 16 & 0.166 & 0.030 & 0.287 & 0.072 & 0.309 & 0.218 \\
2.6k & ID & 100 & 32 & 0.186 & 0.060 & 0.305 & 0.125 & 0.331 & 0.263 \\
2.6k & ID & 100 & 64 & 0.412 & 0.358 & 0.462 & 1.328 & 0.515 & 0.573 \\
\midrule
2.6k & OOD & 100 & - & 0.202 & 0.054 & 0.374 & 0.101 & 0.378 & 0.454 \\
2.6k & OOD & 100 & 16 & 0.220 & 0.058 & 0.413 & 0.117 & 0.412 & 0.432 \\
2.6k & OOD & 100 & 32 & 0.235 & 0.085 & 0.480 & 0.164 & 0.346 & 0.461 \\
2.6k & OOD & 100 & 64 & 0.496 & 0.334 & 0.538 & 1.156 & 0.529 & 0.570 \\
\midrule \midrule
13k & ID & 75 & - & 0.075 & 0.017 & 0.134 & 0.024 & 0.153 & 0.081 \\
13k & ID & 75 & 16 & 0.081 & 0.021 & 0.155 & 0.040 & 0.189 & 0.111 \\
13k & ID & 75 & 32 & 0.140 & 0.050 & 0.304 & 0.198 & 0.299 & 0.232 \\
13k & ID & 75 & 64 & 0.446 & 0.474 & 0.648 & 2.088 & 0.976 & 0.501 \\
\midrule
13k & OOD & 75 & - & 0.083 & 0.023 & 0.202 & 0.063 & 0.275 & 0.225 \\
13k & OOD & 75 & 16 & 0.127 & 0.040 & 0.304 & 0.088 & 0.321 & 0.388 \\
13k & OOD & 75 & 32 & 0.162 & 0.058 & 0.445 & 0.299 & 0.582 & 0.718 \\
13k & OOD & 75 & 64 & 0.489 & 0.412 & 0.489 & 1.926 & 1.348 & 1.255 \\
\midrule \midrule
26k & ID & 25 & - & 0.030 & 0.010 & 0.072 & 0.022 & 0.091 & 0.063 \\
26k & ID & 25 & 16 & 0.034 & 0.011 & 0.071 & 0.037 & 0.093 & 0.073 \\
26k & ID & 25 & 32 & 0.077 & 0.036 & 0.194 & 0.180 & 0.181 & 0.136 \\
26k & ID & 25 & 64 & 0.375 & 0.380 & 0.529 & 2.808 & 0.524 & 1.474 \\
\midrule
26k & OOD & 25 & - & 0.064 & 0.020 & 0.125 & 0.037 & 0.173 & 0.152 \\
26k & OOD & 25 & 16 & 0.050 & 0.019 & 0.126 & 0.055 & 0.197 & 0.146 \\
26k & OOD & 25 & 32 & 0.101 & 0.066 & 0.230 & 0.262 & 0.308 & 0.202 \\
26k & OOD & 25 & 64 & 0.394 & 0.297 & 0.497 & 2.475 & 0.582 & 1.028 \\
\midrule \midrule
260k & ID & 15 & - & 0.003 & 0.001 & 0.007 & 0.003 & 0.010 & 0.011 \\
260k & ID & 15 & 16 & 0.016 & 0.004 & 0.026 & 0.007 & 0.030 & 0.011 \\
260k & ID & 15 & 32 & 0.066 & 0.036 & 0.097 & 0.053 & 0.090 & 0.045 \\
260k & ID & 15 & 64 & 0.268 & 0.517 & 0.593 & 1.891 & 0.478 & 0.430 \\
\midrule
260k & OOD & 15 & - & 0.007 & 0.003 & 0.013 & 0.004 & 0.018 & 0.019 \\
260k & OOD & 15 & 16 & 0.029 & 0.008 & 0.059 & 0.014 & 0.081 & 0.028 \\
260k & OOD & 15 & 32 & 0.130 & 0.060 & 0.210 & 0.088 & 0.201 & 0.125 \\
260k & OOD & 15 & 64 & 0.262 & 0.447 & 0.649 & 1.437 & 0.465 & 0.616 \\
\bottomrule
\end{tabular}%
} 
\end{table}

\begin{table}[htbp]
\centering
\caption{Comparison of In-Distribution (ID) Mean Squared Error (MSE) in radians squared ($\text{rad}^2$) for the our score-based method and RoboPEPP models. Results are shown per joint under various experimental conditions.}
\label{tab:mse_results_id}
\small 
\sisetup{table-format=1.3} 
\begin{tabular}{@{} ll c *{6}{S} @{}}
\toprule
\textbf{Model} & \textbf{Experiment} & \textbf{Patch Size} & {\textbf{Joint 1}} & {\textbf{Joint 2}} & {\textbf{Joint 3}} & {\textbf{Joint 4}} & {\textbf{Joint 5}} & {\textbf{Joint 6}} \\
\midrule
\multirow{5}{*}{Our method} & Independent & {--} & 0.083 & 0.015 & 0.217 & 0.035 & 0.198 & 0.080 \\
& Causal      & {--} & 0.058 & 0.020 & 0.106 & 0.019 & 0.051 & 0.070 \\
& Occlusion   & 16   & 0.089 & 0.018 & 0.225 & 0.044 & 0.200 & 0.082 \\
& Occlusion   & 32   & 0.101 & 0.025 & 0.245 & 0.082 & 0.225 & 0.145 \\
& Occlusion   & 64   & 0.186 & 0.077 & 0.322 & 0.258 & 0.298 & 0.273 \\
\midrule
\multirow{16}{*}{RoboPEPP} & 2.6k Dataset & {--} & 0.136 & 0.039 & 0.237 & 0.053 & 0.253 & 0.200 \\
& 2.6k Dataset & 16   & 0.166 & 0.030 & 0.287 & 0.072 & 0.309 & 0.218 \\
& 2.6k Dataset & 32   & 0.186 & 0.060 & 0.305 & 0.125 & 0.331 & 0.263 \\
& 2.6k Dataset & 64   & 0.412 & 0.358 & 0.462 & 1.328 & 0.515 & 0.573 \\
\cmidrule{2-9}
& 13k Dataset  & {--} & 0.075 & 0.017 & 0.134 & 0.024 & 0.153 & 0.081 \\
& 13k Dataset  & 16   & 0.081 & 0.021 & 0.155 & 0.040 & 0.189 & 0.111 \\
& 13k Dataset  & 32   & 0.140 & 0.050 & 0.304 & 0.198 & 0.299 & 0.232 \\
& 13k Dataset  & 64   & 0.446 & 0.474 & 0.648 & 2.088 & 0.976 & 0.501 \\
\cmidrule{2-9}
& 26k Dataset  & {--} & 0.030 & 0.010 & 0.072 & 0.022 & 0.091 & 0.063 \\
& 26k Dataset  & 16   & 0.034 & 0.011 & 0.071 & 0.037 & 0.093 & 0.073 \\
& 26k Dataset  & 32   & 0.077 & 0.036 & 0.194 & 0.180 & 0.181 & 0.136 \\
& 26k Dataset  & 64   & 0.375 & 0.380 & 0.529 & 2.808 & 0.524 & 1.474 \\
\cmidrule{2-9}
& 260k Dataset & {--} & 0.003 & 0.001 & 0.007 & 0.003 & 0.010 & 0.011 \\
& 260k Dataset & 16   & 0.016 & 0.004 & 0.026 & 0.007 & 0.030 & 0.011 \\
& 260k Dataset & 32   & 0.066 & 0.036 & 0.097 & 0.053 & 0.090 & 0.045 \\
& 260k Dataset & 64   & 0.268 & 0.517 & 0.593 & 1.891 & 0.478 & 0.430 \\
\bottomrule
\end{tabular}
\end{table}

\begin{table}[htbp]
\centering
\caption{Comparison of Out-of-Distribution (OOD) Mean Squared Error (MSE) in radians squared ($\text{rad}^2$) for our method and RoboPEPP. Results are shown per joint under various experimental conditions.}
\label{tab:mse_results_ood}
\small 
\sisetup{table-format=1.3} 
\begin{tabular}{@{} ll c *{6}{S} @{}}
\toprule
\textbf{Model} & \textbf{Experiment} & \textbf{Patch Size} & {\textbf{Joint 1}} & {\textbf{Joint 2}} & {\textbf{Joint 3}} & {\textbf{Joint 4}} & {\textbf{Joint 5}} & {\textbf{Joint 6}} \\
\midrule
\multirow{5}{*}{Our method} & Independent & {--} & 0.084 & 0.017 & 0.239 & 0.048 & 0.199 & 0.120 \\
& Causal      & {--} & 0.108 & 0.044 & 0.241 & 0.085 & 0.219 & 0.116 \\
& Occlusion   & 16   & 0.092 & 0.019 & 0.249 & 0.054 & 0.205 & 0.109 \\
& Occlusion   & 32   & 0.103 & 0.024 & 0.288 & 0.079 & 0.226 & 0.140 \\
& Occlusion   & 64   & 0.201 & 0.087 & 0.356 & 0.227 & 0.331 & 0.216 \\
\midrule
\multirow{16}{*}{RoboPEPP} & 2.6k Dataset & {--} & 0.202 & 0.054 & 0.374 & 0.101 & 0.378 & 0.454 \\
& 2.6k Dataset & 16   & 0.220 & 0.058 & 0.413 & 0.117 & 0.412 & 0.432 \\
& 2.6k Dataset & 32   & 0.235 & 0.085 & 0.480 & 0.164 & 0.346 & 0.461 \\
& 2.6k Dataset & 64   & 0.496 & 0.334 & 0.538 & 1.156 & 0.529 & 0.570 \\
\cmidrule{2-9}
& 13k Dataset  & {--} & 0.083 & 0.023 & 0.202 & 0.063 & 0.275 & 0.225 \\
& 13k Dataset  & 16   & 0.127 & 0.040 & 0.304 & 0.088 & 0.321 & 0.388 \\
& 13k Dataset  & 32   & 0.162 & 0.058 & 0.445 & 0.299 & 0.582 & 0.718 \\
& 13k Dataset  & 64   & 0.489 & 0.412 & 0.489 & 1.926 & 1.348 & 1.255 \\
\cmidrule{2-9}
& 26k Dataset  & {--} & 0.064 & 0.020 & 0.125 & 0.037 & 0.173 & 0.152 \\
& 26k Dataset  & 16   & 0.050 & 0.019 & 0.126 & 0.055 & 0.197 & 0.146 \\
& 26k Dataset  & 32   & 0.101 & 0.066 & 0.230 & 0.262 & 0.308 & 0.202 \\
& 26k Dataset  & 64   & 0.394 & 0.297 & 0.497 & 2.475 & 0.582 & 1.028 \\
\cmidrule{2-9}
& 260k Dataset & {--} & 0.007 & 0.003 & 0.013 & 0.004 & 0.018 & 0.019 \\
& 260k Dataset & 16   & 0.029 & 0.008 & 0.059 & 0.014 & 0.081 & 0.028 \\
& 260k Dataset & 32   & 0.130 & 0.060 & 0.210 & 0.088 & 0.201 & 0.125 \\
& 260k Dataset & 64   & 0.262 & 0.447 & 0.649 & 1.437 & 0.465 & 0.616 \\
\bottomrule
\end{tabular}
\end{table}

\paragraph{Robustness to Occlusion.}

Finally, we evaluate the score-based method's robustness to corruptions using the trained two-camera model with independent joints. At test time, we introduce artificial occlusions in the form of 32x32 white pixel squares into the input images, and report results for our method and RoboPEPP in \Cref{tab:mse_results_resized}. Our method was not trained on data containing occlusions or any infilling task. Despite this, our method demonstrates superior robustness to occlusions at test time. This is evident from the MSE of joints 2 and 4, where the performance degradation for our method is significantly less than for RoboPEPP when occlusions are introduced. Furthermore, our method maintains a lower overall MSE on these joints at both 10\% and 100\% training data labels, highlighting its robustness to this unseen perturbation. 
\Cref{fig:reconstruction_occlusion} visualizes the process. While the initial reconstruction from AE1 clearly shows the occluded patch, the final reconstruction from AE2 successfully inpaints the missing region by leveraging the learned disentangled representation. \Cref{results_crl_main_3dp_abbrev} (fourth row) presents the quantitative results for this condition, demonstrating strong performance despite the corruption. A detailed analysis of model performance across varying occlusion sizes is provided in \Cref{tab:mse_ablation_study}.

We conclude that AE1 is tasked with high-quality reconstruction, while AE2's loss function sparsifies the score difference. To capture out-of-plane rotations missed by a single camera, an additional viewpoint was essential. This lowered the LDR loss cross-entropy, allowing the model to successfully differentiate all six joint movements.

\begin{figure}[t]

    \centering
    
    \hfill
    \begin{subfigure}[b]{0.2\textwidth}
        \centering
        \includegraphics[width=\textwidth]{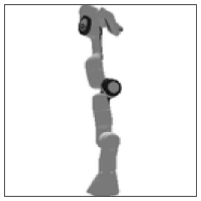}
        \caption{The original input image to the system.}
        \label{fig:four_figs_1}
    \end{subfigure}%
    \hfill
    \begin{subfigure}[b]{0.2\textwidth}
        \centering
        \includegraphics[width=\textwidth]{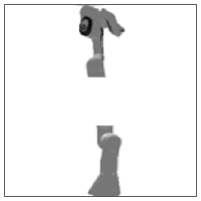}
        \caption{Input with $32\times32$ pixel white occlusion.}
        \label{fig:four_figs_2}
    \end{subfigure}%
    \hfill
    \begin{subfigure}[b]{0.2\textwidth}
        \centering
        \includegraphics[width=\textwidth]{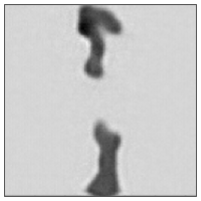}
        \caption{Reconstruction from autoencoder-1.}
        \label{fig:four_figs_3}
    \end{subfigure}%
    \hfill
    \begin{subfigure}[b]{0.2\textwidth}
        \centering
        \includegraphics[width=\textwidth]{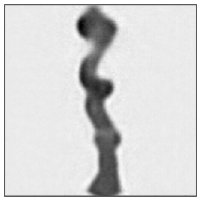}
        \caption{Reconstruction from autoencoder-2}
        \label{fig:four_figs_4}
    \end{subfigure}%
    \hfill\ %
    
    \caption{A step-by-step visualization of the reconstruction process for an occluded input. The final reconstruction from autoencoder-2, generated by passing its output through the decoder of autoencoder-1, successfully inpaints the occluded region.}
    \label{fig:reconstruction_occlusion}
\end{figure}

\subsection{Ablations}

We conducted a series of experiments to find the proper training setting for our score-based method. Specifically, we investigated the effect of the learning rate, batch size, and weights of the loss components ($\lambda=3$ in \Cref{eq:loss_function}) on the model's ability to disentangle the latent features, as measured by the MCC score. All experiments in this section were performed using Autoencoder-2 with a 2-camera setup on the independent dataset for joint 4.

\paragraph{Learning Rate.} First, we analyzed the impact of the learning rate. \Cref{fig:ablation_lr} shows the training progress for different learning rates while keeping the batch size fixed at 128 and the weight of the sparsity loss constant ($\lambda=3$ in \Cref{eq:loss_function}). We observe that smaller learning rates, such as $5e-5$ and $8e-5$, lead to better performance.

\begin{figure}[htbp]
  \centering
    \includegraphics[width=.8\textwidth]{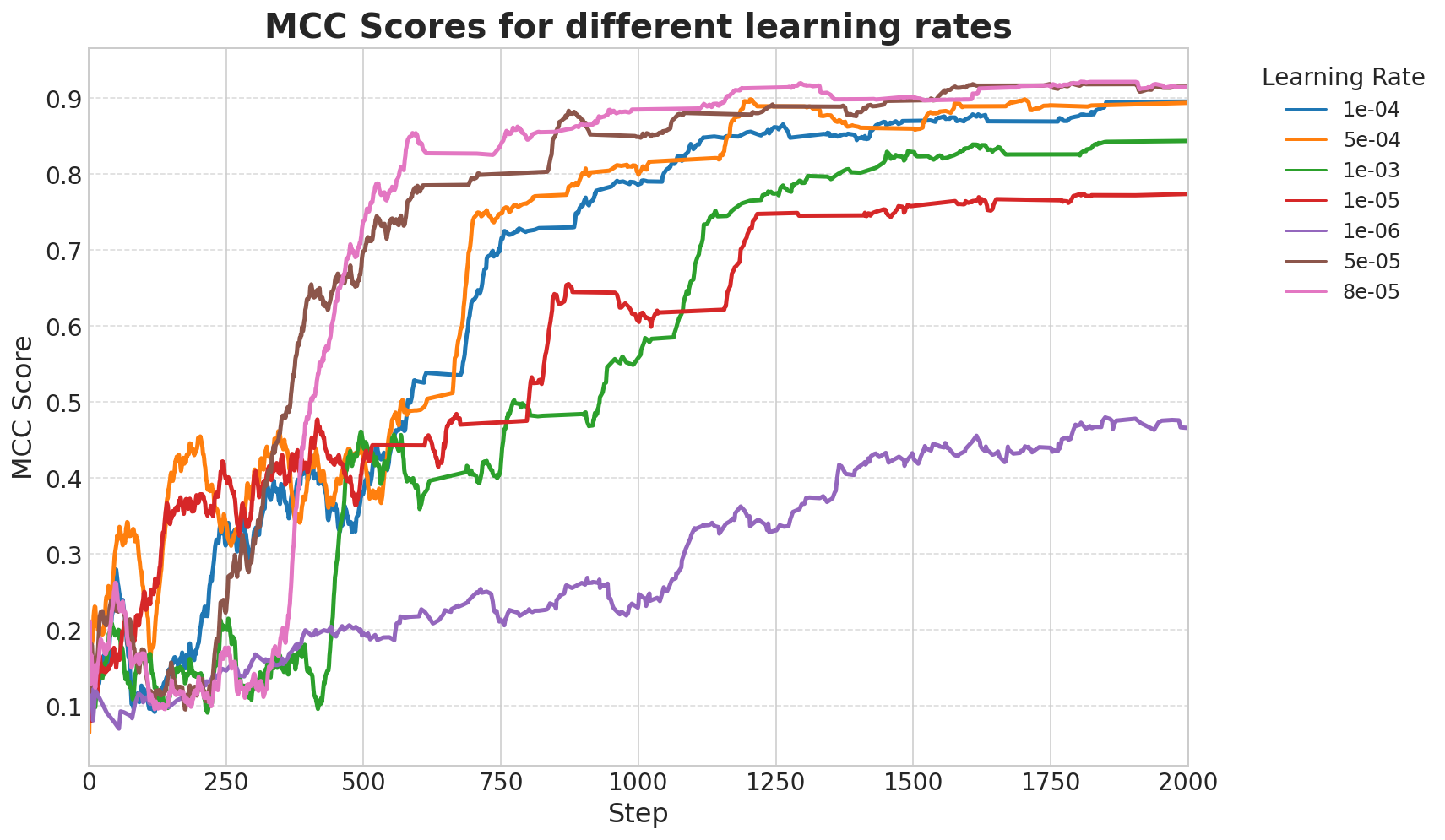} 
  \caption{Evolution of MCC scores during training for AE2 using a 2-camera setup on the independent dataset. The plot compares performance across various learning rates. Training is performed with a batch size of 128, and sparsity loss weight of $\lambda=3$.}
  \label{fig:ablation_lr}
\end{figure}

\paragraph{Batch Size.} Next, we performed an ablation study on the batch size (\Cref{fig:ablation_bsz}), using the optimal learning rate $5e-5$ found previously. We observed that models with larger batch sizes converged faster; however, their final asymptotic performances were nearly identical, irrespective of the batch size.

\begin{figure}[htbp]
  \centering
    \includegraphics[width=.8\textwidth]{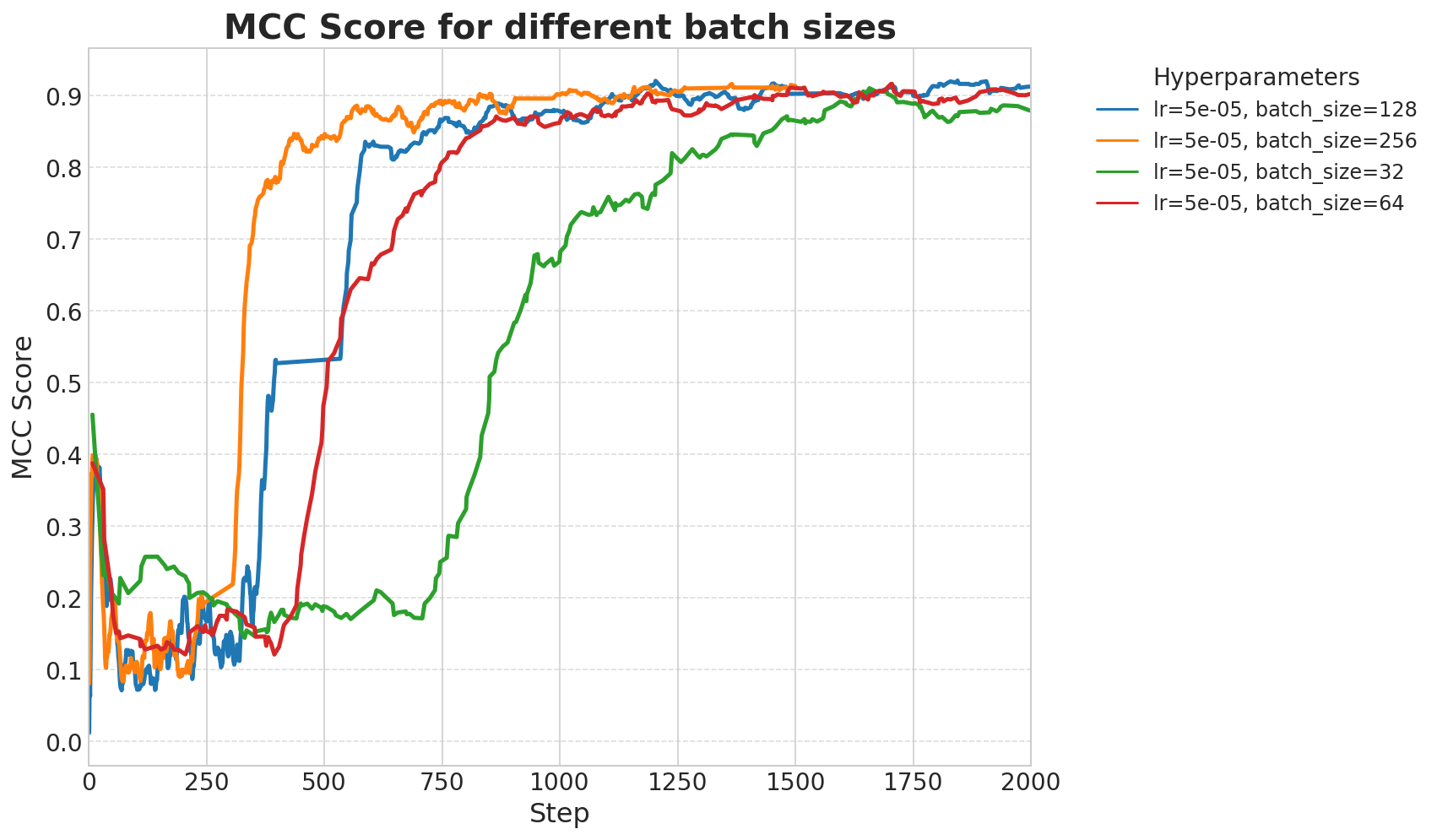}  
  \caption{Evolution of MCC scores during training for AE2 using a 2-camera setup on the independent dataset. The plot compares performance across various batch sizes. Training is performed with a learning rate of $5e-5$ and sparsity loss weight of $\lambda=3$.}
  \label{fig:ablation_bsz}
\end{figure}

\paragraph{Loss Reweighting.} We also looked at how to balance the two training objectives: reconstructing the image and the sparsity loss. \Cref{fig:ablation_wts} compares the MCC scores when we vary the weights given to the reconstruction loss versus the sparsity loss. For these experiments, we fixed the learning rate at $5e{-5}$ and batch size at 128. 
The figure shows that giving a significant weight to the score-based loss (i.e., sparsity loss) is crucial for disentanglement. This aligns with our theory that a reconstruction loss alone cannot perform disentanglement (e.g., MCC of less than 0.15 in the figure), but the score-based loss is the main driver of the disentanglement (e.g., over 0.9 MCC in the figure). 

\begin{figure}[htbp]
  \centering
    \includegraphics[width=.8\textwidth]{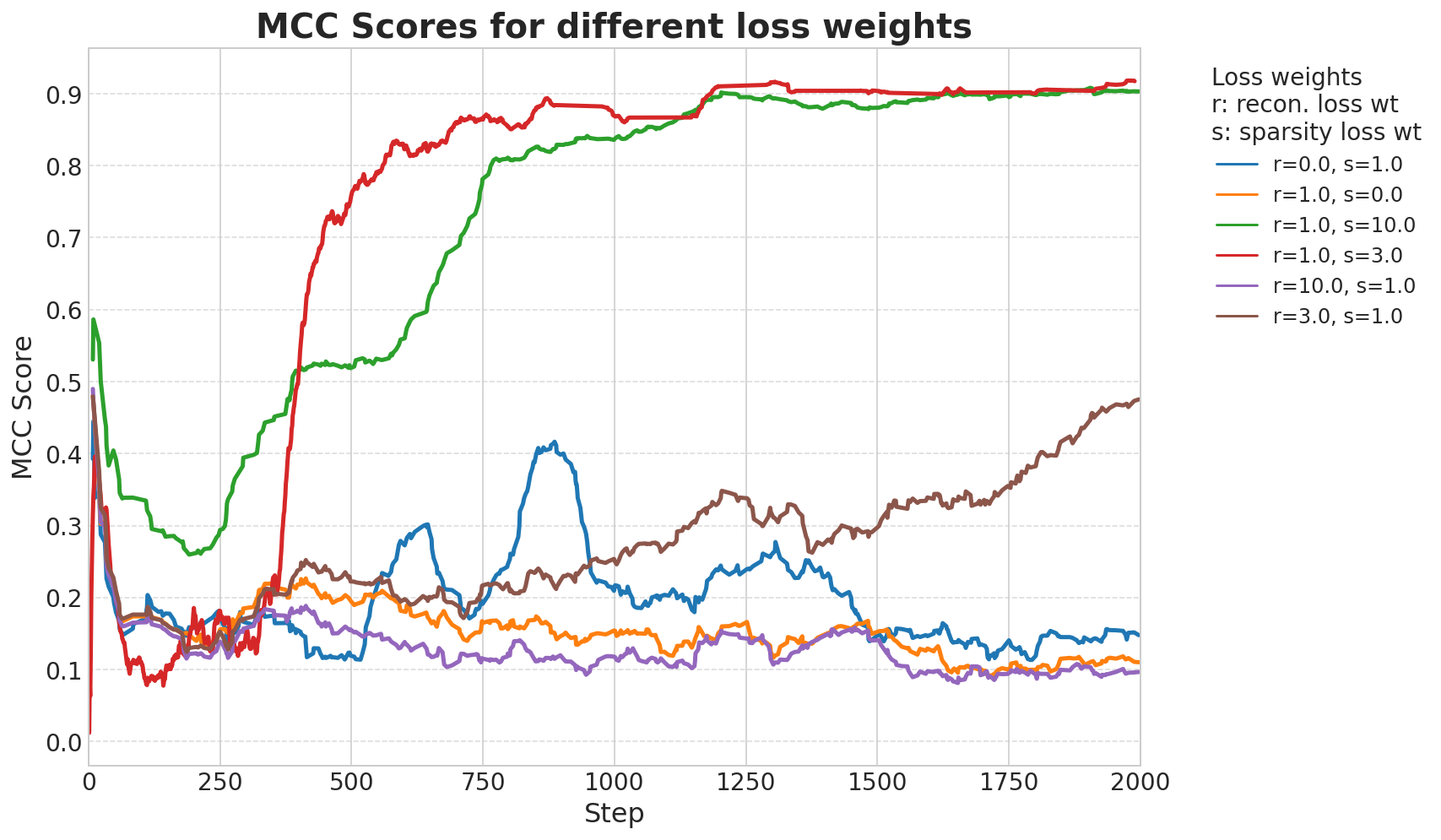} 
  \caption{Evolution of MCC scores during training for AE2 using a 2-camera setup on the independent dataset. The plot compares performance across various combinations of reconstruction loss weights and sparsity loss weights. Training is performed with a batch size of 128 and a fixed learning rate of $5e-5$.}
  \label{fig:ablation_wts}
\end{figure}
\paragraph{Architectural Changes - Residual Connection.} The inclusion of intra-module residual connections—applied independently within the encoder and decoder architectures—is a key factor for optimal performance. This architectural choice as seen in the \Cref{fig:ablation_residual} improves the reconstruction loss and hence the final MCC score.
\begin{figure}[htbp]
  \centering
    \includegraphics[width=.95\textwidth]{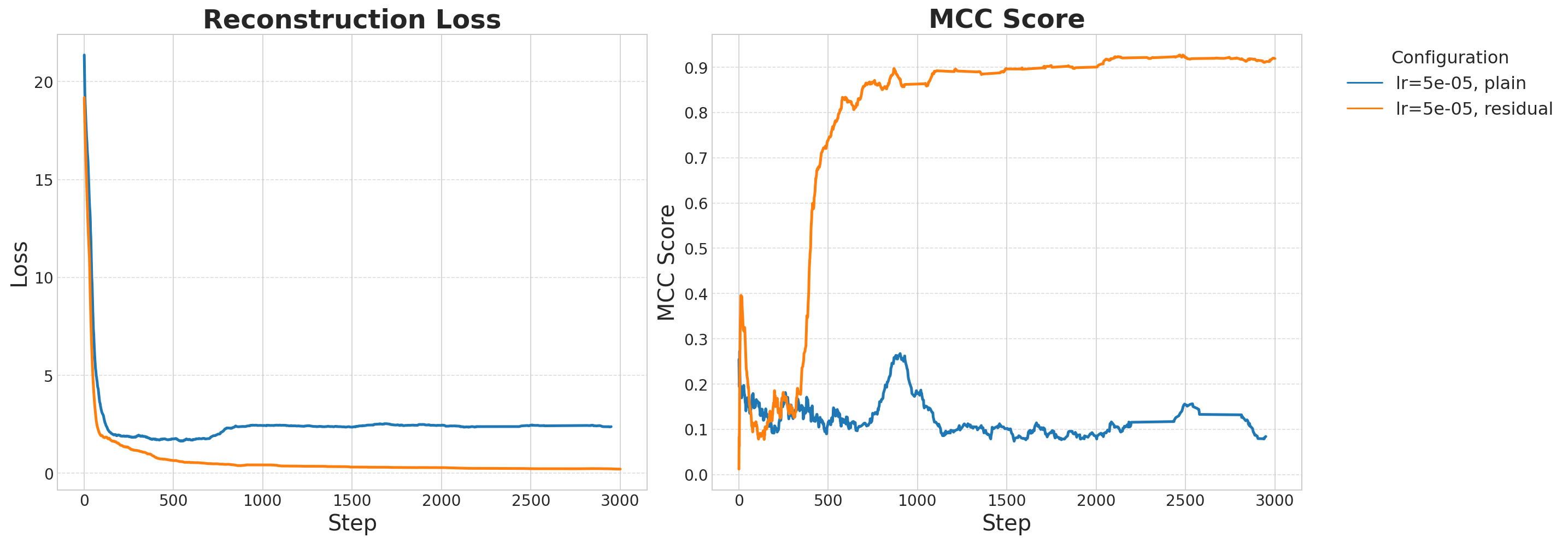}  
  \caption{Ablation showing the advantage of having residual connections for better reconstruction loss and MCC score}
  \label{fig:ablation_residual}
\end{figure}

\paragraph{Calibration Samples.} As demonstrated in \Cref{table:calibration}, the model's Mean Squared Error (MSE) remains stable even when the number of calibration samples is reduced from 1000 to 100.
\begin{table}[H]
\centering
\caption{MCC and MSE of our method across different Calibration settings. MSE is reported in radians squared.}
\label{table:calibration}
\begin{threeparttable}
\footnotesize 
\setlength{\tabcolsep}{2.3pt} 

\begin{tabular}{@{} 
    >{\raggedright}m{2.2cm} 
    S[table-format=1.3] S[table-format=1.3] @{\hskip 1.1em} 
    S[table-format=1.3] S[table-format=1.3] @{\hskip 1.1em} 
    S[table-format=1.3] S[table-format=1.3] @{\hskip 1.1em} 
    S[table-format=1.3] S[table-format=1.3] @{\hskip 1.1em} 
    S[table-format=1.3] S[table-format=1.3] @{\hskip 1.1em} 
    S[table-format=1.3] S[table-format=1.3] @{}}
\toprule
& \multicolumn{2}{c}{\textbf{Joint 1}} & \multicolumn{2}{c}{\textbf{Joint 2}} & \multicolumn{2}{c}{\textbf{Joint 3}} & \multicolumn{2}{c}{\textbf{Joint 4}} & \multicolumn{2}{c}{\textbf{Joint 5}} & \multicolumn{2}{c}{\textbf{Joint 6}} \\
\midrule
\textbf{Calibration Samples} & {\textbf{MCC}} & {\textbf{MSE}} & {\textbf{MCC}} & {\textbf{MSE}} & {\textbf{MCC}} & {\textbf{MSE}} & {\textbf{MCC}} & {\textbf{MSE}} & {\textbf{MCC}} & {\textbf{MSE}} & {\textbf{MCC}} & {\textbf{MSE}} \\
\midrule
1000 samples & 0.874 & 0.083 & 0.979 & 0.015 & 0.634 & 0.217 & 0.950 & 0.035 & 0.679 & 0.198 & 0.884 & 0.080 \\
\midrule
500 samples & 0.863 & 0.094 & 0.979 & 0.015 & 0.618 & 0.234 & 0.945 & 0.037 & 0.677 & 0.191 & 0.895 & 0.075 \\
\midrule
100 samples & 0.881 & 0.079 & 0.975 & 0.019 & 0.612 & 0.227 & 0.942 & 0.039 & 0.661 & 0.199 & 0.879 & 0.084 \\
\bottomrule
\end{tabular}
\end{threeparttable}
\end{table}

\paragraph{Lighting Condition.} \Cref{table:dark_light} evaluates the model's performance under distinct lighting scenarios, simulated by scaling pixel brightness to 0.8x (dark) and 1.25x (light) of the original. Performance degradation is most pronounced in the dark, particularly for Joint 5 and Joint 6, which exhibit a significant drop in MCC. In contrast, the remaining joints maintain strong performance with only minimal degradation under both conditions.
\begin{table}[H]
\centering
\caption{MCC and MSE of our method across different lighting condition. MSE is reported in radians squared.}
\label{table:dark_light}
\begin{threeparttable}
\footnotesize 
\setlength{\tabcolsep}{2.3pt} 
\begin{tabular}{@{} 
    >{\raggedright}m{2.2cm} 
    S[table-format=1.3] S[table-format=1.3] @{\hskip 1.1em} 
    S[table-format=1.3] S[table-format=1.3] @{\hskip 1.1em} 
    S[table-format=1.3] S[table-format=1.3] @{\hskip 1.1em} 
    S[table-format=1.3] S[table-format=1.3] @{\hskip 1.1em} 
    S[table-format=1.3] S[table-format=1.3] @{\hskip 1.1em} 
    S[table-format=1.3] S[table-format=1.3] @{}}
\toprule
& \multicolumn{2}{c}{\textbf{Joint 1}} & \multicolumn{2}{c}{\textbf{Joint 2}} & \multicolumn{2}{c}{\textbf{Joint 3}} & \multicolumn{2}{c}{\textbf{Joint 4}} & \multicolumn{2}{c}{\textbf{Joint 5}} & \multicolumn{2}{c}{\textbf{Joint 6}} \\
\midrule
\textbf{Lighting Condition} & {\textbf{MCC}} & {\textbf{MSE}} & {\textbf{MCC}} & {\textbf{MSE}} & {\textbf{MCC}} & {\textbf{MSE}} & {\textbf{MCC}} & {\textbf{MSE}} & {\textbf{MCC}} & {\textbf{MSE}} & {\textbf{MCC}} & {\textbf{MSE}} \\
\midrule
original & 0.874 & 0.083 & 0.979 & 0.015 & 0.634 & 0.217 & 0.950 & 0.035 & 0.679 & 0.198 & 0.884 & 0.080 \\
\midrule
bright & 0.874 & 0.072 & 0.978 & 0.015 & 0.640 & 0.207 & 0.950 & 0.033 & 0.677 & 0.180 & 0.874 & 0.094 \\
\midrule
dark & 0.811 & 0.107 & 0.940 & 0.046 & 0.520 & 0.211 & 0.860 & 0.093 & 0.199 & 0.303 & 0.526 & 0.265 \\
\bottomrule
\end{tabular}
\end{threeparttable}
\end{table}

\subsection{Qualitative Examples}

To provide a qualitative understanding of our dataset, \Cref{fig:joint4_interventions_group,fig:joint3_interventions_group} visualize the hard intervention images that form the basis of our training data. \Cref{fig:joint4_interventions_group} shows intervention images for Joint 4 from the single-camera dataset. \Cref{fig:joint3_interventions_group} shows intervention images for Joint 3 from the two-camera dataset. Since Joint 3 moves out of the plane, these images highlight the necessity of our two-camera setup.

\begin{figure}[ht] 
    \centering
    
    \begin{minipage}[b]{0.29\textwidth}
        \centering
        
        \begin{subfigure}{\linewidth}
            \centering
            \includegraphics[width=\textwidth]{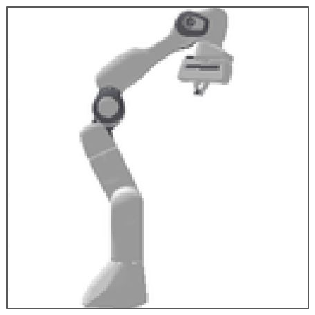}
            \caption{Intervention 1 (label 0)}
            \label{fig:joint4_hard1}
        \end{subfigure}
        
        
        \begin{subfigure}{\linewidth}
            \centering
            \includegraphics[width=\textwidth]{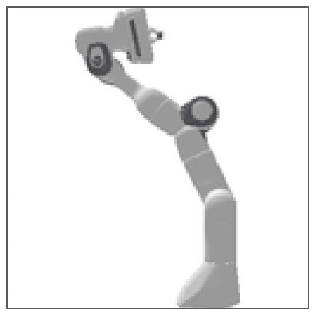}
            \caption{Intervention 2 (label 1)}
            \label{fig:joint4_hard2}
        \end{subfigure}
        
        \caption{Two different hard interventions on Joint 4.}
        \label{fig:joint4_interventions_group}
    \end{minipage}%
    \hspace{0.1\textwidth}%
    \begin{minipage}[b]{0.59\textwidth}
        \centering
        
        \begin{subfigure}[b]{0.49\linewidth}
            \centering
            \includegraphics[width=\textwidth]{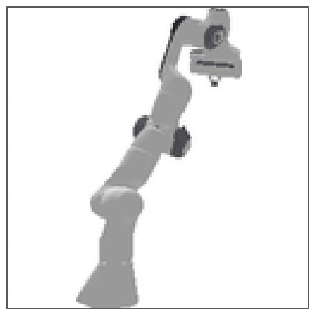}
            \caption{Intervention 1 (45°)}
            \label{fig:j3_int1_cam45}
        \end{subfigure}%
        \hfill
        \begin{subfigure}[b]{0.49\linewidth}
            \centering
            \includegraphics[width=\textwidth]{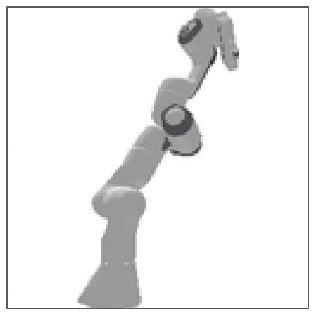}
            \caption{Intervention 2 (45°)}
            \label{fig:j3_int2_cam45}
        \end{subfigure}

        
        \begin{subfigure}[b]{0.49\linewidth}
            \centering
            \includegraphics[width=\textwidth]{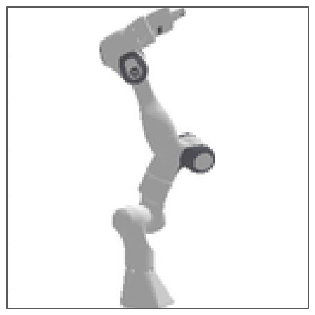}
            \caption{Intervention 1 (135°)}
            \label{fig:j3_int1_cam135}
        \end{subfigure}%
        \hfill
        \begin{subfigure}[b]{0.49\linewidth}
            \centering
            \includegraphics[width=\textwidth]{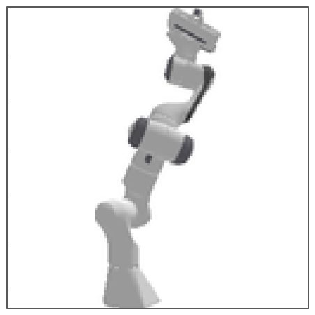}
            \caption{Intervention 2 (135°)}
            \label{fig:j3_int2_cam135}
        \end{subfigure}
        
        \caption{Two different hard interventions on Joint 3, each shown from two camera angles.}
        \label{fig:joint3_interventions_group}
    \end{minipage}

\end{figure}

\Cref{fig:single_camera_reconstruction,fig:recon_comparison_2_cameraAngles} provide a qualitative analysis of our pipeline's reconstruction performance. Specifically, \Cref{fig:autoencoder_stages:ae1_recon,fig:autoencoder_stages:ae2_recon} compare an original image (\Cref{fig:autoencoder_stages:obs_image}) to its reconstructions from AE1 and AE2 respectively for the single-camera setup, while \Cref{fig:recon_comparison_2_cameraAngles} shows the equivalent comparison for the two-camera setup. We observe a slight degradation in the reconstruction quality of AE1 when trained on the two-camera dataset compared to the single-camera setup. As the final autoencoder, AE2, is trained on the feature maps from AE1, this reduction in quality of AE1 consequently affects the quality of the final AE2 reconstructions as well. We hypothesize that this performance difference is attributable to the increased data complexity of the multi-view dataset. The inclusion of multiple viewpoints introduces greater visual variance, presenting a more challenging reconstruction task for the autoencoder compared to the more constrained single-view data.

\begin{figure}[ht] 
    \centering
    
    \begin{subfigure}[b]{0.32\textwidth}
        \centering
        \includegraphics[width=\textwidth]{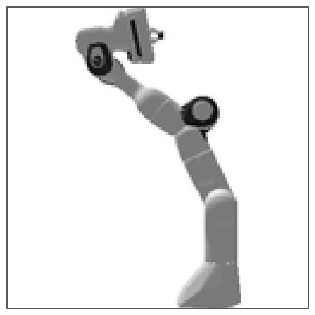}
        \caption{Observational Image}
        \label{fig:autoencoder_stages:obs_image}
    \end{subfigure}
    \hfill 
    \begin{subfigure}[b]{0.32\textwidth}
        \centering
        \includegraphics[width=\textwidth]{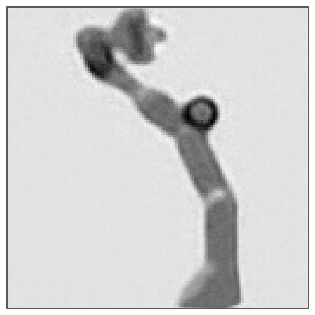}
        \caption{AE1 Reconstruction}
        \label{fig:autoencoder_stages:ae1_recon}
    \end{subfigure}
    \hfill 
    \begin{subfigure}[b]{0.32\textwidth}
        \centering
        \includegraphics[width=\textwidth]{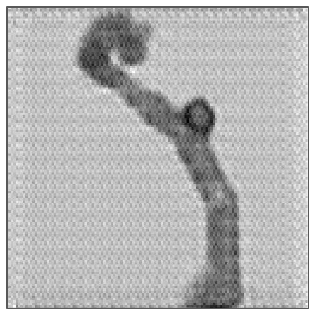}
        \caption{AE2 Reconstruction}
        \label{fig:autoencoder_stages:ae2_recon}
    \end{subfigure} 
    
    \caption{Visual comparison of the reconstruction quality at each stage of our pipeline for the single camera setup. (a) The original input image. (b) The reconstruction from the first autoencoder (AE1). (c) The final reconstruction from the second autoencoder (AE2)}
    \label{fig:single_camera_reconstruction}
\end{figure}
\begin{figure}[ht] 
    \centering
    
    \begin{subfigure}[b]{0.28\textwidth}
        \centering
        \includegraphics[width=\textwidth]{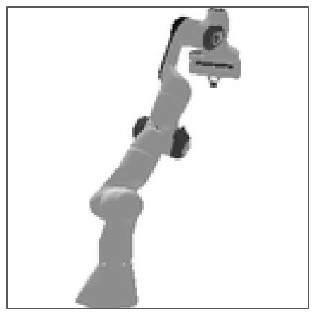}
        \caption{Observational Image}
        \label{fig:recon_comparison_2_cameraAngles:obs_image}
    \end{subfigure}
    \hfill 
    \begin{subfigure}[b]{0.28\textwidth}
        \centering
        \includegraphics[width=\textwidth]{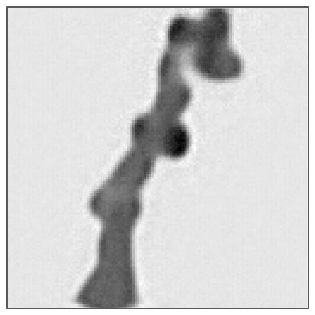}
        \caption{AE1 Reconstruction}
        \label{fig:recon_comparison_2_cameraAngles:ae1_recon}
    \end{subfigure}
    \hfill 
    \begin{subfigure}[b]{0.28\textwidth}
        \centering
        \includegraphics[width=\textwidth]{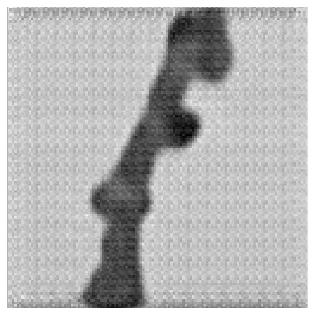}
        \caption{AE2 Reconstruction}
        \label{fig:recon_comparison_2_cameraAngles:ae2_recon}
    \end{subfigure} 
    
    \caption{Visual comparison of the reconstruction quality at each stage of our pipeline using the two camera independent model. (a) The original input image. (b) The reconstruction from the first autoencoder (AE1). (c) The final reconstruction from the second autoencoder (AE2) }
    \label{fig:recon_comparison_2_cameraAngles}
\end{figure}

\section{Architecture Details}\label{app:architecture}

The architectural details of the first autoencoder (AE1), the log-density ratio network (LDR) and the second autoencoder (AE2) are presented in \Cref{tab:AE1_arch,tab:ldr_arch,tab:AE2_arch}, respectively.

\begin{table}[htbp] 
\centering
\small
\caption{Autoencoder 1 architecture ResNet-style with GroupNorm}
\label{tab:AE1_arch}
\begin{tabular}{@{}ll@{}}
\toprule
\textbf{Component} & \textbf{Layer-wise Details} \\
\midrule
\textbf{Block Def.} & \textbf{ResBlockGN(f)}: \\
        & \quad GroupNorm $\to$ ReLU $\to$ Conv(features=f, ks=3, pad='SAME') \\
        & \quad $\to$ GroupNorm $\to$ ReLU $\to$ Conv(features=f, ks=3, pad='SAME') \\
        & \quad $\to$ Add residual input \\
        & \textit{(Note: `ks`=kernel size, `s`=stride, `pad`='SAME')} \\
\midrule
\textbf{Encoder} & \textbf{Input: Image $X \in \mathbb{R}^{128 \times 128 \times 1}$} \\
        & Conv(features=64, ks=3, pad='SAME') \\
        & ResBlockGN(64) x 2 \\
        & Conv(features=64, ks=3, s=2, pad='SAME'), ReLU \quad \textit{// Downsample 128 $\to$ 64} \\
        & Conv(features=128, ks=3, pad='SAME') \\
        & ResBlockGN(128) x 2 \\
        & Conv(features=128, ks=3, s=2, pad='SAME'), ReLU \quad \textit{// Downsample 64 $\to$ 32} \\
        & Conv(features=256, ks=3, pad='SAME') \\
        & ResBlockGN(256) x 2 \\
        & Conv(features=256, ks=3, s=2, pad='SAME'), ReLU \quad \textit{// Downsample 32 $\to$ 16} \\
        & Conv(features=512, ks=3, pad='SAME') \\
        & ResBlockGN(512) x 2 \\
        & Conv(features=1, ks=3, s=2, pad='SAME'), ReLU    \quad \textit{// Downsample 16 $\to$ 8} \\
        & \textbf{Output: Feature map $Y \in \mathbb{R}^{8 \times 8 \times 1}$} \\
\midrule
\textbf{Decoder} & \textbf{Input: Feature map $Y \in \mathbb{R}^{8 \times 8 \times 1}$} \\
        & Conv(features=512, ks=3, pad='SAME') \\
        & ResBlockGN(512) x 2 \\
        & ConvTranspose(features=512, ks=4, s=2, pad='SAME'), ReLU \quad \textit{// Upsample 8 $\to$ 16} \\
        & Conv(features=256, ks=3, pad='SAME') \\
        & ResBlockGN(256) x 2 \\
        & ConvTranspose(features=256, ks=4, s=2, pad='SAME'), ReLU \quad \textit{// Upsample 16 $\to$ 32} \\
        & Conv(features=128, ks=3, pad='SAME') \\
        & ResBlockGN(128) x 2 \\
        & ConvTranspose(features=128, ks=4, s=2, pad='SAME'), ReLU \quad \textit{// Upsample 32 $\to$ 64} \\
        & Conv(features=64, ks=3, pad='SAME') \\
        & ResBlockGN(64) x 2 \\
        & ConvTranspose(features=64, ks=4, s=2, pad='SAME'), ReLU  \quad \textit{// Upsample 64 $\to$ 128} \\
        & \textbf{Conv(features=1, ks=3, pad='SAME'), ReLU} \quad \textit{// Final convolution to 1 channel} \\
        & Reshape to (batch, $128 \times 128 \times 1$) \\
        & \textbf{Output: Reconstructed Image $\hat{X} \in \mathbb{R}^{128 \times 128 \times 1}$} \\
\bottomrule
\end{tabular}
\end{table}

\begin{table}[htbp] 
\centering
\small
\caption{LDR Network Architectures for Single- and Two-Camera Setups.}
\label{tab:ldr_arch}
\begin{tabular}{@{}ll@{}}
\toprule
\textbf{Component} & \textbf{Layer-wise Details} \\
\midrule
\textbf{Input Processing} & The input shape depends on the camera setup: \\
        & \quad \textbf{Single-Camera:} Input $y \in \mathbb{R}^{8 \times 8 \times 1}$ is used directly. \\
        & \quad \textbf{Two-Camera:} Two feature maps $y$ are concatenated to form an input $\in \mathbb{R}^{8 \times 8 \times 2}$. \\
\midrule
\textbf{Core Architecture} & The following layers are applied to the processed input: \\
        & Conv(features=32, ks=3), ReLU \quad \textit{// Spatial dim: 8x8 $\to$ 6x6} \\
        & Conv(features=64, ks=3), ReLU \quad \textit{// Spatial dim: 6x6 $\to$ 4x4} \\
        & Conv(features=128, ks=3), ReLU \quad \textit{// Spatial dim: 4x4 $\to$ 2x2} \\
        & Flatten \\
        & Dense(features=128), ReLU \\
        & Dense(features=1) \\
        & \textbf{Output: Logit $\in \mathbb{R}^{1}$} \\
\midrule
\textbf{Training} & Minimize binary cross-entropy with logits loss on the output. \\
\bottomrule
\end{tabular}
\end{table}

\begin{table}[htbp] 
\centering
\small
\caption{Autoencoder 2 (AE2) Architecture.}
\label{tab:AE2_arch}
\begin{tabular}{@{}ll@{}}
\toprule
\textbf{Component} & \textbf{Layer-wise Details} \\
\midrule
\textbf{Block Def.} & \textbf{ResBlockGN(f)}: \\
        & \quad GroupNorm $\to$ ReLU $\to$ Conv(features=f, ks=3, pad='SAME') \\
        & \quad $\to$ GroupNorm $\to$ ReLU $\to$ Conv(features=f, ks=3, pad='SAME') \\
        & \quad $\to$ Add residual input \\
        & \textit{(Note: `ks`=kernel size, `s`=stride, `pad`='SAME')} \\
\midrule
\textbf{Encoder} & \textbf{Input:} $y \in \mathbb{R}^{8 \times 8 \times C_{in}}$, where $C_{in}$ is 1 (single-cam) or 2 (two-cam). \\
        & Conv(features=64, ks=3, pad='SAME') \\
        & ResBlockGN(64) x 2 \\
        & Conv(features=64, ks=3, s=2, pad='SAME'), ReLU \quad \textit{// Downsample 8 $\to$ 4} \\
        & Conv(features=128, ks=3, pad='SAME') \\
        & ResBlockGN(128) x 2 \\
        & Conv(features=128, ks=3, s=2, pad='SAME'), ReLU \quad \textit{// Downsample 4 $\to$ 2} \\
        & Conv(features=256, ks=3, pad='SAME') \\
        & ResBlockGN(256) x 2 \\
        & Conv(features=256, ks=3, s=2, pad='SAME'), ReLU \quad \textit{// Downsample 2 $\to$ 1} \\
        & Flatten to (batch, 256) \\
        & Dense(features=$d_b$). \\
        & \textbf{Output: Latent $\hatz \in \mathbb{R}^{d_b}$} \\
\midrule
\textbf{Decoder} & \textbf{Input: Latent $\hatz \in \mathbb{R}^{d_b}$} \\
        & Dense(features=256), ReLU \\
        & Reshape to (batch, $1 \times 1 \times 256$) \\
        & Conv(features=512, ks=3, pad='SAME') \\
        & ResBlockGN(512) x 2 \\
        & ConvTranspose(features=512, ks=4, s=2, pad='SAME'), ReLU \quad \textit{// Upsample 1 $\to$ 2} \\
        & Conv(features=256, ks=3, pad='SAME') \\
        & ResBlockGN(256) x 2 \\
        & ConvTranspose(features=256, ks=4, s=2, pad='SAME'), ReLU \quad \textit{// Upsample 2 $\to$ 4} \\
        & Conv(features=128, ks=3, pad='SAME') \\
        & ResBlockGN(128) x 2 \\
        & ConvTranspose(features=128, ks=4, s=2, pad='SAME'), ReLU \quad \textit// Upsample 4 $\to$ 8 \\
        & Conv(features=$C_{in}$, ks=3, pad='SAME'), ReLU \\
        & Reshape to (batch, $8 \times 8 \times C_{in}$) \\
        & \textbf{Output: Reconstructed $\hat y \in \mathbb{R}^{8 \times 8 \times C_{in}}$} \\
\bottomrule
\end{tabular}
\end{table}

\end{document}